%% file: main.tex
\input{macro}
\documentclass{article}

 \usepackage[preprint]{neurips_2026}

\usepackage{microtype}
\usepackage{graphicx}
\usepackage{subcaption}
\usepackage{booktabs} 
\usepackage[utf8]{inputenc} 
\usepackage[T1]{fontenc}    
\usepackage{hyperref}       
\usepackage{url}            
\usepackage{booktabs}       
\usepackage{amsfonts}       
\usepackage{nicefrac}       
\usepackage{multirow}
\usepackage{microtype}      
\usepackage{xcolor}         
\usepackage{amsmath}
\usepackage{amssymb}
\usepackage{mathtools}
\usepackage{amsthm}
\usepackage{algorithm} 
\usepackage{algpseudocode}
\usepackage{float}
\usepackage{booktabs,graphicx,caption}
\usepackage{makecell} 
\usepackage{subcaption}
\usepackage{placeins}
\usepackage[capitalize,noabbrev]{cleveref}
\usepackage{dblfloatfix}
\usepackage{wrapfig}
\theoremstyle{plain}
\newtheorem{theorem}{Theorem}[section]

\newtheorem{lemma}[theorem]{Lemma}

\theoremstyle{definition}

\newtheorem{assumption}[theorem]{Assumption}
\theoremstyle{remark}
\newtheorem{remark}[theorem]{Remark}

\usepackage[textsize=tiny]{todonotes}
\title{Generative OOD-regularized Model-based Policy Optimization}

\author{
\begin{tabular}{cccc}
Aysin Tumay$^{1}$ &
Jiahe Huang$^{1}$ &
Elise Jortberg$^{2}$ &
Rose Yu$^{1}$
\end{tabular}
\vspace{0.5em}\\
$^{1}$University of California, San Diego \qquad
$^{2}$Abiomed
}

\begin{document}

\maketitle

\begin{abstract}


We study sequential decision-making with offline reinforcement learning (RL). Traditional offline RL policies may result in out-of-distribution (OOD) actions when training relies only on sparse offline representations. To ensure safe offline policies in a sparse state-action space, we explore how density estimation models can be integrated into model-based RL methods to avoid the OOD regions. Generative models are capable of explicitly modeling the density in sparse state-action spaces. Building on this, we introduce \textbf{G}enerative \textbf{O}OD-\textbf{r}egularized \textbf{M}odel-based \textbf{P}olicy \textbf{O}ptimization (GORMPO), a density-regularized offline RL algorithm that uses generative density modeling to restrict policy updates to high-density areas of the dataset. Furthermore, we examine whether \textit{better OOD detection corresponds to better model-based offline policies}. We compare (1) the OOD detection capabilities of various density estimators and (2) their performance within the GORMPO framework on a real-world medical dataset and sparse offline RL datasets. We theoretically guarantee GORMPO's performance under mild assumptions. Empirically, GORMPO 
outperforms state-of-the-art baselines by 17\% on a real-world medical dataset and enhances the base model on the offline RL datasets. Our empirical findings show that better OOD detection generally results in improved policies in environments with stable dynamics, while conservative penalties with poor density estimation are favored when dynamics are uncertain.

\end{abstract}

\vspace{-1.5em}
\section{Introduction}

\input{sections/intro}
\vspace{-0.5em}
\section{Related Work}
\input{sections/related_work}


\section{Methodology}
\vspace{-0.5em}

\input{sections/methodology}
\vspace{-0.7em}
\section{Theoretical Results}
\vspace{-0.5em}
\input{sections/theory}

\vspace{-0.7em}
\section{Results}
\input{sections/experiments}

\vspace{-0.7em}
\section{Conclusion}
\vspace{-0.5em}
\input{sections/conclusion}

\newpage
\bibliography{example_paper}
\bibliographystyle{plain}

\newpage
\appendix
\onecolumn

\input{sections/appendix}

\newpage
\input{checklist.tex}

\end{document}

%% file: sections/intro.tex

Offline reinforcement learning (RL) has shown great promise in safety-critical control tasks where interaction with an online environment is impossible. Applications of this problem have been demonstrated in sepsis \cite{komorowski2018artificial, raghu2017continuous} and cancer prediction \cite{eckardt2021association, tseng2017machine}, autonomous driving \cite{bansal2019chauffeurnet}, and Unmanned Aerial Vehicle control \cite{brunke2022safe}. 
However, distribution shift constitutes a significant problem for offline RL because the policy reaches out-of-distribution (OOD) regions, leading to unsafe actions.
Offline RL also depends strongly on dataset coverage where limited state–action support leads to extrapolation error in Bellman backups \cite{kumar2019bear}. This issue is common in clinical settings because most patients receive adequate support and remain hemodynamically stable, while cases of insufficient treatment are rare and underrepresented as seen in Figure \ref{fig:sparsity}.


\begin{figure}[t]
    \centering
    
    \begin{subfigure}[b]{0.37\linewidth}
        \centering
        \includegraphics[width=\linewidth]{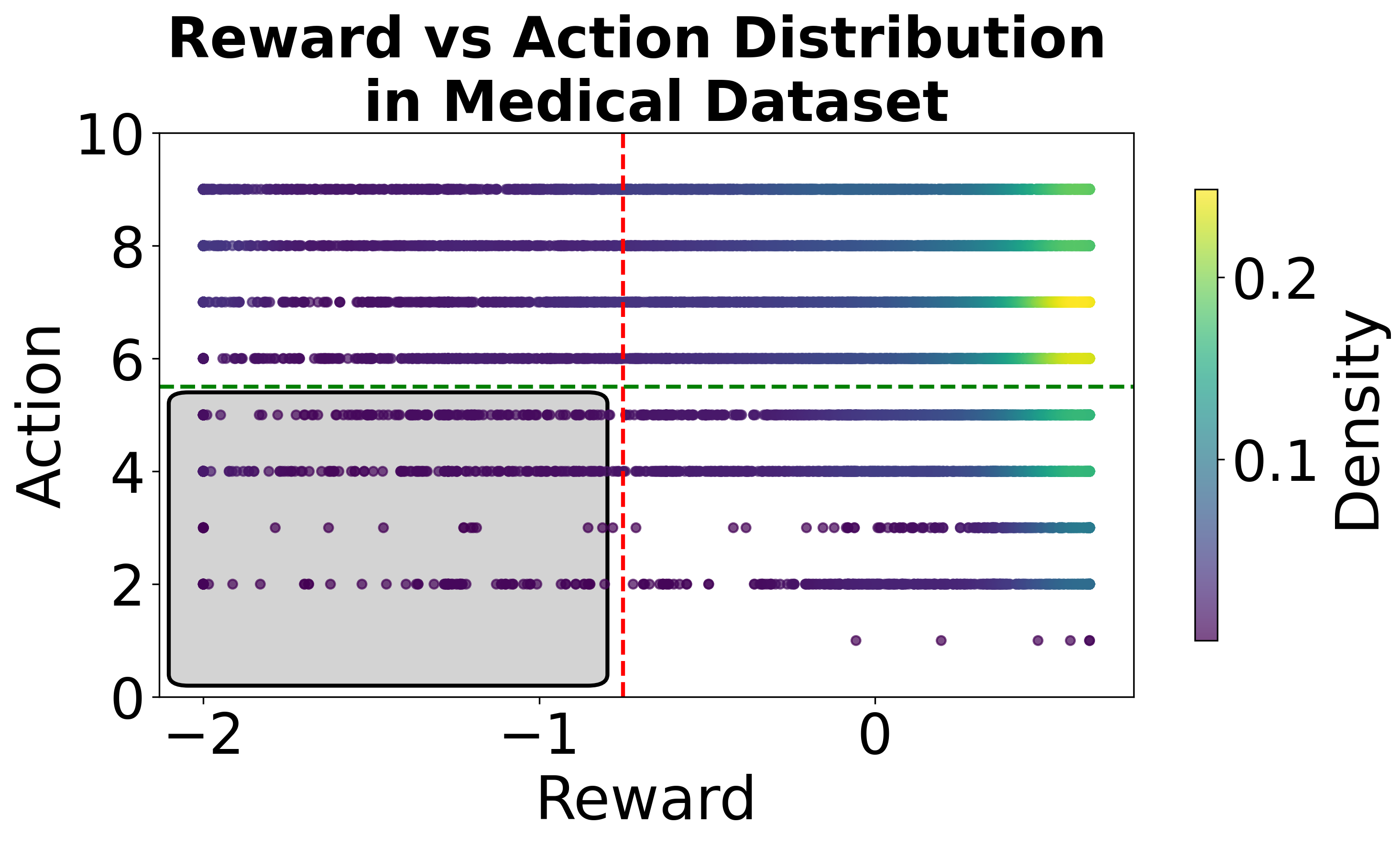}
        \label{fig:sparsity_left}
    \end{subfigure}
    \hspace{-0.4em}
    \begin{subfigure}[b]{0.61\linewidth}
        \centering
        \includegraphics[width=\linewidth]{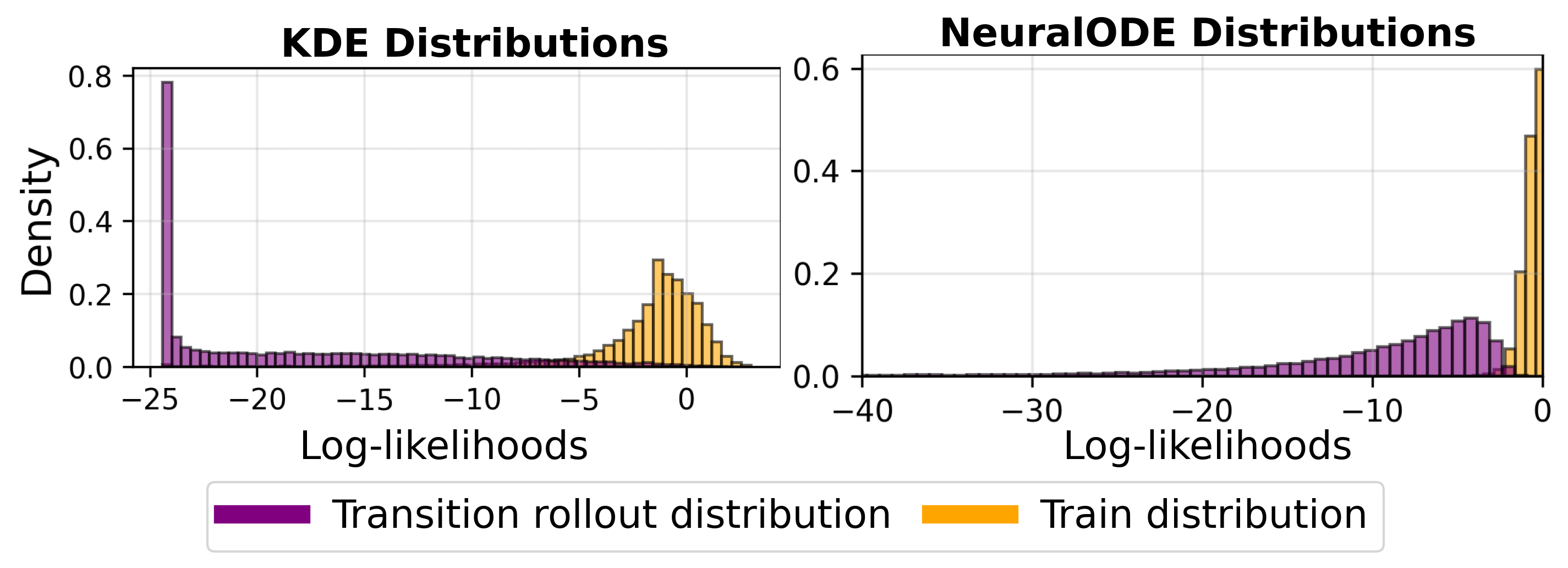}
        \label{fig:sparsity_right}
    \end{subfigure}

    \vspace{-2em}
    \caption{
    Left: reward--action space of our medical dataset with the sparse region in gray. 
    Right: the transition rollout distribution versus train distribution modeled by KDE and NeuralODE. 
    While transition rollouts exhibit low likelihoods indicating OOD behavior, KDE saturates in OOD regions with a large spike.
    }
    
    \label{fig:sparsity}
    \vspace{-1.7em}
\end{figure}

A core challenge in offline RL is mitigating distribution shift without being overly conservative. Prior model-free methods constrain the learned policy to the behavior distribution on the offline data only \cite{fujimoto2019bcq, kumar2020conservative}. 
However, these constraints potentially suppress valid actions where the Q-function generalizes well~\cite{an2021uncertainty}.
Other model-free methods \cite{wu2022spot, svr} address this by explicitly regularizing the policy with behavior support based on the evidence lower bound (ELBO), or with importance sampling \cite{svr}, which might fall short on highly sparse offline data.
Some model-based uncertainty-penalized methods \cite{yu2020mopo, pmlr-v202-sun23q}, built on top of model-based policy optimization (MBPO) \cite{janner2019mbpo}, address distribution shift by penalizing policy optimization with uncertainty. 
   However, augmenting the dataset with model-generated rollouts can quickly drift into OOD states when the dynamics model is trained on sparse data and fails to generalize reliably.
Other works explored MBPO with kernel density estimation (KDE) regularization in healthcare \cite{ogsrl, tumay2025guardian}. However, generative density estimation offers a more principled alternative for flexible OOD detection compared to KDE, especially in sparse and high-stakes tasks such as healthcare \cite{zhai, melychuk, Ruff_2021}. 

In this work, we propose \textbf{G}enerative \textbf{O}OD-\textbf{r}egularized \textbf{M}odel-based \textbf{P}olicy \textbf{o}ptimization (\textbf{GORMPO}), a density-penalized model-based offline RL algorithm. To avoid the uncovered areas in the state-action space during dynamics model rollouts, we discount the rolled-out rewards based on the next state-action density. Lower-density areas get penalized more to support in-distribution (ID) data augmentation during policy optimization. On top of this, since our pipeline is highly flexible, we calibrated five different density estimation models to answer if  \textit{better OOD detection lead to better offline policies} when used in GORMPO.

In summary, our primary contributions are:
\vspace{-0.5em}
\begin{itemize}
    \item We propose GORMPO, a plug-and-play framework for any model-based offline RL algorithm that explicitly penalizes OOD state-action rollouts with generative models. 
GORMPO outperforms state-of-the-art baselines by 17\% on our proprietary real-world medical dataset.
    \vspace{-1.5em}
    \item We provide the first comprehensive evaluation of 5 distinct families of density estimators with four generative models for the OOD detection task.
    \vspace{-0.3em}
    \item We demonstrate that 
    expressive density models lead to better policies in stable dynamics, whereas pessimistic regularization is required for uncertain dynamics.
\end{itemize}
\vspace{-0.5em}


%% file: sections/related_work.tex
\vspace{-0.5em}
\textbf{Constrained Offline RL.}
Addressing distribution shift is central to offline reinforcement learning. A prominent class of methods mitigates extrapolation error by constraining the learned policy to the support of the behavior policy. Approaches such as Batch-Constrained Q-learning (BCQ) \cite{fujimoto2019bcq}, Conservative Q-Learning (CQL) \cite{kumar2020conservative}, and Behavior Regularized Actor-Critic (BRAC) \cite{wu2019behavior} employ divergence metrics
to penalize deviations from the offline dataset. While effective at ensuring safety, distance-based constraints often induce excessive conservatism. By restricting learning to the neighborhood of observed data, they may suppress potentially optimal actions that remain plausible within the true underlying distribution \cite{an2021uncertainty, degrave2022magnetic}. 
To overcome these limitations, recent work has shifted to explicit regularization in offline RL.  
Model-free methods regularize the policy toward the behavior distribution \cite{wu2022spot}, reinforcing the patterns in the offline dataset during optimization. However, estimating the lower bound provides a coarse regularization.
With explicit density estimation in model-free RL, CPED \cite{zhang2023cped} integrates FlowGAN, highlighting the benefits of generative density models. Nevertheless,  relying on the data support with weak expert demonstrations can still yield suboptimal policies. In this case, MBPO \cite{janner2019mbpo} proves that learning a dynamics model helps in broader exploration of the state-action space. Some impose pessimism through uncertainty penalties \cite{pmlr-v202-sun23q, yu2020mopo}, quantifying uncertainty across an ensemble of dynamics models and penalizing value estimation and dynamics rollouts. LEQ \cite{park2024} further addresses value overestimation in model rollouts via lower expectile regression of $\lambda$-returns. 
SAMBO-RL \cite{sambo} introduces a shift-aware reward correction to mitigate model bias. However, uncertainty penalties, bias corrections, and value regularization can still be insufficient if rollouts drift into OOD regions when the dynamics model is trained on sparse data, requiring a density constraint.
Density regularization in model-based RL with kernel density estimators such as OGSRL \cite{ogsrl} and CORMPO \cite{tumay2025guardian} has demonstrated success in suppressing OOD policies. However, the kernel density estimator is not expressive in high-dimensional settings. For MBPO, density regularization with an expressive generative model for transition rollouts is essential in sparse state-action domains.

\textbf{Generative Density Estimators.}
Generative density estimation has advanced through three primary methodological paradigms. First, normalizing flows \cite{rezende2015variational} construct complex distributions from simple base measures via sequences of invertible transformations, facilitating exact likelihood evaluation. Second, diffusion and score-based generative models \cite{song2020denoising} model distributions through a forward noise-injection and learned reverse process. Recent iterations, such as improved DDPMs \cite{ho2020denoising} and EDM \cite{Karras2022edm}, have further optimized log-likelihood estimation and sample fidelity. To address the computational cost of high-dimensional data, Latent Diffusion Models \cite{rombach2022high} and autoregressive methods \cite{six2025nepa} shift the generative process into a compressed latent space. Third, high-capacity backbones such as ViT \cite{dosovitskiy2020image} and DiT \cite{peebles2023scalable} have demonstrated remarkable success in modeling complex manifolds. When combined with continuous-time frameworks like Neural ODEs \cite{chen2018neural}, they offer a powerful inductive bias for capturing the underlying dynamics of temporal trajectories.
Most recently, unified frameworks such as Flow Matching \cite{lipman2022flow} and one-step approaches like MeanFlow \cite{geng2025mean} have sought to reconcile flow and diffusion methods, significantly accelerating inference while maintaining expressive density estimation.
However, while individual classes of generative models have been applied in isolation, such as diffusion for planning \cite{janner2022diffuser, wang2023diffusion} or normalizing flows for policy constraints \cite{akimov2022let}, a systematic evaluation of modern density estimators specifically for OOD detection within model-based offline RL remains underexplored.





%% file: sections/methodology.tex
\label{methodology}


\begin{wrapfigure}{r}{0.60\linewidth}
\centering
\vspace{-2em}
    \includegraphics[
        width=\linewidth,
        clip,
        trim=5450 0 5570 0
    ]{figs/GORMPO_icon.pdf}

    \vspace{0.5em}

    \caption{
    \textbf{System diagram of GORMPO.} 
    We enhance MBPO (blue module) through generative OOD-regularization (red module). 
    We sample an action $a_t$ and use the pretrained dynamics model to predict the next state $\hat s_{t+1}$ and reward $\hat r_t$. 
    We then compute the likelihood of $(\hat s_{t+1}, a_t)$ under a pretrained generative density estimator and penalize $\hat r_t$ in low-density regimes, producing $\tilde r_t$. 
    Finally, we store $(s_t, a_t, \tilde{r}_t, \hat s_{t+1})$ in the generated data buffer.
    }

    \label{fig:system_diagram}
    \vspace{-2em}
\end{wrapfigure}

We propose GORMPO, a generative density estimator-based OOD regularization for model-based RL (see Figure \ref{fig:system_diagram} and the algorithm pseudocode in Appendix \ref{algo}). 
In this section, we formulate the problem and detail components of GORMPO.

\subsection{Problem Setting}

\paragraph{Markov Decision Process.} We formulate our setting as a Markov decision process (MDP), defined by the tuple $\mathcal{M} = (\mathcal{S}, \mathcal{A}, T, r, \mu_0, \gamma)$,
with state space $\mathcal{S}$, action space $\mathcal{A}$, transition dynamics $T: \mathcal{S}\times \mathcal{A} \rightarrow \Delta (\mathcal{S})$, reward function $r(s,a): \mathcal{S}\times \mathcal{A} \rightarrow \mathbb{R}$, initial state distribution $\mu_0$, and discount factor, $\gamma \in (0,1)$.  RL algorithms aim to find a policy $\pi : \mathcal{S} \rightarrow  \mathcal{A}$ that maximizes the expected cumulative discounted reward
$\mathbb{E}_{\pi, s_0\sim \mu_0} \left[ \sum_{t=0}^{\infty} \gamma^t r(s_t, a_t) \right]$ where $s_0$ is the initial state. The optimal policy is defined as,
\begin{equation}
    \pi^* =\arg\max_{\pi} \mathbb{E}_{\pi, s_0\sim \mu_0} \left[ \sum_{t=0}^{\infty} \gamma^t r(s_t, a_t) \right].
\label{eq:offline_rl}
\end{equation}
The \textit{Offline RL setting} is when the algorithm only has access to a dataset sampled from the environment $\mathcal{D}_{\text{env}} = \{(s, a, r, s')\}_{t=1}^{N}$ with collected by a behavior policy and cannot interact with the environment.
%
%

\vspace{-0.8em}
\paragraph{Density Estimator Formulation. }Our objective for density estimation is to probabilistically model the density of \texttt{next state + action} pairs.
Our generative model learns an approximation $p_\theta(x_{t})$ of the density where $x_t = (s_{t+1}, a_t)$ at time $t$ by maximizing the expected log‑likelihood on the training samples as 
$$\max_\theta \; \mathbb{E}_{x_t \sim \mathcal{D}_{\text{env}}} \big[ \log p_\theta(x_{t}) \big].$$
By doing so, the model becomes capable of estimating how likely a candidate pair of next state and action is under the learned dynamics distribution.

\subsection{Density-based guardian for Offline RL}
\vspace{-0.5em}

Our goal is to learn a policy $\pi$ that maximizes the expected return in the true MDP $\mathcal{M} = (\mathcal{S}, \mathcal{A}, T, r, \mu_0, \gamma)$.
We denote the value function under the true dynamics as $V^{\pi}_{\mathcal{M}}(s)$ and the expected return as \[\eta_{\mathcal{M}}(\pi) = \mathbb{E}_{s_0 \sim \mu_0}[V^{\pi}_{\mathcal{M}}(s_0)].\] 
Following the model-based approach, we jointly learn a transition dynamics model $\hat T_\phi(s'\mid s,a)$ and a reward model $\hat r_\psi(s,a)$ from $\mathcal{D}_{\text{env}}$.
Model rollouts are generated by sampling $\hat s_{t+1}\sim \hat T_\phi(\cdot\mid s_t,a_t)$ and $\hat r_t \sim \hat r_\psi(\cdot |s_t,a_t)$. We define the model MDP as $\hat{\mathcal{M}} = (\mathcal{S}, \mathcal{A}, \hat{T}_\phi, \hat{r}_\psi, \mu_0, \gamma)$. We drop the parameters $\phi$ and $\psi$ for the ease of notation.
Let 
\vspace{-0.5em}
\begin{equation}
    \rho^{\pi}_{\hat{T}}(s_t,a_t) = \pi(a_t|s_t)\sum_{t'=0}^{\infty} \gamma^{t'} P^{\pi}_{\hat{T}}(s_{t'})
\end{equation}
denote the discounted occupancy measure under policy $\pi$ and dynamics $\hat{T}$, where $P^{\pi}_{\hat{T},t}(s)$ is the probability of visiting state $s$ at time $t$. We use $\rho^\pi$ for $\rho^{\pi}_{\hat{T}}(s_t,a_t)$ in Section \ref{sec:theory} for ease of notation.


Our model-based offline RL architecture leverages data augmentation with the rollouts of a learned dynamics model, which exhibits varying degrees of accuracy across the state-action space.
This directly controls the support of the training data during policy optimization. While the optimal policy under perfect dynamics may venture beyond the behavioral distribution to achieve higher returns, model inaccuracies in these out-of-distribution (OOD) regions can lead to catastrophic failures in high-stakes medical tasks. 

To account for OOD rollouts during data augmentation, the density-based guardian (red module in Figure \ref{fig:system_diagram}) distinguishes estimated in-distribution (ID) and OOD dynamics rollouts. Estimating the probability of $(\hat{s}_{t+1}, a_t)$ through different families of generative models is non-trivial since 
(1) substantially varying likelihood scales across model families make the penalty level difficult to control (see Appendix \ref{sec:more_rl} for likelihood scales) 
(2) not all generative models are inherently density estimators, i.e., score-matching-based diffusion models.
To address these challenges, we penalize the rolled-out reward by employing a tanh penalty, which ensures training stability and strict theoretical guarantees. 


Given the density estimator, $p_\theta$, we define the estimated density regularizer as:
\begin{equation}
{u}(\hat s_{t+1},a_t) = \tanh(\max(\tau - \log (p_\theta(\hat{s}_{t+1},a_t)), 0))
\label{eq:penalty}
\end{equation}
at time step $t$, where $\tau$ is the density threshold of ID data. 
$\tau$ is a hyperparameter chosen based on the validation set. 
Now define the regularized MDP as $\tilde{\mathcal{M}} = (\mathcal{S}, \mathcal{A}, \hat{T}, \tilde{r}, \mu_0, \gamma)$ with 
\begin{equation}
\tilde{r}(s_{t},a_t) = \hat{r}(s_{t},a_t) - \lambda {u}(\hat s_{t+1},a_t).
\label{eq:reg}
\end{equation}
where $\lambda = \gamma c \cdot C_{\hat{T}}$, introduced in the next section. $C_{\hat{T}}$ establishes a link between the dynamics model error and the density-based penalty. The optimal policy for the density-penalized MDP is obtained by solving
\begin{equation}
\hat{\pi} = \arg\max_{\pi} \eta_{\tilde{\mathcal{M}}}(\pi).
\label{eq:final_opt}
\end{equation}
Next, we explain the formulation of the density estimators incorporated in the GORMPO pipeline.






\vspace{-0.5em}
\subsection{Generative Density Estimators}
\label{sec:generative_models}
\vspace{-0.5em}
In this section, we detail the specific density estimation models $ p_\theta(\mathbf{x})$ employed within the GORMPO framework. For all models, we treat the next state-action pair $\mathbf{x} = (\mathbf{s}, \mathbf{a})$ as the joint input variable where $\mathbf{s} \in \mathbb{R}^{N \times d_1}$, $\mathbf{a} \in \mathbb{R}^{N \times d_2}$, and $\mathbf{x} \in \mathbb{R}^{N \times ( d_1+ d_2)}$, where $N$ is the sample size.
\paragraph{Kernel Density Estimation (KDE).}
We utilize KDE as a non-parametric baseline to benchmark against generative approaches. The density estimate is given by
$p_{\text{KDE}}(\mathbf{x}) = \frac{1}{N} \sum_{i=1}^N K_h(\mathbf{x} - \mathbf{x}_i)$,
where $K_h$ is the Gaussian kernel with bandwidth $h$, and $\{\mathbf{x}_i\}_{i=1}^N$ represents the training dataset $\mathcal{D}_{\text{env}}$. To ensure computational efficiency in training, we approximate the summation with $k$-nearest neighbors of $\mathbf{x}$.


\vspace{-1em}
\paragraph{Variational Autoencoders (VAE).}
We adapt the standard VAE \cite{vae} to the continuous state-action space. The model maximizes the ELBO on the log-likelihood:
\vspace{-0.5em}
\begin{equation}
    \log p_\theta(\mathbf{x}) \geq \mathbb{E}_{q_\phi(\mathbf{z}|\mathbf{x})} [\log p_\theta(\mathbf{x}|\mathbf{z})] - D_{\text{KL}}(q_\phi(\mathbf{z}|\mathbf{x}) \| p(\mathbf{z})),
\end{equation}
where the prior is $p(\mathbf{z}) = \mathcal{N}(\mathbf{0}, \mathbf{I})$. The approximate posterior is parameterized as a diagonal Gaussian $q_\phi(\mathbf{z}|\mathbf{x}) = \mathcal{N}(\boldsymbol{\mu}_\phi(\mathbf{x}), \text{diag}(\boldsymbol{\sigma}^2_\phi(\mathbf{x})))$. At inference, we utilize the importance-weighted estimate \cite{burda}.


\vspace{-1em}
\paragraph{Normalizing Flows (RealNVP).}
To enable exact log-likelihood computation, we employ the RealNVP architecture~\cite{dinh2017realnvp}. The model learns a bijection $f_\theta: \mathcal{X} \to \mathcal{Z}$ mapping the complex state-action distribution to a standard Gaussian latent space $\mathcal{Z}$. The log-density is computed via the change-of-variables formula:
\vspace{-0.5em}
\begin{equation}
    \log p_\theta(\mathbf{x}) = \log p_{\mathcal{Z}}(f_\theta(\mathbf{x})) + \log \left| \det \frac{\partial f_\theta(\mathbf{x})}{\partial \mathbf{x}^T} \right|.
\end{equation}
 \vspace{-1em}
We utilize coupling layers to ensure the Jacobian determinant is computationally tractable.



\paragraph{Diffusion Probabilistic Models.}\label{diffusion-method}
We model the density of $\mathbf{x}$ using a Denoising Diffusion Probabilistic Model (DDPM)~\citep{ho2020denoising}. The forward process $q(\mathbf{x}^{(k)} | \mathbf{x}^{(k-1)})$ incrementally adds Gaussian noise over $K$ steps. We learn a reverse process parameterized by a noise prediction network $\boldsymbol{\epsilon}_\theta(\mathbf{x}^{(k)}, k)$ trained to minimize:
\vspace{-0.5em}
\begin{equation}
    \mathcal{L}_{\text{simple}} = \mathbb{E}_{\mathbf{x}^{(0)}, \boldsymbol{\epsilon}, k} \left[ \| \boldsymbol{\epsilon} - \boldsymbol{\epsilon}_\theta(\sqrt{\bar{\alpha}_k}\mathbf{x}^{(0)} + \sqrt{1-\bar{\alpha}_k}\boldsymbol{\epsilon}, k) \|^2 \right].
\end{equation}
Exact likelihood computation requires evaluating the ELBO over the full trajectory ($K \approx 1000$ steps), which is computationally expensive for online OOD detection. To ensure tractability, we estimate the likelihood using a \textit{strided inference schedule}, summing the variational terms over a subsampled sequence of timesteps $\mathcal{S} \subset \{1, \dots, K\}$ where $|\mathcal{S}| \ll K$ (e.g., $20$--$50$ steps). This reduces the inference complexity from $O(K)$ to $O(|\mathcal{S}|)$ while preserving sufficient accuracy to distinguish ID transitions from OOD.

\vspace{-1em}
\paragraph{Continuous Normalizing Flows (Neural ODE).}
As a continuous-time counterpart to normalizing flows, we model the evolution of the density using a Neural ODE~\citep{chen2018neural}. The state-action pair is treated as the terminal state of a continuous transformation $\mathbf{z}(\tau)$ governed by the ODE $d\mathbf{z}(\tau)/d\tau = f_\theta(\mathbf{z}(\tau), \tau)$. The log-density is obtained by integrating the instantaneous change of variables:
\vspace{-0.5em}
\begin{equation}
    \log p_\theta(\mathbf{x}) = \log p_0(\mathbf{z}(0)) - \int_0^T \text{Tr}\left( \frac{\partial f_\theta}{\partial \mathbf{z}(\tau)} \right) d\tau.
\end{equation}
This formulation allows for smooth density estimation that respects the continuous nature of physical control tasks.



%% file: sections/theory.tex
\label{sec:theory}
In this section, we theoretically show that GORMPO has a guaranteed performance in real $\mathcal{M}$ and achieves near-optimal performance under the learned dynamics. 
Our guarantee relies on three assumptions. Assumption \ref{ass1} on bounded rewards is standard in RL theory and holds for most practical applications; Assumption \ref{ass2} captures epistemic uncertainty in supervised learning, where model performance degrades as we move away from the training distribution. Assumption \ref{ass3} captures the quality of the density estimation model.
For ease of notation, we denote the next state and action at any $t$ as $s'$ and $a$, respectively.

\begin{assumption}[Bounded Rewards and Value Functions]
\label{ass1}
The reward function is bounded: $|r(s,a)| \leq r_{\max}$ for all $(s,a) \in \mathcal{S} \times \mathcal{A}$. Consequently, $V^\pi_{\mathcal{M}} \in c\mathcal{F}$ where $\mathcal{F} = \{f : \|f\|_\infty \leq 1\}$ and $c = r_{\max}/(1-\gamma)$.
\end{assumption}

\begin{assumption}[Density-Dependent Model Error]
\label{ass2}
There exists a constant $C_{\hat{T}} > 0$ such that for all $(s,a) \in \mathcal{S} \times \mathcal{A}$:
\begin{equation}
d_{\mathcal{F}}(\hat{T}(s,a), T(s,a)) \leq C_{\hat{T}} \cdot \big[\mathbb{E}_{s' \sim \hat{T}(s,a)} [u(s',a)]\big]  + \epsilon_{\text{approx}} 
\end{equation}
where $d_{\mathcal{F}}$ is the integral probability metric w.r.t. $\mathcal{F}$, and $\epsilon_{\text{approx}} > 0$ represents an irreducible approximation error.
\end{assumption}

$C_{\hat T}$ depends on the model class capacity and the smoothness of the true dynamics, discussed in Appendix \ref{proofs}.

\begin{assumption}[Density Estimation Error]
\label{ass3}
There exists a constant $\epsilon_{\text{density}} > 0$ such that for all $(s',a) \in \mathcal{S} \times \mathcal{A}$:
\begin{equation}
\left|\log p_\theta(s',a) - \log p(s',a)\right| \leq \epsilon_{\text{density}}
\end{equation}
where $p_\theta(s',a)$ is the learned density estimator and $p(s',a)$ is the true behavioral density from the offline dataset.
\end{assumption}

\begin{theorem}[Conservative Value Bound with Density Error]
\label{thm:1}
Under Assumptions 1--3, for any policy $\pi$:
\vspace{-0.1em}
\begin{equation}
\eta_M(\pi) \geq \eta_{\tilde{M}}(\pi) - \gamma c \epsilon_{\text{approx}} - \lambda \epsilon_{\text{density}}
\end{equation}
where $\lambda=\gamma c C_{\hat T}$ is the penalty weight.
\end{theorem}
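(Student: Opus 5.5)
The plan is to follow the MOPO-style telescoping (simulation) lemma, applied to the learned MDP $\hat{\mathcal{M}}$, and then absorb the model error into the density penalty.

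\textbf{Step 1 (telescoping identity).} Writing $G^\pi_{\hat{\mathcal{M}}}(s,a) = \mathbb{E}_{s'\sim T(s,a)}[V^\pi_{\mathcal{M}}(s')] - \mathbb{E}_{s'\sim \hat T(s,a)}[V^\pi_{\mathcal{M}}(s')]$, the standard telescoping argument gives
\begin{equation*}
\eta_{\mathcal{M}}(\pi) - \eta_{\hat{\mathcal{M}}}(\pi) = \gamma\, \mathbb{E}_{(s,a)\sim\rho^\pi}\big[G^\pi_{\hat{\mathcal{M}}}(s,a)\big].
\end{equation*}
This holds under the convention, as in MOPO, that the reward model is taken equal to $r$ (or its error is folded into $\epsilon_{\text{approx}}$). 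It also uses $\rho^\pi$ in the normalized form that the stated constants require. I will keep this normalization consistent throughout, since otherwise a factor $1/(1-\gamma)$ appears in front of $\epsilon_{\text{approx}}$.

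\textbf{Step 2 (bound $G$ via Assumptions \ref{ass1}--\ref{ass2}).} By Assumption \ref{ass1}, $V^\pi_{\mathcal{M}}\in c\mathcal{F}$. The definition of the IPM therefore gives $|G^\pi_{\hat{\mathcal{M}}}(s,a)|\le c\, d_{\mathcal{F}}(\hat T(s,a),T(s,a))$. Assumption \ref{ass2} then yields
\begin{equation*}
|G^\pi_{\hat{\mathcal{M}}}(s,a)| \le c\,C_{\hat T}\,\mathbb{E}_{s'\sim\hat T(s,a)}[u(s',a)] + c\,\epsilon_{\text{approx}}.
\end{equation*}
Substituting this into Step 1 and using $\lambda=\gamma c C_{\hat T}$, I obtain
\begin{equation*}
\eta_{\mathcal{M}}(\pi)\ge \eta_{\hat{\mathcal{M}}}(\pi) - \lambda\,\mathbb{E}_{\rho^\pi,\hat T}[u] - \gamma c\,\epsilon_{\text{approx}}.
\end{equation*}

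\textbf{Step 3 (density error).} This is where I expect the main subtlety. Assumption \ref{ass2} should be read with the penalty $u^\star$ built from the true behavioral density $p$, because it is the true coverage that controls model error. GORMPO, however, uses the penalty $u$ of \eqref{eq:penalty}, which is built from $p_\theta$. The map $x\mapsto\tanh(\max(\tau-x,0))$ is $1$-Lipschitz, since both $\max(\cdot,0)$ and $\tanh$ are $1$-Lipschitz. Assumption \ref{ass3} therefore gives $|u^\star(s',a)-u(s',a)|\le\epsilon_{\text{density}}$ pointwise. Replacing $u^\star$ by $u$ inside the expectation costs at most $\lambda\,\epsilon_{\text{density}}$, with occupancy mass again normalized. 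This is the source of the last term in the theorem.

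\textbf{Step 4 (identify $\tilde{\mathcal{M}}$).} Since $\tilde{\mathcal{M}}$ shares the dynamics $\hat T$ with $\hat{\mathcal{M}}$ and has reward $\tilde r=\hat r-\lambda u$, linearity of the return in the reward gives $\eta_{\tilde{\mathcal{M}}}(\pi)=\eta_{\hat{\mathcal{M}}}(\pi)-\lambda\,\mathbb{E}_{\rho^\pi,\hat T}[u]$. Combining this identity with Steps 2--3 yields the claimed bound.

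The main obstacle is bookkeeping rather than ideas. I must fix the occupancy normalization and the $\gamma$ factors so that they match the stated constants exactly. I must also make explicit which density Assumption \ref{ass2} refers to, since that choice determines whether the $\lambda\epsilon_{\text{density}}$ term arises.
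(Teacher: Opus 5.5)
Your proposal is correct and matches the paper's proof essentially step for step: MOPO's telescoping lemma, $|G^\pi_{\hat{\mathcal{M}}}|\le c\,d_{\mathcal{F}}$, Assumption~\ref{ass2} read with the true-density regularizer, the $1$-Lipschitz transfer to the learned penalty at cost $\lambda\epsilon_{\text{density}}$, and the cancellation forced by $\lambda=\gamma c C_{\hat T}$ (the paper's add-and-subtract step is your linearity identity for $\eta_{\tilde{\mathcal{M}}}$). On the normalization you flag, which the paper also settles only by the remark ``using normalized occupancy measure'': normalizing $\rho^\pi$ alone just moves the $1/(1-\gamma)$ into the telescoping identity, so the stated constants come out exactly only when $\eta$ is rescaled too, i.e.\ $\eta=\mathbb{E}_{\rho^\pi}[r]$ for normalized $\rho^\pi$ with $G$ still built from the unnormalized $V^\pi_{\mathcal{M}}\in c\mathcal{F}$, whereas with the literal $\eta_{\mathcal{M}}(\pi)=\mathbb{E}_{s_0\sim\mu_0}[V^\pi_{\mathcal{M}}(s_0)]$ both error terms carry an extra factor $1/(1-\gamma)$.
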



\begin{theorem}[Optimality Gap with Density Error]
\label{thm:thm2}
Let $\pi^*$ be the optimal policy for the true MDP $\mathcal{M}$ and $\hat{\pi}$ be the solution to the density-regularized MDP $\tilde{\mathcal{M}}$ with penalty weight $\lambda = \gamma c C_{\hat{T}}$:

\vspace{-1em}
$$\hat{\pi} = \arg\max_\pi \mathbb{E}_{(s,a)\sim\rho^\pi}[\hat r(s,a) - \lambda{u}(\hat s',a)].$$
Define 
$\delta_{\min} = \min_\pi \mathbb{E}_{(s,a)\sim\rho^\pi}[u(s',a)]$ 
where $u(s',a)$ is the true regularizer. Then for any $\delta \geq \delta_{\min}$:

\vspace{-0.5em}
\begin{equation}
\label{eq:your_label}
\begin{aligned}
\eta_M(\hat{\pi}) \geq \max_{\pi:\mathbb{E}[u] \leq \delta+\epsilon_{\text{density}}} \eta_M(\pi)
- 2\lambda(\delta+2\epsilon_{\text{density}})
- 2\gamma c\epsilon_{\text{approx}}
\end{aligned}
\end{equation}

\end{theorem}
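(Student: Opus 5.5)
This follows the MOPO template. The plan is to sandwich $\eta_M$ between two versions of the model return, using Theorem~\ref{thm:1} as the lower half. The upper half comes from the telescoping lemma together with Assumption~\ref{ass2}. We write $\epsilon_u(\pi)=\mathbb{E}_{(s,a)\sim\rho^\pi}[\mathbb{E}_{s'\sim\hat T(s,a)}u(s',a)]$ for the expected estimated regularizer and $\epsilon_{u^*}(\pi)$ for the same quantity with the true regularizer $u^*(s',a)=\tanh(\max(\tau-\log p(s',a),0))$.

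\textbf{Step 1 (density transfer).} Since $\tanh(\max(\tau-\cdot,0))$ is $1$-Lipschitz, Assumption~\ref{ass3} gives $|u-u^*|\le\epsilon_{\text{density}}$ pointwise. Hence $|\epsilon_u(\pi)-\epsilon_{u^*}(\pi)|\le\epsilon_{\text{density}}$ for every $\pi$.

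\textbf{Step 2 (two-sided simulation bound).} The telescoping lemma gives $\eta_{\hat M}(\pi)-\eta_M(\pi)=\gamma\,\mathbb{E}_{\rho^\pi}[G(s,a)]$ with $|G|\le c\,d_{\mathcal F}(\hat T,T)$. Assumption~\ref{ass2} then yields
\[
|\eta_{\hat M}(\pi)-\eta_M(\pi)|\le \lambda\,\epsilon_u(\pi)+\gamma c\,\epsilon_{\text{approx}},
\qquad\text{so}\qquad
\eta_M(\pi)\le \eta_{\tilde M}(\pi)+2\lambda\,\epsilon_u(\pi)+\gamma c\,\epsilon_{\text{approx}},
\]
using $\eta_{\tilde M}=\eta_{\hat M}-\lambda\epsilon_u$. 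Theorem~\ref{thm:1} supplies the matching lower bound $\eta_M(\pi)\ge\eta_{\tilde M}(\pi)-\gamma c\epsilon_{\text{approx}}-\lambda\epsilon_{\text{density}}$. If the earlier argument instead uses the sharper $\eta_M\ge\eta_{\hat M}-\lambda\epsilon_u-\gamma c\epsilon_{\text{approx}}=\eta_{\tilde M}-\gamma c\epsilon_{\text{approx}}$, the final bound only improves.

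\textbf{Step 3 (chain).} Fix any $\pi$ with $\epsilon_{u^*}(\pi)\le\delta+\epsilon_{\text{density}}$. This set is nonempty because $\delta\ge\delta_{\min}$. By Step 1, $\epsilon_u(\pi)\le\delta+2\epsilon_{\text{density}}$. Then
\[
\eta_M(\hat\pi)\ge\eta_{\tilde M}(\hat\pi)-\gamma c\epsilon_{\text{approx}}-\lambda\epsilon_{\text{density}}
\ge\eta_{\tilde M}(\pi)-\gamma c\epsilon_{\text{approx}}-\lambda\epsilon_{\text{density}}
\ge \eta_M(\pi)-2\lambda(\delta+2\epsilon_{\text{density}})-2\gamma c\epsilon_{\text{approx}}-\lambda\epsilon_{\text{density}}.
\]
The middle inequality is the optimality of $\hat\pi$ in $\tilde M$. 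Taking the max over such $\pi$ gives the claim up to the stray $\lambda\epsilon_{\text{density}}$ term.

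\textbf{Main obstacle.} The difficulty is bookkeeping: the constants must match exactly $2\lambda(\delta+2\epsilon_{\text{density}})+2\gamma c\epsilon_{\text{approx}}$. To absorb the extra $\lambda\epsilon_{\text{density}}$, I would first use the sharper lower bound $\eta_M(\hat\pi)\ge\eta_{\tilde M}(\hat\pi)-\gamma c\epsilon_{\text{approx}}$. This bound depends only on the estimated $u$, which is the quantity that actually appears in $\tilde r$ and in Assumption~\ref{ass2}, so it incurs no density error. Failing that, I would read the constraint set as $\epsilon_{u^*}\le\delta+\epsilon_{\text{density}}$ and note that converting it to $\epsilon_u$ already spends one $\epsilon_{\text{density}}$, so the factor $2\lambda\cdot2\epsilon_{\text{density}}$ has room to spare. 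A smaller point is that the rollout next-state $\hat s'$ in $\hat\pi$'s objective matches $s'\sim\hat T$ in Assumption~\ref{ass2}; this is what lets the expected penalty equal $\epsilon_u(\pi)$ exactly.
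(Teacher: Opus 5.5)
Your skeleton (lower bound for $\hat\pi$, optimality of $\hat\pi$ in $\tilde{\mathcal M}$, upper bound for the comparator $\pi$) matches the paper's. The gap is in how you close the constants: your primary fix reads Assumption~\ref{ass2} with the \emph{estimated} regularizer, and that is not the paper's convention.

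In the theory section the unhatted $u(s',a)$ is the true regularizer built from $p$. The theorem says so explicitly, the estimated one is written $u(\hat s',a)$, and the paper's proofs invoke Assumption~\ref{ass2} ``with the true regularizer.'' This is exactly why Theorem~\ref{thm:1} carries the $-\lambda\epsilon_{\text{density}}$ term. Under that reading, Step~2 only gives $|\eta_{\hat M}(\pi)-\eta_M(\pi)|\le\lambda\epsilon_{u^*}(\pi)+\gamma c\,\epsilon_{\text{approx}}$. Two consequences follow:
\begin{itemize}
\item The ``sharper'' bound $\eta_M(\hat\pi)\ge\eta_{\tilde M}(\hat\pi)-\gamma c\,\epsilon_{\text{approx}}$ is unavailable. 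Passing from $\epsilon_{u^*}(\hat\pi)$ to $\epsilon_u(\hat\pi)$ is precisely what costs $\lambda\epsilon_{\text{density}}$ in Theorem~\ref{thm:1}.
\item Your upper bound with $2\lambda\epsilon_u(\pi)$ is not justified either, since it would need $\epsilon_{u^*}(\pi)\le\epsilon_u(\pi)$.
\end{itemize}
Your fallback does not rescue the bookkeeping as written. With $2\lambda\epsilon_u(\pi)\le2\lambda(\delta+2\epsilon_{\text{density}})$, the whole $4\lambda\epsilon_{\text{density}}$ budget is already spent, so the stray term survives.

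The repair is small, and your ``room to spare'' intuition points at it. The correct comparator bound is $\eta_M(\pi)\le\eta_{\tilde M}(\pi)+\lambda\epsilon_u(\pi)+\lambda\epsilon_{u^*}(\pi)+\gamma c\,\epsilon_{\text{approx}}$. Bound $\epsilon_{u^*}(\pi)\le\delta+\epsilon_{\text{density}}$ directly, and convert only the other term, $\epsilon_u(\pi)\le\delta+2\epsilon_{\text{density}}$. The comparator side then costs $\lambda(2\delta+3\epsilon_{\text{density}})$. Adding the $\lambda\epsilon_{\text{density}}$ from Theorem~\ref{thm:1} on the $\hat\pi$ side gives exactly $2\lambda(\delta+2\epsilon_{\text{density}})+2\gamma c\,\epsilon_{\text{approx}}$.

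This is the paper's accounting, reorganized. The paper does not call Theorem~\ref{thm:1}. Instead it converts both estimated penalties in the optimality inequality to true ones (cost $2\lambda\epsilon_{\text{density}}$), cancels the $\hat\pi$ penalty against the lower bound, and pays $2\lambda(\delta+\epsilon_{\text{density}})$ for the comparator.

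Under your literal reading of the assumption, your chain is valid. But then $\epsilon_{\text{density}}$ would play no role on the $\hat\pi$ side and Theorem~\ref{thm:1}'s density term would be unnecessary, which signals this is not the intended assumption.
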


\vspace{-0.8em}
With Theorem \ref{thm:1}, we demonstrate that the lower bound performance on the true MDP degrades linearly with the density estimation error $\epsilon_{\text{density}}$. This shows that improving the density estimator, reducing $\epsilon_{\text{density}}$, directly improves the guaranteed performance. In Theorem \ref{thm:thm2}, we show that the policy learned under imperfect density estimation is competitive with the best policy satisfying a relaxed constraint. The relaxation is proportional to $\epsilon_{\text{density}}$, meaning better density estimation allows comparison with a less conservative constraint set. We improve the theoretical results of \citep{tumay2025guardian} with stronger guarantees in $\mathcal{M}$, and better near-optimal policies maintaining a low penalty in $\mathcal{\tilde M}$. See the proofs, lemmas, and further discussion on assumptions in Appendix \ref{proofs}.

%% file: sections/experiments.tex
\vspace{-0.3em}

In this section, we present our experiment setup; results on the real-world medical dataset and sparse D4RL datasets. See Appendix \ref{sec:ablations} for hyperparameter sensitivity and penalty type ablations.
\vspace{-0.5em}
\subsection{Experiment Setup}

\vspace{-0.2em}

\paragraph{Dataset and Task.} We use a medical dataset that includes cardiogenic shock patient physiologic features and mechanical circulatory support inputs. Cardiogenic Shock (CGS) is a syndrome characterized by cardiac output insufficient for end-organ perfusion, where mechanical circulatory support (MCS) devices play an integral role in aiding heart muscle recovery \cite{vahdatpour2019cardiogenic,tumay2025guardian}.  
Our task is to learn a safe MCS weaning policy by reducing the pump level (P-level) while maintaining stable hemodynamics.  This task reflects sparse state-action coverage because few patients receive insufficient MCS and remain unstable, which leaves the low P-level and low reward region in Figure \ref{fig:sparsity} underrepresented.
 
\vspace{-0.8em}

\paragraph{Baselines.}
Our base model is MBPO \cite{janner2019mbpo}, which uses a learned dynamics model to generate short synthetic rollouts that are then mixed with real environment data to train a standard model-free policy.
For SOTA offline RL baselines, we select model-free  SPOT \cite{wu2022spot} that uses a VAE to penalize the actor network with ELBO loss and state-of-the-art model-based MOBILE \cite{pmlr-v202-sun23q} that penalizes the Bellman based on the disagreement in the dynamics models.

\vspace{-0.8em}
\paragraph{Hyperparameter Tuning.} Based on \cite{DBLP:conf/nips/BrandfonbrenerW21, pmlr-v162-kurenkov22a}, we finetune $\lambda$ from a small set of hyperparameters by allowing access to the online simulation environment for 100 episodes (see Appendix \ref{sec:model_params}). 
\vspace{-0.8em}

\paragraph{OOD Experiment Setup.} We investigates the OOD detection performance of KDE, VAE, RealNVP, diffusion, and NeuralODE models. The goal is to resemble OOD transition dynamics during policy training since a controlled OOD detection experiment directly on transition model rollouts is not trivial. 
We generate OOD datasets by concatenating the datasets with their Gaussian noise $\mathcal{N}(\mu, 0.1)$-added versions (see Appendix \ref{sec:ood_data}).
The mean of the added noise is labeled as ``OOD Distance'' in Figures \ref{fig:tnr_tpr} and \ref{fig:ood_experiments}. Smaller OOD distance values correspond to a harder task due to the poor separation.
For evaluation, we employ accuracy, true positive rate (TPR), and true negative rate (TNR) based on the threshold selection on the validation set and ROC AUC for OOD-ID separation. 

\vspace{-0.6em}

\vspace{-0.3em}



     \subsection{Real-world Medical Dataset Experiments}
\vspace{-0.2em}

In this section, we provide the OOD detection and offline RL results on the real-world medical dataset.
 All evaluations use the Transformer Digital Twin (TDT) \cite{tumay2025guardian} with 6-hour rollouts, matching the dataset’s deterministic 6-hour episode length. We utilize average reward, weaning score (WS), and action change penalty (ACP) for evaluation, where reward signifies the patient's stability based on physiological features; weaning score quantifies the successful decrease in the P-level after observing patient stability as follows \cite{tumay2025guardian}:


\begin{figure*}[t]
\centering
\begin{minipage}[t]{0.54\linewidth}

\begin{equation}
    \text{WS} = \frac{\sum_{i=1}^{T} \mathbb{I}(\texttt{Is\_Stable}(i),1) \cdot \texttt{Weaned}(i)}
    {\sum_{i=1}^{T}  \mathbb{I}(\texttt{Is\_Stable}(i),1)}
    \vspace{-0.2em}
\label{eq:WS}
\end{equation}
where $T$ is episode length. See Appendix \ref{sec:abiomed} for \texttt{Is\_Stable}, and \texttt{Weaned}, and dataset details.
Action change penalty (ACP) \cite{ogsrl, tumay2025guardian} accumulates the magnitude of P-level changes over an episode of length $T$ as
\vspace{-0.5em}
\begin{equation*}
    \text{ACP =} \sum^{T}_{i=1}||a_{i-1}-a_{i}||_2, \text{ if }  ||a_{i-1}-a_{i}||_2 >2.
    \vspace{-0.5em}
\end{equation*}

\end{minipage}
\hfill
\begin{minipage}[t]{0.44\linewidth}
\vspace{0.5em}

     \centering
   
    \includegraphics[width=0.85\linewidth]{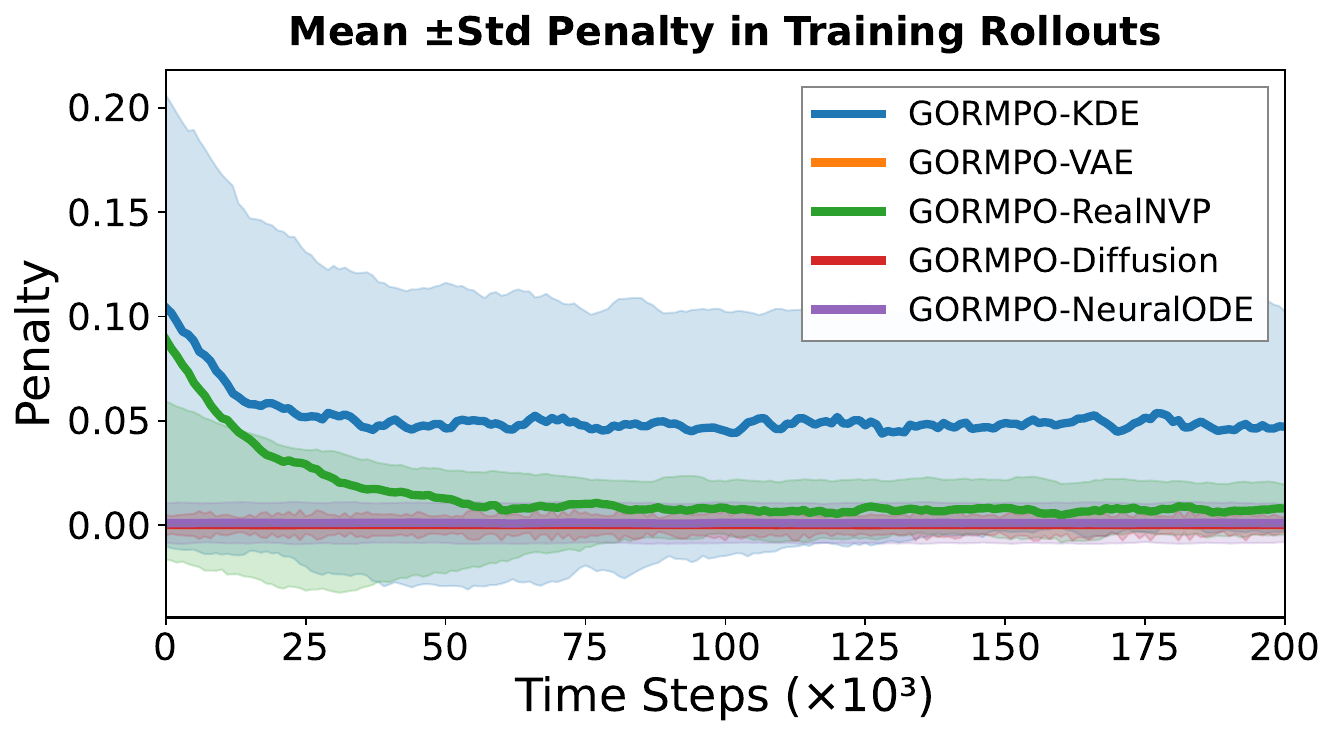}
       \vspace{-0.5em}
    \caption{Penalty during training rollouts for 200000 steps in our medical dataset. GORMPO-Diffusion and VAE overlap at 0.
    }
    \label{fig:penalty_medical}
     \vspace{-1em}
\end{minipage}
\vspace{-1.5em}
\end{figure*}

\begin{wrapfigure}{r}{0.44\linewidth}
\vspace{0.3em}
      \centering
    \includegraphics[width=1\linewidth]{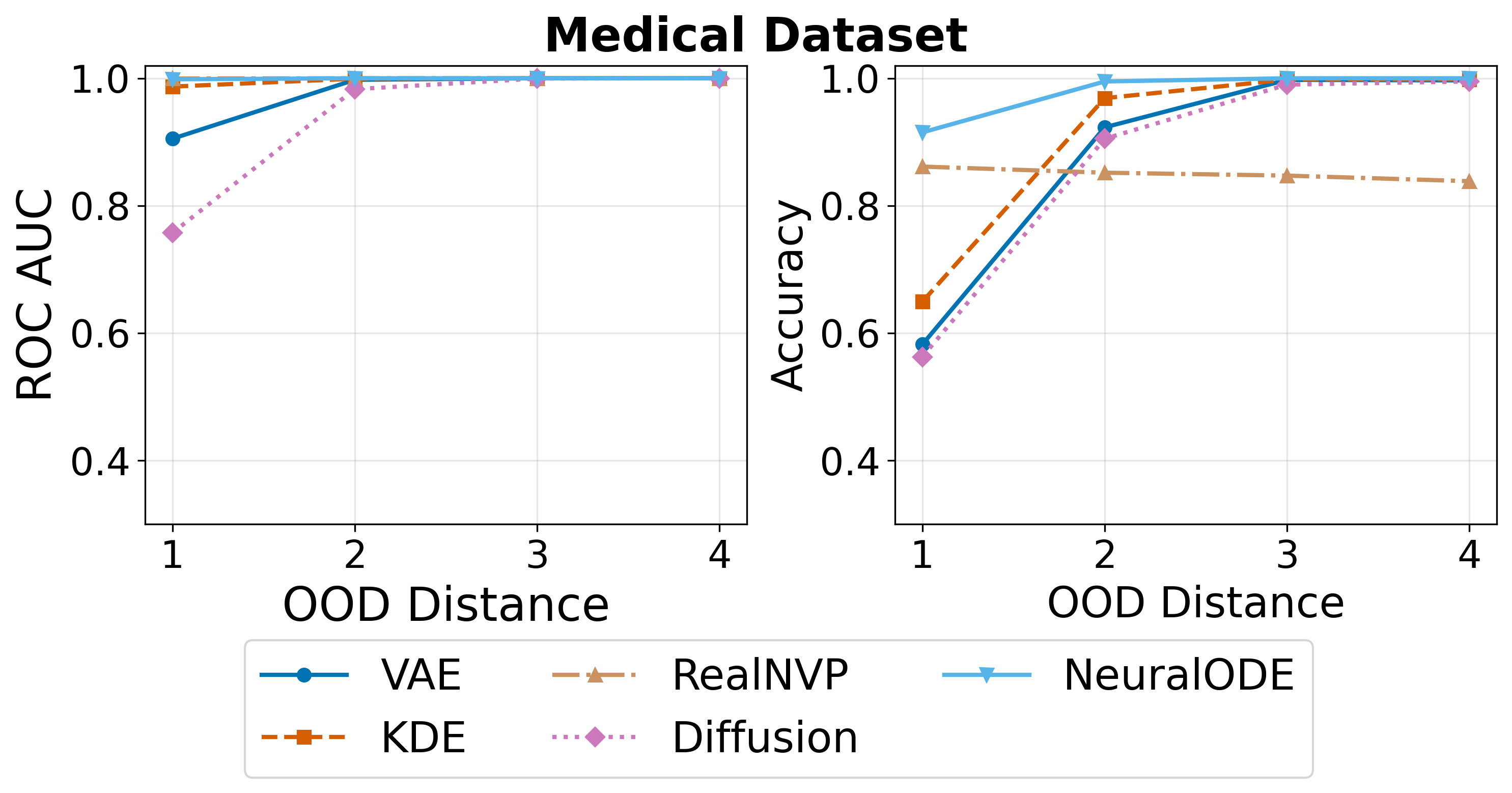}
       \vspace{-1.4em}
    \caption{OOD detection performance of density estimation models trained on the real-world medical dataset in terms of ROC AUC, and accuracy.  OOD distance denotes the mean shift $\mu$ in $\mathcal{N}(\mu, 0.1)$, with larger $\mu$ making OOD detection easier.}
    \label{fig:tnr_tpr}

     \vspace{-1.5em}
\end{wrapfigure}
\vspace{-2em}


\vspace{-0.8em}
\paragraph{OOD Detection Results.}
On our proprietary medical dataset, NeuralODE depicts the best ROC AUC followed by the highest accuracy progression, as the OOD detection task gets harder. Other models also perform close to NeuralODE, indicating that this dataset is well-fitted to our sparsity problem.  
This suggests that the valid medical states lie on a sharp, low-dimensional manifold. The physiological rules governing patient stability create a clear separation from random or unsafe regions, simplifying the density estimation task.
RealNVP’s accuracy stays roughly constant even as ROC AUC improves because the density threshold is chosen on a validation set that labels 1\% of ID samples as OOD. This setup introduces an unavoidable error floor in accuracy.
  \vspace{-1em}
  
\begin{figure*}[!htb]
\vspace{0.5em}
    \centering
    \includegraphics[width=1\linewidth]{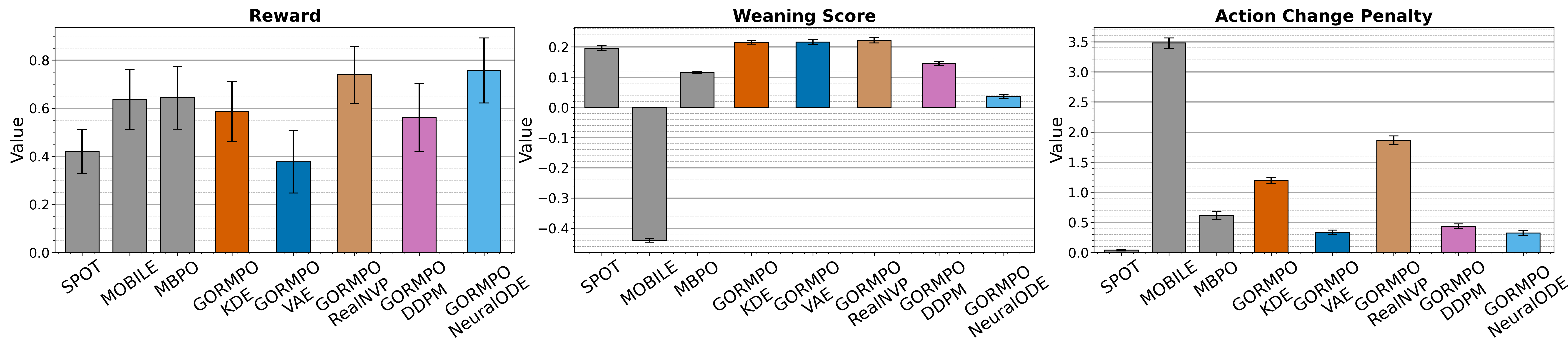}
    \vspace{-1.5em}
    \caption{Comparison of different versions of GORMPO against offline RL baselines on our proprietary medical dataset in reward ($\uparrow$), WS ($\uparrow$), and ACP ($\downarrow$) on 5 seeds. $\uparrow$: higher the better; $\downarrow$: lower the better. GORMPO outperforms baselines in Reward and WS. SPOT outperforms in ACP.
    } 
    \vspace{-1em}
    \label{fig:abiomed_barplots}
   
\end{figure*}

\begin{figure*}[!t]
    \centering
    \includegraphics[width=1\linewidth]{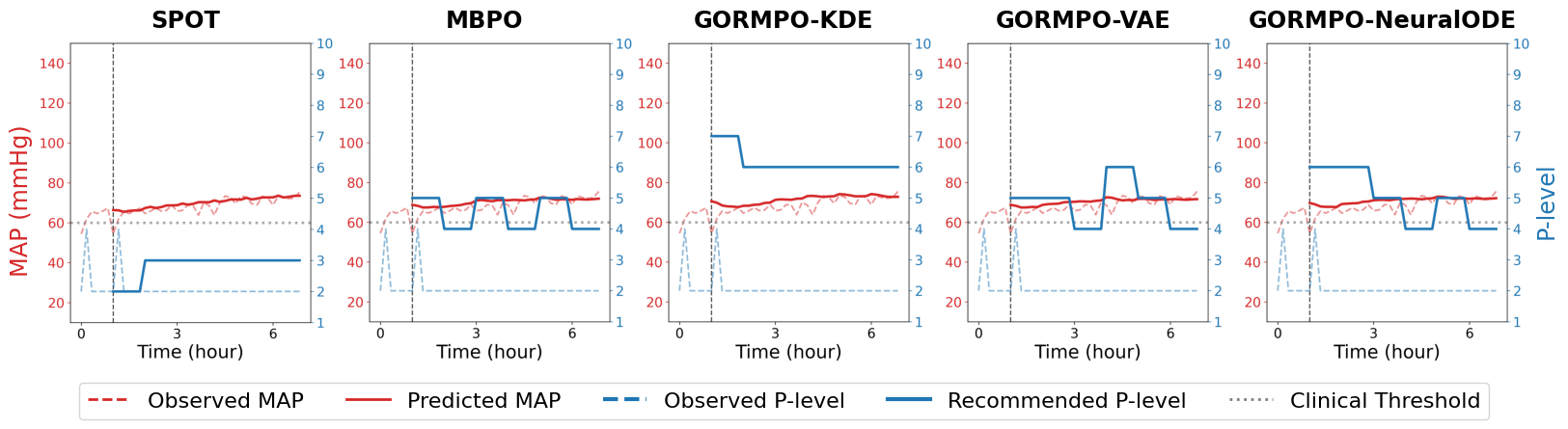}
       \vspace{-1.8em}
    \caption{On our proprietary medical dataset, we compare the recommended GORMPO P-levels (solid blue) against offline RL baselines in 6-hour mean arterial pressure (MAP) rollouts (solid red), which is a significant variable for hemodynamic stability, on an OOD expert policy (low reward - low P-level). 
    Since TDT predicts stable hemodynamics above the MAP clinical threshold, an ideal weaning policy is to decrease the P-level every 1 hour, which GORMPO-VAE and NeuralODE successfully follow.
    }
    \label{fig:abiomed_samples}
    \vspace{-1em}
\end{figure*}

\vspace{-1em}

\begin{figure*}[!t]
    \centering
    \includegraphics[width=0.87\linewidth]{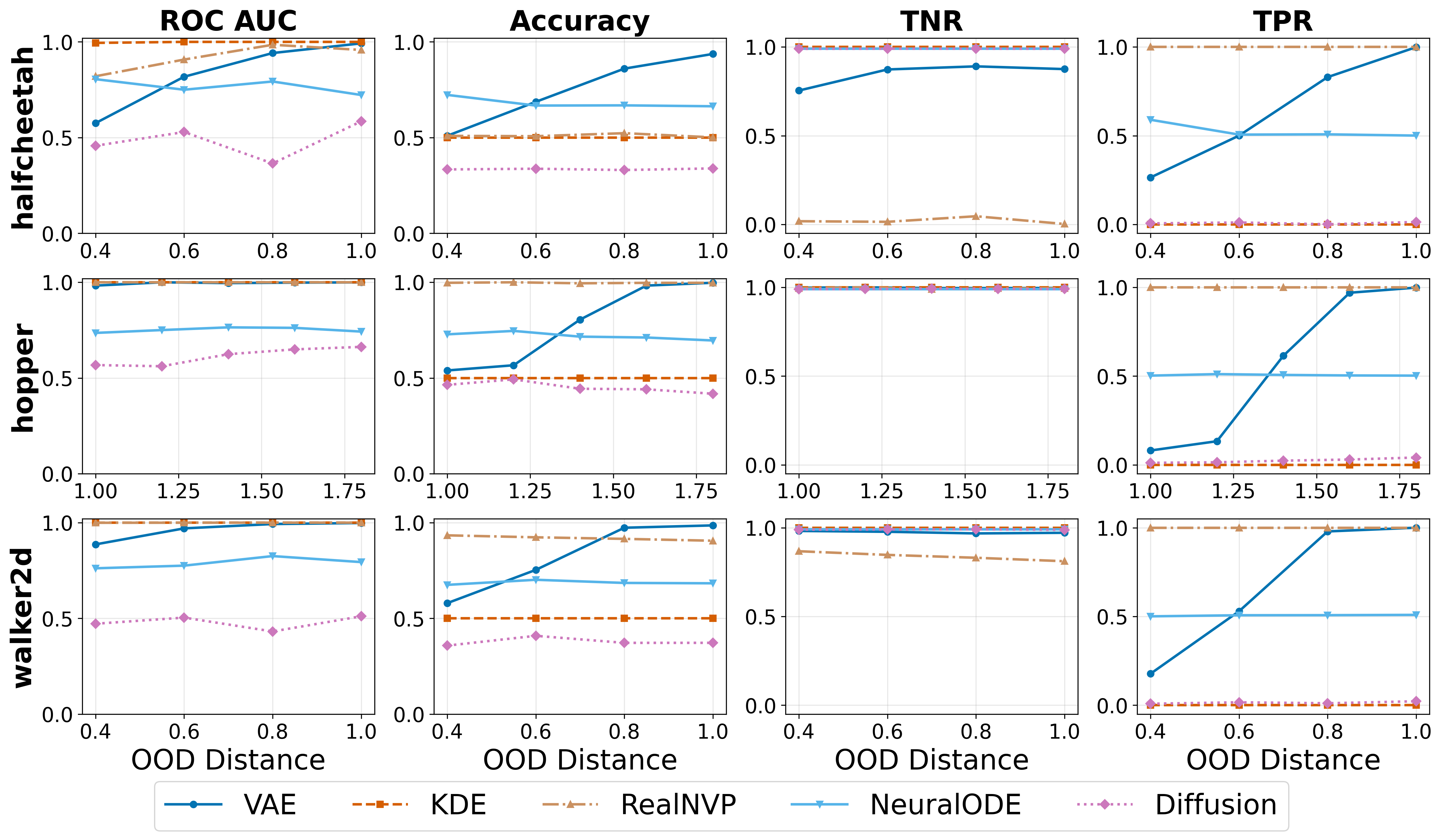}
    \vspace{-0.8em}
    \caption{OOD detection performance of KDE, VAE, RealNVP, diffusion, and NeuralODE models trained on sparse halfcheetah, hopper, and walker2d in ROC AUC, accuracy, TNR, and TPR.  OOD distance denotes the mean shift $\mu$ in $\mathcal{N}(\mu, 0.1)$, with larger $\mu$ making OOD detection easier.
    }
    \label{fig:ood_experiments}
    \vspace{-2em}
\end{figure*}
\vspace{0.5em}
\paragraph{RL Results.}
We further show the performance of GORMPO on our medical dataset in Figures \ref{fig:abiomed_barplots} and \ref{fig:abiomed_samples} (see the table in Appendix \ref{sec:more_rl}). GORMPO-RealNVP outperforms baselines in reward by 17\% and in WS by 13\%. 
 GORMPO-VAE, DDPM and NeuralODE are successful at reducing MBPO's high ACP but underperform SPOT, which is less prone to exploration due to its model-free nature. However, following the clinician behavior is not always optimal as depicted in the observed P-level in Figure \ref{fig:abiomed_samples} due to high stochasticity. We depict the weaning performance on an OOD sample in Figure  \ref{fig:abiomed_samples} (see more in Appendix \ref{sec:more_rl}). 
GORMPO variants yield successful weaning by conserving the patient stability, keeping the mean arterial pressure (MAP) above the critical limit in Figure \ref{fig:abiomed_samples}. 
Patient stability (reward) is maintained with GORMPO-RealNVP and NeuralODE, while VAE and DDPM result in a suboptimal patient trajectory since they fail at penalizing the policy enough to avoid unsafe actions in Figure \ref{fig:penalty_medical}. 

\vspace{0.2em}

\subsection{Sparse D4RL Benchmark Experiments}
In this section, we provide the OOD detection and offline RL results of GORMPO using both MBPO \cite{janner2019mbpo} and MOBILE \cite{pmlr-v202-sun23q} as base models on the sparse D4RL datasets.

\paragraph{Datasets.}
\vspace{-1em}
We show performance on the \textbf{Gym-MuJoCo} tasks of the D4RL \cite{fu2020d4rl} benchmark on the medium-expert level due to its high resemblance to real-world clinical decision dynamics, i.e, a clinician adjusting P-level on MCS without access to patient hemodynamic forecasts. 
We create the sparse datasets by randomly discarding 50\% of the trajectories in a designated unsafe area with sparsity in the action-reward space. See Appendix \ref{sec:more_data} for more details.

\vspace{-3.2em}
\paragraph{OOD Detection Results.}
We depict the OOD detection performance in Figure \ref{fig:ood_experiments} with VAE being the best in halfcheetah; and RealNVP in hopper, and walker2d.  
In halfcheetah, VAE performs consistently close to the top performance in all metrics, while other models demonstrate inconsistencies. Since the halfcheetah dataset comprises of 2 distinct modes, a variational generative model with a successful threshold selection can display robust performance. 
In hopper and walker2d, RealNVP achieves the strongest OOD detection. Since there are many outliers around the main modes, capturing the heavy tails benefits from an explicit density estimator rather than models based on a lower bound estimation (see Appendix \ref{sec:ood_data}).
Diffusion and NeuralODE exhibit the poorest separation, with accuracy levels approaching random chance, possibly due to conservative threshold selection, which prevents these models from assigning the sharply distinct likelihoods required for robust OOD detection.

\begin{table*}[t]
\centering
\caption{Performance comparison of GORMPO to offline RL baselines in average normalized reward $\pm$ standard deviation over 3 seeds on sparse D4RL medium-expert datasets. The best-performing density estimator is consistent across both GORMPO variants, also improving the base models.}
\label{tab:reward_sparse}
\tiny
\setlength{\tabcolsep}{4pt}
\vspace{-0.5em}

\begin{tabular}{l|c|c|cccc}
\toprule
\multirow{2}{*}{\textbf{Dataset}}
& \multirow{2}{*}{\textbf{SPOT}}
& \textbf{Base Models}
& \multicolumn{4}{c}{\textbf{GORMPO (MOBILE-based)}} \\
\\[-1em]
\cline{3-7}
\\[-0.9em]
& 
& \textbf{MOBILE} 
& \textbf{KDE}
& \textbf{VAE}
& \textbf{RealNVP}
& \textbf{DDPM} \\
\midrule

halfcheetah-medium-expert-sparse
& $76.4 \pm 3.22$
& $104.0 \pm 1.63$
& $103.7 \pm 1.39$
& $\mathbf{106.0 \pm 0.54}$
& $104.0 \pm 1.18$
& $103.0 \pm 1.44$ \\

hopper-medium-expert-sparse
& $83.8 \pm 23.3$
& $95.2 \pm 24.5$
& $95.2 \pm 24.5$
& $113.0 \pm 0.53$
& $95.5 \pm 29.8$
& $\mathbf{113.0 \pm 0.62}$ \\

walker2d-medium-expert-sparse
& $113.5 \pm 0.54$
& $116.0 \pm 2.44$
& $117.0 \pm 1.08$
& $117.0 \pm 4.34$
& $\mathbf{118.0 \pm 2.66}$
& $117.0 \pm 4.45$ \\

\midrule
&& \textbf{MBPO} & \multicolumn{4}{c}{\textbf{GORMPO (MBPO-based)}} \\
\midrule

halfcheetah-medium-expert-sparse
& $76.4 \pm 3.22$
& $63.7 \pm 8.23$
& $78.4 \pm 16.9$
& $\mathbf{89.9 \pm 1.50}$
& $86.0 \pm 8.34$
& $85.7 \pm 4.18$ \\

hopper-medium-expert-sparse
& $83.8 \pm 23.3$
& $4.81 \pm 3.18$
& $8.91 \pm 12.89$
& $2.32 \pm 0.68$
& $10.0 \pm 9.02$
& $\mathbf{14.1 \pm 8.66}$ \\

walker2d-medium-expert-sparse
& $113.5 \pm 0.54$
& $3.19 \pm 1.21$
& $6.31 \pm 1.26$
& $8.20 \pm 6.04$
& $\mathbf{10.1 \pm 5.34}$
& $5.23 \pm 0.53$ \\

\bottomrule
\end{tabular}
\vspace{-2em}
\end{table*}

\vspace{-1.5em}
\paragraph{RL Results.} We demonstrate offline RL results on the sparse medium-expert in Table \ref{tab:reward_sparse}. 
In halfcheetah, MBPO-based GORMPO outperforms MBPO by 41\% while remaining competitive with SOTA baselines, highlighting that density-based guardian's benefit for regularizing the OOD transition rollouts in MBPO.
Walker2d and hopper datasets exhibit high uncertainty in the terminal signal, unlike halfcheetah with uniform episode lengths (see Appendix \ref{sec:more_data}). In this case, MBPO in walker2d and hopper largely fail, and MBPO-based GORMPO's improvement on MBPO is insufficient to outperform SOTA methods. Without a direct uncertainty penalty on the critic network like MOBILE, MBPO is highly sensitive to compounding model error and termination misprediction, where small transition mistakes in contact-rich locomotion quickly push model rollouts OOD. Employing MOBILE as the base model, our density-based guardian yields an 18.7\% improvement in hopper, demonstrating that it further strengthens SOTA uncertainty-penalized methods, especially under hopper's highly uncertain dynamics. 

\begin{wrapfigure}{r}{0.32\linewidth}
\vspace{-0.5em}
\centering
    \centering

    \includegraphics[width=1\linewidth]{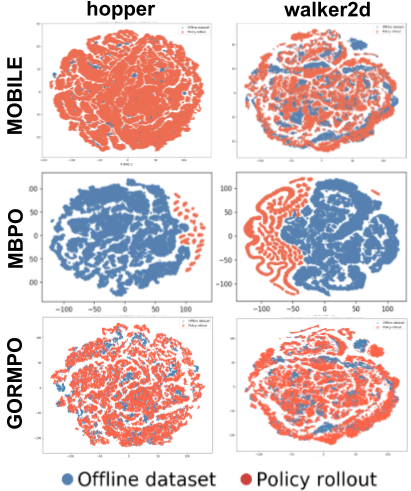}
    \caption{t-SNE projections of equal-sized offline dataset and policy rollout $(s',a)$ pairs.}
    \label{fig:tsne}

     \vspace{-2em}
\end{wrapfigure}
\vspace{-1em}
In Figure \ref{fig:tsne}, MOBILE covers the data support in hopper and walker2d, meanwhile MBPO largely reaches OOD regions. Best performing MOBILE-based GORMPO variants have a better data coverage by avoiding OOD regions and overlapping with the dataset projection.
See more results on the medium, medium-replay, and medium-expert D4RL datasets in Appendix \ref{sec:more_rl}.

\vspace{-1em}
\subsection{Does better OOD detection lead to better policies?}
\vspace{-0.5em}
We answer our hypothesis by combining the OOD detection and RL results. In our proprietary medical dataset, NeuralODE is the best OOD detection model, while GORMPO-RealNVP and NeuralODE are the best and second-best policies in reward and WS. As this dataset exhibits vast sparsities, penalties with exact likelihoods is essential for policy performance.
In halfcheetah, VAE is consistently satisfactory at each metric with a distinctive threshold selection, which also coincides with the high reward of GORMPO-VAE.  Since halfcheetah has a uniform episode length distribution, all density estimators successfully regularized the OOD rollouts of the dynamics model (see Appendix \ref{sec:tsne}). In the hopper models, RealNVP is consistently the best OOD detection model across all metrics. We also observe its superiority in the RL results, with the GORMPO-RealNVP being the second-best model after diffusion.  
Despite the OOD detection challenges inherent to diffusion models (see Appendix \ref{appendix:ddpm}), they surprisingly support policy training better than many explicit density models in uncertain dynamics. 
We hypothesize this stems from penalty saturation, where DDPM assigns low probabilities to most generated samples, consistently driving the tanh penalty to 1 (see Appendix \ref{sec:penalty}).
So, the high performance of GORMPO-DDPM is likely driven by this aggressive conservatism, forcing the policy to remain extremely close to the training support as shown in Figure \ref{fig:tsne}.
On the walker2d models, RealNVP is the best model in OOD separation, which overlaps with the RL result, with GORMPO-RealNVP being the best variant. 
Overall, we find that stronger OOD detection generally yields better policies, though under highly uncertain dynamics, such as hopper, the most pessimistic density models can be preferable.

%% file: sections/conclusion.tex
\label{sec:conc}
In this paper, we propose GORMPO, a generative OOD-regularized model-based policy optimization algorithm for sequential decision-making. During policy optimization, we penalize rewards predicted by the dynamics model based on the joint probability of next-state and action pairs with generative density estimator models. 
Overall, GORMPO outperforms state-of-the-art offline RL methods by 17\% in our real-world medical dataset and enhances the base models. Due to our plug-and-play framework, 
we benchmark the OOD detection performance of five distinct families of density estimators on our medical dataset and sparse offline RL datasets.
We conclude that the best OOD detection models lead to the top-2 best offline policy performance.  
Notably, GORMPO-DDPM achieves high rewards despite weak OOD detection, which suggests that highly uncertain dynamics may require more pessimistic density penalties.
A key limitation of our approach is its reliance on a penalization threshold, since distribution skewness can shift this threshold and alter conservatism.
In addition, our $\tanh$ penalty function saturates in highly OOD regions, which might be too optimistic in safety-critical tasks.
Future work will explore model-based RL with threshold-free density penalization. Furthermore, we will study extending GORMPO to datasets with uncertain dynamics.

%% file: sections/appendix.tex
\section{Algorithm Pseudocode}
\label{algo}
\input{alg_sample}

\section{Proofs}
\label{proofs}

\begin{lemma}[Telescoping Lemma - Lemma 4.1 in \cite{yu2020mopo}]
\label{lem:telescoping}
Let $M$ and $\hat{M}$ be two MDPs with the same reward function $r$, but different dynamics $T$ and $\hat{T}$, respectively. Let
$$G^\pi_{\hat{M}}(s,a) := \mathbb{E}_{s' \sim \hat{T}(s,a)}[V^\pi_M(s')] - \mathbb{E}_{s' \sim T(s,a)}[V^\pi_M(s')].$$
Then,
$$\eta_{\hat{M}}(\pi) - \eta_M(\pi) = \gamma \mathbb{E}_{(s,a) \sim \rho^\pi_{\hat{T}}}[G^\pi_{\hat{M}}(s,a)]$$
\end{lemma}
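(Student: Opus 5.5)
The plan is the standard simulation-lemma argument: write the value at time $t$ in the model as the value at time $t+1$ plus a one-step transition correction, then sum over time. Fix $\pi$ and abbreviate $V:=V^\pi_M$. For each $j\ge 0$, define the hybrid return $W_j$. It runs the rollout under $\hat T$ for the first $j$ steps and under $T$ afterwards, with reward $r$ throughout. Then $W_0=\eta_M(\pi)$, and $W_j\to\eta_{\hat M}(\pi)$ as $j\to\infty$ because rewards are bounded (Assumption \ref{ass1}) and $\gamma<1$. The identity will then follow by telescoping,
\[
\eta_{\hat M}(\pi)-\eta_M(\pi)=\sum_{j\ge 0}\bigl(W_{j+1}-W_j\bigr).
\]

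To compute a single increment, note that $W_j$ and $W_{j+1}$ agree on the trajectory up to time $j$, including the reward $r(s_j,a_j)$. They differ only in the law of $s_{j+1}$: it is drawn from $\hat T(s_j,a_j)$ under $W_{j+1}$ and from $T(s_j,a_j)$ under $W_j$. In both, everything after $s_{j+1}$ follows $T$, so the continuation value from $s_{j+1}$ is exactly $V(s_{j+1})$. Conditioning on $(s_j,a_j)$ therefore gives
\[
W_{j+1}-W_j=\gamma^{j+1}\,\mathbb{E}_{s_j\sim P^\pi_{\hat T,j},\,a_j\sim\pi(\cdot|s_j)}\Bigl[\mathbb{E}_{s'\sim\hat T(s_j,a_j)}V(s')-\mathbb{E}_{s'\sim T(s_j,a_j)}V(s')\Bigr].
\]
The bracket is $G^\pi_{\hat M}(s_j,a_j)$. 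The state distribution at time $j$ is $P^\pi_{\hat T,j}$ because the first $j$ transitions are taken under $\hat T$.

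Summing over $j$ yields $\gamma\sum_j\gamma^j\,\mathbb{E}_{P^\pi_{\hat T,j}\times\pi}[G^\pi_{\hat M}]$. This equals $\gamma\,\mathbb{E}_{(s,a)\sim\rho^\pi_{\hat T}}[G^\pi_{\hat M}(s,a)]$ under the paper's unnormalized occupancy measure $\rho^\pi_{\hat T}(s,a)=\pi(a|s)\sum_t\gamma^t P^\pi_{\hat T,t}(s)$.

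The only delicate point is justifying the limit and the interchange of sum and expectation. Bounded rewards give $|V|\le c$, hence $|G^\pi_{\hat M}|\le 2c$, so the series converges absolutely. The tail $|\eta_{\hat M}(\pi)-W_j|\le 2\gamma^j r_{\max}/(1-\gamma)\to 0$ handles the limit. I would also flag the normalization convention: because $\rho^\pi$ is unnormalized, no $1/(1-\gamma)$ factor appears.
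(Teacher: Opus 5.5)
Your proposal is correct and essentially the standard proof. The paper does not prove this lemma; it imports it from MOPO, whose proof is your hybrid-return telescoping: run $\pi$ under $\hat T$ for $j$ steps and under $T$ thereafter, so that $W_{j+1}-W_j=\gamma^{j+1}\mathbb{E}[G^\pi_{\hat M}(s_j,a_j)]$, and sum over $j$. Your care about absolute convergence, and your remark that the unnormalized occupancy measure $\rho^\pi_{\hat T}$ produces no $1/(1-\gamma)$ factor, both match the paper's definition of $\rho^\pi_{\hat T}$.
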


As an immediate corollary:
$$\eta_M(\pi) = \mathbb{E}_{(s,a) \sim \rho^\pi_{\hat{T}}}[r(s,a) - \gamma G^\pi_{\hat{M}}(s,a)] \geq \mathbb{E}_{(s,a) \sim \rho^\pi_{\hat{T}}}[r(s,a) - \gamma |G^\pi_{\hat{M}}(s,a)|]$$

\begin{lemma}[Function Class Bound]
\label{lem:function_class}
If $\mathcal{F}$ is a set of functions mapping $\mathcal{S}$ to $\mathbb{R}$ with $\|\cdot\|_\infty \leq 1$, and $V^\pi \in c\mathcal{F}$ where $c = r_{\max}/(1-\gamma)$, then
$$|G^\pi_{\hat{M}}(s,a)| \leq c \cdot d_{\mathcal{F}}(\hat{T}(s,a), T(s,a))$$
where $d_{\mathcal{F}}(\hat{T}(s,a), T(s,a)) = \sup_{f \in \mathcal{F}} |\mathbb{E}_{s' \sim \hat{T}(s,a)}[f(s')] - \mathbb{E}_{s' \sim T(s,a)}[f(s')]|$.
\end{lemma}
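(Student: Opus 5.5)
The last statement is Lemma~\ref{lem:function_class} (Function Class Bound). The plan is to pull the scalar $c$ out of the supremum that defines $d_{\mathcal{F}}$, using Assumption~\ref{ass1}.

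\textbf{Step 1: normalize the value function.} By Assumption~\ref{ass1}, $V^\pi_M \in c\mathcal{F}$. So we write $V^\pi_M = c f_\pi$ with $f_\pi := V^\pi_M / c$ and $\|f_\pi\|_\infty \le 1$, i.e.\ $f_\pi \in \mathcal{F}$. To justify the membership, I will recall the standard bound
\[
|V^\pi_M(s)| \le \sum_{t\ge 0}\gamma^t r_{\max} = r_{\max}/(1-\gamma) = c .
\]
This gives $\|f_\pi\|_\infty \le 1$ directly.

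\textbf{Step 2: rewrite $G^\pi_{\hat M}$ and take absolute values.} By linearity of expectation,
\[
G^\pi_{\hat M}(s,a) = c\bigl(\mathbb{E}_{s'\sim\hat T(s,a)}[f_\pi(s')] - \mathbb{E}_{s'\sim T(s,a)}[f_\pi(s')]\bigr).
\]
Taking absolute values, and using $c>0$, gives
\[
|G^\pi_{\hat M}(s,a)| = c\,\bigl|\mathbb{E}_{\hat T(s,a)}[f_\pi] - \mathbb{E}_{T(s,a)}[f_\pi]\bigr| .
\]

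\textbf{Step 3: bound by the supremum.} Since $f_\pi \in \mathcal{F}$, the difference on the right is one of the candidates in the supremum defining $d_{\mathcal{F}}(\hat T(s,a),T(s,a))$. It is therefore at most that supremum, which yields $|G^\pi_{\hat M}(s,a)| \le c\, d_{\mathcal{F}}(\hat T(s,a),T(s,a))$.

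The only point needing care is measurability and integrability, which is needed for the expectations in $G^\pi_{\hat M}$ to be well defined. Boundedness of $f_\pi$ makes both expectations finite. I will tacitly assume $\mathcal{F}$ consists of measurable functions, so that $f_\pi$ is admissible in the supremum. There is no real obstacle beyond this.
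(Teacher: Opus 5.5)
Your argument is correct and is the standard one behind this lemma, which the paper states without proof, inheriting it from \cite{yu2020mopo}: write $V^\pi_M = c f_\pi$ with $f_\pi \in \mathcal{F}$, pull $c$ out by linearity, and bound the remaining difference by the supremum defining $d_{\mathcal{F}}$. One small nit: $f_\pi\in\mathcal{F}$ already follows from the hypothesis $V^\pi\in c\mathcal{F}$, and your sup-norm bound $|V^\pi_M|\le c$ only establishes membership because Assumption~\ref{ass1} takes $\mathcal{F}$ to be the whole unit ball, while the degenerate case $c=0$, where you cannot divide by $c$, is trivial since then $G^\pi_{\hat M}\equiv 0$.
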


\begin{lemma}[Telescoping Lemma \cite{tumay2025guardian}: Model Approximation Bound for Transition Functions ]

Let $\pi$ be any feasible solution and $\hat{T}_\phi$ be the Transformer-learned transition function with parameters $\phi$. Assume the following:
\begin{itemize}
    \item Dataset $\mathcal{D} = \{(s_i, a_i, r_i, s'_i)\}_{i=1}^N$ with $N$ samples
    \item Transformer with $L$ layers, dimension $d_{model}$, and $H_{attn}$ attention heads
    \item Lipschitz continuous true transition $T$ with constant $L_T$
\end{itemize}

Then, with probability at least $1 - 2\beta - 4\delta$, the following holds:

$$\left|V_{\psi, \hat{T}_\phi}^{\pi}(\rho_0) - V_{\psi, T}^{\pi}(\rho_0)\right| \leq \epsilon_H + \epsilon_{trans}$$

where:

$$\epsilon_H := \frac{\gamma^{H+1}(2-\gamma)\psi_{max}}{(1-\gamma)^2}$$

$$\epsilon_{trans} := \frac{\psi_{max}(\gamma - \gamma^{H+2})}{(1-\gamma)^2} \left( \epsilon_{approx} + \epsilon_{gen} \right)$$

with:

$$\epsilon_{approx} := C_{trans} \cdot \min\left\{\frac{1}{L \cdot d_{model}}, \frac{1}{H_{attn} \cdot N_{ctx}}\right\}$$

$$\epsilon_{gen} := L_T \sqrt{\frac{d \log(1/\delta) + \log N}{N}} + \mathcal{O}\left(\frac{1}{\sqrt{N_{eff}}}\right)$$

Assumption \ref{ass2} is supported by the above derivation of model approximation error.  In regions with high estimated density $p_\theta(s',a)$, the local sample density is high, yielding large $N_{\text{eff}}$ and small error. Conversely, as $p_{\theta}(s,a) \rightarrow 0$, the effective sample size diminishes, causing the error to grow. The linear relationship $\tau - p_{\theta}(s,a)$ captures this first-order dependence between local data density and model error, while $\epsilon_{\text{approx}} = C_{\text{trans}} \cdot \min\{1/(L \cdot d_{\text{model}}), 1/(H_{\text{attn}} \cdot N_{\text{ctx}})\}$ represents the transformer's irreducible approximation error determined by its architecture (depth $L$, dimension $d_{\text{model}}$, and attention heads $H_{\text{attn}}$). The constant $C_{\hat{T}}$ encapsulates the Lipschitz constant $L_T$ of the true dynamics and the dimensionality-dependent factors.
\end{lemma}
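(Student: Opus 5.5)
The plan is to split the value gap into a finite-horizon part and a tail, and then control the finite-horizon part with a per-step transition-error bound that splits into approximation and generalization pieces.

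\textbf{Step 1: truncate the horizon.} Fix the horizon $H$ and write each value function $V^{\pi}_{\psi,\hat T_\phi}(\rho_0)$ and $V^{\pi}_{\psi,T}(\rho_0)$ as the expected discounted sum of rewards up to time $H$, plus a remainder from times $t>H$.
\begin{itemize}
\item Since $|\psi|\le \psi_{\max}$, each remainder is at most $\gamma^{H+1}\psi_{\max}/(1-\gamma)$ in absolute value.
\item Because the two rollouts use different transitions, the value-to-go estimate at the truncation point also differs. Bounding this mismatch by $\psi_{\max}/(1-\gamma)$ contributes an extra factor of order $1/(1-\gamma)$.
\item Collecting the two remainders with this mismatch should give exactly $\epsilon_H=\gamma^{H+1}(2-\gamma)\psi_{\max}/(1-\gamma)^2$. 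I will verify the constant $(2-\gamma)$ by writing $2-\gamma = 1+(1-\gamma)$ and matching the terms directly.
\end{itemize}

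\textbf{Step 2: apply the telescoping identity on the truncated sum.} I use the same telescoping argument as in Lemma~\ref{lem:telescoping}, but on the finite sum. The difference of truncated values becomes $\sum_{t=0}^{H}\gamma^{t+1}\,\mathbb{E}_{(s_t,a_t)}\big[G_t(s_t,a_t)\big]$. Here $G_t$ is the gap between the one-step expectations of the true value-to-go under $\hat T_\phi$ and under $T$. By Lemma~\ref{lem:function_class}, $|G_t|\le \frac{\psi_{\max}}{1-\gamma}\, d_{\mathcal F}(\hat T_\phi(s,a),T(s,a))$. Summing the geometric series $\sum_{t=0}^{H}\gamma^{t+1}=(\gamma-\gamma^{H+2})/(1-\gamma)$ gives the prefactor $\psi_{\max}(\gamma-\gamma^{H+2})/(1-\gamma)^2$ appearing in $\epsilon_{trans}$. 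It then remains to show that, uniformly over the visited $(s,a)$, $d_{\mathcal F}(\hat T_\phi,T)\le \epsilon_{approx}+\epsilon_{gen}$ on a high-probability event.

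\textbf{Step 3: bound the per-step error.} I split $d_{\mathcal F}(\hat T_\phi,T)$ by the triangle inequality through the best-in-class transformer $T^\star$:
\begin{itemize}
\item \emph{Approximation term} $d_{\mathcal F}(T^\star,T)$. I invoke a transformer universal-approximation rate: error decaying like $1/(L d_{model})$ in depth times width and like $1/(H_{attn}N_{ctx})$ in attention heads times context. This gives $\epsilon_{approx}$.
\item \emph{Estimation term} $d_{\mathcal F}(\hat T_\phi,T^\star)$. I use the Lipschitz continuity of $T$ with constant $L_T$ together with a covering-number bound for the state space of dimension $d$ and a union bound over the cover. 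This gives the $L_T\sqrt{(d\log(1/\delta)+\log N)/N}$ term. The residual $\mathcal O(1/\sqrt{N_{eff}})$ comes from a standard concentration argument, such as McDiarmid's or Hoeffding's inequality, for the empirical-risk minimizer on the effective sample.
\end{itemize}
The probability $1-2\beta-4\delta$ is the union of the failure events: two $\beta$-events from the training and optimization guarantees of $\hat T_\phi$, and four $\delta$-events from concentration applied to both directions and both the approximation and estimation pieces.

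\textbf{Main obstacle.} I expect Step 3 to be the hard part, specifically getting a bound that is uniform over the state-action pairs actually visited by $\pi$, when the generalization guarantee is only in expectation under the data distribution. My first attempt will be to use the Lipschitz covering argument to upgrade the in-distribution guarantee to a supremum over a cover of the region visited by $\pi$. I will absorb the resulting distribution-shift factor into the $N_{eff}$ term, consistent with how Assumption~\ref{ass2} later reinterprets this error as density dependent.
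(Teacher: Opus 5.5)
The paper gives no proof of this lemma. It imports the result from \cite{tumay2025guardian} and uses it only as informal motivation for Assumption~\ref{ass2}, so your proposal has to stand on its own.

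Steps~1--2 are sound in structure. A finite-horizon simulation lemma with the time-dependent value-to-go (sup-norm at most $\psi_{\max}/(1-\gamma)$), Lemma~\ref{lem:function_class}, and the geometric sum give the prefactor $\psi_{\max}(\gamma-\gamma^{H+2})/(1-\gamma)^2$ of $\epsilon_{trans}$, up to an immaterial index shift. For the tail you need no ``mismatch at the truncation point''. The two remainders alone give $2\gamma^{H+1}\psi_{\max}/(1-\gamma)\le\epsilon_H$, because $2(1-\gamma)\le 2-\gamma$. Adding a further mismatch of order $\gamma^{H+1}\psi_{\max}/(1-\gamma)^2$, as you sketch, would give $3-2\gamma$ in place of $2-\gamma$ and overshoot $\epsilon_H$.

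The genuine gap is Step~3, and it is exactly the obstacle you flagged. Step~2 needs $\mathbb{E}[d_{\mathcal F}(\hat T_\phi(s_t,a_t),T(s_t,a_t))]\le\epsilon_{approx}+\epsilon_{gen}$ under the rollout distribution of an \emph{arbitrary} $\pi$ (under $\rho^\pi_{\hat T}$ in the form of Lemma~\ref{lem:telescoping}). Anything learned from $N$ samples controls the error only under the data distribution $\mu_{\mathcal D}$.
\begin{itemize}
\item Lipschitz continuity of $T$ constrains the target, not $\hat T_\phi$, which is unconstrained away from the data.
\item Your covering upgrade would additionally need $\hat T_\phi$ itself to be Lipschitz, and a passage from averaged to pointwise control at the cover centres.
\item Above all, it needs $\mathcal D$ to actually cover the region visited by $\pi$. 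Even then the error is governed by the covering radius of the data, a nonparametric quantity, not by $\sqrt{(d\log(1/\delta)+\log N)/N}$.
\end{itemize}
``Absorbing the shift into $N_{eff}$'' hides a missing coverage or concentrability hypothesis inside an undefined, $\pi$-dependent and possibly infinite term. An example of the missing hypothesis is a bounded density ratio between $\pi$'s rollouts and $\mu_{\mathcal D}$. The paper's own gloss says the error grows as $p_\theta\to 0$ because $N_{eff}$ shrinks. That concedes the term is local in $(s,a)$, so it cannot be pulled out of the expectation in Step~2 as a constant without such a hypothesis. This density-dependent behaviour is what Assumption~\ref{ass2} then postulates rather than derives.

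The explicit rates and the confidence level are not derivable along your route either.
\begin{itemize}
\item The rate $C_{trans}\min\{1/(L d_{model}),1/(H_{attn}N_{ctx})\}$ is not the output of a standard approximation theorem you can invoke. Such rates depend on the smoothness of the target and are stated in sup or $L^p$ norms, not in $d_{\mathcal F}$ between conditional distributions. So it must be posited, and as a deterministic best-in-class quantity it consumes none of the $\delta$-budget.
\item In the estimation term $d_{\mathcal F}(\hat T_\phi,T^\star)$ the true $T$ does not appear, so $L_T$ plays no role there.
\item An $N^{-1/2}$ rate would have to come from a capacity bound on the transformer class, plus a conversion (e.g.\ Pinsker) from the training loss to $d_{\mathcal F}$. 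A union bound over an $\varepsilon$-cover of a $d$-dimensional domain instead yields $d\log(1/\varepsilon)$, not $d\log(1/\delta)$.
\item Your $2\beta+4\delta$ accounting is reverse-engineered from the statement, in which $\beta$ is not even defined.
\end{itemize}
In short, the listed hypotheses (sample size, architecture, Lipschitz $T$) do not give a non-vacuous bound for every $\pi$. To make Steps~1--2 deliver it, you must state three explicit assumptions: (a) an approximation rate for the model class, (b) a high-probability in-distribution bound on $d_{\mathcal F}(\hat T_\phi,T)$, and (c) a concentrability condition linking the rollouts of $\pi$ to $\mu_{\mathcal D}$.
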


\begin{remark} Assumption \ref{ass3} is standard in constrained offline RL \cite{yu2020mopo,pmlr-v202-sun23q} and is practically satisfied when the density estimator is expressive enough and trained to convergence on $\mathcal{D}$. We cannot directly measure $\epsilon_{density}$ on our datasets since the ground truth probability distribution is unknown. However, we provide a negative log-likelihood evaluation on the test set of sparse D4RL datasets in Table \ref{tab:nll_results} in Appendix \ref{sec:more_rl}.
Theoretical verification of Assumption \ref{ass3} is not possible.
\end{remark}
\begin{lemma}[Density Estimation Error Propagation]
\label{lem:density_error}
Under Assumption \ref{ass3}, the difference between the true regularizer $u(s',a) = \tanh(\max(\tau - \log p(s',a), 0))$ and the estimated regularizer $u(\hat s',a) = \tanh(\max(\tau - \log p_\theta(s',a), 0))$ satisfies:
$$|u(\hat s',a) - u(s',a)| \leq \epsilon_{\text{density}}$$
for all $(s',a) \in \mathcal{S} \times \mathcal{A}$.
\end{lemma}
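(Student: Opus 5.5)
The bound follows by composing three $1$-Lipschitz maps. Write the regularizer as $u = g(h(\ell))$, where $\ell = \log p$ (or $\log p_\theta$) is the log-density, $h(x) = \max(\tau - x, 0)$, and $g = \tanh$. Fix $(s',a)$ and set $x = \log p_\theta(s',a)$ and $y = \log p(s',a)$. Assumption \ref{ass3} gives $|x - y| \le \epsilon_{\text{density}}$. It then suffices to show
\[
|g(h(x)) - g(h(y))| \le |x - y|.
\]

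The first step is the Lipschitz property of $h$. The map $x \mapsto \tau - x$ is an isometry. The positive part $z \mapsto \max(z,0)$ is $1$-Lipschitz, which we check by cases:
\begin{itemize}
\item if both arguments are nonnegative, or both are nonpositive, the inequality is immediate;
\item if $z_1 \ge 0 > z_2$, then $|\max(z_1,0) - \max(z_2,0)| = z_1 \le z_1 - z_2$.
\end{itemize}
Hence $|h(x) - h(y)| \le |x - y|$.

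The second step is the Lipschitz property of $\tanh$. Its derivative is $1 - \tanh^2 \in (0,1]$, so by the mean value theorem $|\tanh(z_1) - \tanh(z_2)| \le |z_1 - z_2|$ for all real $z_1, z_2$. Chaining the two steps gives $|u(\hat s',a) - u(s',a)| \le |x - y| \le \epsilon_{\text{density}}$. Since the point $(s',a)$ was arbitrary, the bound holds uniformly.

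The only subtle point is notational. In the lemma, $u(\hat s',a)$ denotes the regularizer computed from $p_\theta$, and $u(s',a)$ the one computed from the true density $p$, both evaluated at the same point $(s',a)$. It is not a comparison across different next states. Once both are read at the same point, nothing in the argument is difficult.

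One could sharpen the bound to $\min(\epsilon_{\text{density}}, 1)$, because $u \in [0,1)$. This refinement is not needed for the statement.
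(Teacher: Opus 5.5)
Your proof is correct and takes the same route as the paper. Both write the regularizer as $\tanh$ composed with $\max(\tau - \cdot, 0)$ and chain the two $1$-Lipschitz bounds to get $|u(\hat s',a) - u(s',a)| \le |\log p_\theta(s',a) - \log p(s',a)| \le \epsilon_{\text{density}}$; you additionally verify the two Lipschitz facts (the case split for the positive part and the mean value theorem for $\tanh$), which the paper only asserts.
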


\begin{proof}
Since $\tanh(\cdot)$ is 1-Lipschitz and $\max(\cdot, 0)$ is 1-Lipschitz:
\begin{align}
|u(s',a) - u^*(s',a)| &= |\tanh(\max(\tau - \log p_\theta(s',a), 0)) - \tanh(\max(\tau - \log p(s',a), 0))| \\
&\leq |\max(\tau - \log p_\theta(s',a), 0) - \max(\tau - \log p(s',a), 0)| \\
&\leq |(\tau - \log p_\theta(s',a)) - (\tau - \log p(s',a))| \\
&= |\log p(s',a) - \log p_\theta(s',a)| \\
&\leq \epsilon_{\text{density}}
\end{align}
where the last inequality follows from Assumption 3.3.
\end{proof}

\subsection{Proof of Theorem \ref{thm:1}: Conservative Value Bound with Density Error}

\begin{proof}
Starting from Lemma~\ref{lem:telescoping}:
$$\eta_M(\pi) = \eta_{\hat{M}}(\pi) - \gamma \mathbb{E}_{(s,a) \sim \rho^\pi_{\hat{T}}}[G^\pi_{\hat{M}}(s,a)] \geq \eta_{\hat{M}}(\pi) - \gamma \mathbb{E}_{(s,a) \sim \rho^\pi_{\hat{T}}}[|G^\pi_{\hat{M}}(s,a)|]$$

By Assumption \ref{ass2} and Lemma~\ref{lem:function_class}, with the true  regularizer $u(s',a)$ based on the true density $p(s',a)$:
$$|G^\pi_{\hat{M}}(s,a)| \leq c \cdot d_{\mathcal{F}}(\hat{T}(s,a), T(s,a)) \leq c\,C_{\hat{T}} \cdot \mathbb{E}_{s' \sim \hat{T}(s,a)} \big[\mathbb{E}_{s' \sim \hat{T}(s,a)} [u(s',a)]\big] + c\epsilon_{\text{approx}}.$$

We now relate the true regularizer $u(s',a)$ to the estimated regularizer $u(\hat s',a)$ based on the learned density $p_\theta(s',a)$. 
Note that $\eta_{\hat{M}}(\pi) = \mathbb{E}_{(s,a) \sim \rho^\pi_{\hat{T}}}[\hat r(s,a)]$. We add and subtract the penalty term based on the estimated regularizer:
\begin{align}
\eta_M(\pi)
&\ge \mathbb{E}_{(s,a)\sim\rho^\pi_{\hat{T}}}\!\big[\hat r(s,a)\big]
-\gamma c C_{\hat{T}}\,\mathbb{E}_{(s,a)\sim\rho^\pi_{\hat{T}}}\!\Big[\mathbb{E}_{s'\sim\hat{T}(s,a)}[u(s',a)]\Big]
-\gamma c\,\epsilon_{\text{approx}}
\\
&= \mathbb{E}_{(s,a)\sim\rho^\pi_{\hat{T}}}\!\Big[\hat r(s,a)-\lambda\,\mathbb{E}_{s'\sim\hat{T}(s,a)}[u(\hat s',a)]\Big]
+\lambda\,\mathbb{E}_{(s,a)\sim\rho^\pi_{\hat{T}}}\!\Big[\mathbb{E}_{s'\sim\hat{T}(s,a)}[u(\hat s',a)]\Big]
\notag\\
&\quad
-\gamma c C_{\hat{T}}\,\mathbb{E}_{(s,a)\sim\rho^\pi_{\hat{T}}}\!\Big[\mathbb{E}_{s'\sim\hat{T}(s,a)}[u(s',a)]\Big]
-\gamma c\,\epsilon_{\text{approx}}
\label{eq:your_bound}
\end{align}

By Lemma ~\ref{lem:density_error}, $|u(s',a) - u(\hat s',a)| \leq \epsilon_{\text{density}}$ for all $(s',a)$. Therefore:

\begin{align}
\mathbb{E}_{(s,a)\sim\rho^\pi_{\hat{T}}}\big[\mathbb{E}_{s' \sim \hat{T}(s,a)} [u(s',a)\big] &\leq \mathbb{E}_{(s,a)\sim\rho^\pi_{\hat{T}}}\big[\mathbb{E}_{s' \sim \hat{T}(s,a)} [u(\hat s',a)\big] + \epsilon_{\text{density}}
\end{align}

Substituting this into our bound:
\begin{align}
\eta_M(\pi) &\geq \mathbb{E}_{(s,a)\sim\rho^\pi_{\hat{T}}}[\tilde{r}(s,a)] + \lambda\mathbb{E}_{(s,a)\sim\rho^\pi_{\hat{T}}}\big[\mathbb{E}_{s' \sim \hat{T}(s,a)} [u(\hat s',a)\big] \\
&- \gamma c C_{\hat{T}} \left(\mathbb{E}_{(s,a)\sim\rho^\pi_{\hat{T}}}\big[\mathbb{E}_{s' \sim \hat{T}(s,a)} [u(\hat s',a)\big] + \epsilon_{\text{density}}\right) - \gamma c \epsilon_{\text{approx}} \notag\\
&= \mathbb{E}_{(s,a)\sim\rho^\pi_{\hat{T}}}[\tilde{r}(s,a)] + (\lambda - \gamma c C_{\hat{T}})\mathbb{E}_{(s,a)\sim\rho^\pi_{\hat{T}}}\big[\mathbb{E}_{s' \sim \hat{T}(s,a)} [u(\hat s',a)\big] \notag\\
&- \gamma c C_{\hat{T}} \epsilon_{\text{density}} - \gamma c\epsilon_{\text{approx}}
\end{align}

Setting $\lambda = \gamma c C_{\hat{T}}$, the middle term vanishes:
\begin{align}
\eta_M(\pi) &\geq \mathbb{E}_{(s,a) \sim \rho^\pi_{\hat{T}}}[\tilde{r}(s,a)] - \gamma c C_{\hat{T}} \epsilon_{\text{density}} - \gamma c \epsilon_{\text{approx}}
\end{align}

Since $\mathbb{E}_{(s,a)\sim\rho^\pi_{\hat{T}}}[\tilde{r}(s,a)] = \eta_{\tilde{M}}(\pi)$ (using normalized occupancy measure) and $\lambda = \gamma\, c\, C_{\hat{T}}$:
\begin{align}
\eta_M(\pi) &\geq \eta_{\tilde{M}}(\pi) - \gamma c \epsilon_{\text{approx}} - \lambda \epsilon_{\text{density}}
\end{align}
where $\tilde{M} = (\mathcal{S}, \mathcal{A}, \hat{T}, \tilde{r}, \mu_0, \gamma)$ is the density-regularized MDP.

\end{proof}

\subsection{Proof of Theorem \ref{thm:thm2}: Optimality Gap with Density Error}

\begin{proof}
We establish a lower bound on $\eta_M(\hat{\pi})$ by relating it to the optimal policy within a constrained class.

From Lemma~\ref{lem:telescoping} and Lemma~\ref{lem:function_class}, for any policy $\pi$:
\begin{align}
|\eta_{\hat{M}}(\pi) - \eta_M(\pi)| &\leq \gamma \mathbb{E}_{(s,a)\sim\rho^\pi_{\hat{T}}}[|G^\pi_{\hat{M}}(s,a)|] \\
&\leq \gamma c \mathbb{E}_{(s,a)\sim\rho^\pi_{\hat{T}}}[d_\mathcal{F}(\hat{T}(s,a), T(s,a))]
\end{align}

By Assumption \ref{ass2} with the true regularizer $u(s',a)$:
\begin{align}
|\eta_{\hat{M}}(\pi) - \eta_M(\pi)| &\leq \gamma c C_{\hat{T}} \mathbb{E}_{(s,a)\sim\rho^\pi_{\hat{T}}}\big[\mathbb{E}_{s' \sim \hat{T}(s,a)} [u(s',a)\big]+ \gamma c \epsilon_{\text{approx}} \label{eq:two_sided}
\end{align}

From equation~\eqref{eq:two_sided} applied to $\hat{\pi}$:
\begin{align}
\eta_M(\hat{\pi}) &\geq \eta_{\hat{M}}(\hat{\pi}) - \gamma c C_{\hat{T}} \mathbb{E}_{(s,a)\sim\rho^{\hat{\pi}}_{\hat{T}}}\big[\mathbb{E}_{s' \sim \hat{T}(s,a)} [u(s',a)]\big] - \gamma c \epsilon_{\text{approx}} \label{eq:pi_hat_lower}
\end{align}

By definition, $\hat{\pi}$ maximizes $\mathbb{E}_{(s,a)\sim\rho^\pi_{\hat{T}}}\big[\hat r(s,a) - \lambda\mathbb{E}_{s' \sim \hat{T}(s,a)} [u(\hat s',a)]\big]$. Therefore, for any policy $\pi$:
\begin{align}
\mathbb{E}_{(s,a)\sim\rho^{\hat{\pi}}_{\hat{T}}}\big[\hat r(s,a) - \lambda\mathbb{E}_{s' \sim \hat{T}(s,a)} [u(\hat s',a)]\big] &\geq \mathbb{E}_{(s,a)\sim\rho^\pi_{\hat{T}}}\big[\hat r(s,a) - \lambda\mathbb{E}_{s' \sim \hat{T}(s,a)} [u(\hat s',a)]\big]
\end{align}

Rearranging:
\begin{align}
\eta_{\hat{M}}(\hat{\pi}) &= \mathbb{E}_{(s,a)\sim\rho^{\hat{\pi}}_{\hat{T}}}[\hat r(s,a)] \\
&\geq \mathbb{E}_{(s,a)\sim\rho^\pi_{\hat{T}}}[\hat r(s,a)] - \lambda\mathbb{E}_{(s,a)\sim\rho^\pi_{\hat{T}}}\big[\mathbb{E}_{s' \sim \hat{T}(s,a)} [u(\hat s',a)]\big] + \lambda\mathbb{E}_{(s,a)\sim\rho^{\hat{\pi}}_{\hat{T}}}\big[\mathbb{E}_{s' \sim \hat{T}(s,a)} [u(\hat s',a)]\big] \label{eq:optimality}
\end{align}

By Lemma~\ref{lem:density_error}, $|u(\hat s',a) - u(s',a)| \leq \epsilon_{\text{density}}$ for all $(s,a)$. This gives us two inequalities:
\begin{align}
u(\hat s',a) &\leq u(s',a) + \epsilon_{\text{density}} \label{eq:u_hat_upper}\\
u(\hat s',a) &\geq u(s',a) - \epsilon_{\text{density}} \label{eq:u_hat_lower}
\end{align}

From \eqref{eq:u_hat_lower}:
\begin{align}
\mathbb{E}_{(s,a)\sim\rho^{\hat{\pi}}_{\hat{T}}}\big[\mathbb{E}_{s' \sim \hat{T}(s,a)} [u(\hat s',a)]\big]&\geq \mathbb{E}_{(s,a)\sim\rho^{\hat{\pi}}_{\hat{T}}}\big[\mathbb{E}_{s' \sim \hat{T}(s,a)} [u(s',a)]\big] - \epsilon_{\text{density}} \label{eq:pi_hat_u_lower}
\end{align}

From \eqref{eq:u_hat_upper}:
\begin{align}
\mathbb{E}_{(s,a)\sim\rho^\pi_{\hat{T}}}\big[\mathbb{E}_{s' \sim \hat{T}(s,a)} [u(\hat s',a)]\big] &\leq \mathbb{E}_{(s,a)\sim\rho^\pi_{\hat{T}}}\big[\mathbb{E}_{s' \sim \hat{T}(s,a)} [u(s',a)]\big] + \epsilon_{\text{density}} \label{eq:pi_u_upper}
\end{align}

Using \eqref{eq:pi_hat_u_lower} and \eqref{eq:pi_u_upper} in equation~\eqref{eq:optimality}:
\begin{align}
\eta_{\hat{M}}(\hat{\pi}) &\geq \mathbb{E}_{(s,a)\sim\rho^\pi_{\hat{T}}}[\hat r(s,a)] - \lambda\left(\mathbb{E}_{(s,a)\sim\rho^\pi_{\hat{T}}}\big[\mathbb{E}_{s' \sim \hat{T}(s,a)} [u(s',a)]\big]+ \epsilon_{\text{density}}\right) \notag\\
&\quad + \lambda\left(\mathbb{E}_{(s,a)\sim\rho^{\hat{\pi}}_{\hat{T}}}\big[\mathbb{E}_{s' \sim \hat{T}(s,a)} [u(s',a)]\big] - \epsilon_{\text{density}}\right) \\
&= \mathbb{E}_{(s,a)\sim\rho^\pi_{\hat{T}}}[\hat r(s,a)] - \lambda\mathbb{E}_{(s,a)\sim\rho^\pi_{\hat{T}}}\big[\mathbb{E}_{s' \sim \hat{T}(s,a)} [u(s',a)]\big]  \notag\\ 
&\hspace{6em}\quad +\lambda\mathbb{E}_{(s,a)\sim\rho^{\hat{\pi}}_{\hat{T}}}\big[\mathbb{E}_{s' \sim \hat{T}(s,a)} [u(s',a)]\big] -2\lambda\epsilon_{\text{density}} \label{eq:with_true_u}
\end{align}

Substituting \eqref{eq:with_true_u} into \eqref{eq:pi_hat_lower}:
\begin{align}
\eta_M(\hat{\pi}) &\geq \mathbb{E}_{(s,a)\sim\rho^\pi_{\hat{T}}}[\hat r(s,a)] - \lambda\mathbb{E}_{(s,a)\sim\rho^\pi_{\hat{T}}}\big[\mathbb{E}_{s' \sim \hat{T}(s,a)} [u(s',a)]\big] + \lambda\mathbb{E}_{(s,a)\sim\rho^{\hat{\pi}}_{\hat{T}}}\big[\mathbb{E}_{s' \sim \hat{T}(s,a)} [u(s',a)]\big] \notag\\
&\quad - 2\lambda\epsilon_{\text{density}} - \gamma c C_{\hat{T}} \mathbb{E}_{(s,a)\sim\rho^{\hat{\pi}}_{\hat{T}}}\big[\mathbb{E}_{s' \sim \hat{T}(s,a)} [u(s',a)]\big] - \gamma c \epsilon_{\text{approx}}
\end{align}

Setting $\lambda = \gamma c C_{\hat{T}}$, the terms involving $\mathbb{E}_{(s,a)\sim\rho^{\hat{\pi}}_{\hat{T}}}\big[\mathbb{E}_{s' \sim \hat{T}(s,a)} [u(s',a)]\big]$ cancel:
\begin{align}
\eta_M(\hat{\pi}) &\geq \mathbb{E}_{(s,a)\sim\rho^\pi_{\hat{T}}}[\hat r(s,a)] - \lambda\mathbb{E}_{(s,a)\sim\rho^\pi_{\hat{T}}}\big[\mathbb{E}_{s' \sim \hat{T}(s,a)} [u(s',a)]\big] - 2\lambda\epsilon_{\text{density}} - \gamma c \epsilon_{\text{approx}} \label{eq:before_conversion}
\end{align}

Note that $\mathbb{E}_{(s,a)\sim\rho^\pi_{\hat{T}}}[r(s,a) - \lambda u(s,a)]$ represents the expected return in a hypothetical MDP with reward $r(s,a) - \lambda u(s,a)$ and dynamics $\hat{T}$. While we actually optimize $\tilde{M}$ with reward $\tilde{r}(s,a) = \hat r(s,a) - \lambda\hat{u}(s,a)$ using the estimated regularizer $\hat{u}$, the bound in \eqref{eq:before_conversion} shows the relationship when evaluated with the true regularizer $u$.

We need to relate $\mathbb{E}_{(s,a)\sim\rho^\pi_{\hat{T}}}[\hat r(s,a)]$ to $\eta_M(\pi)$. From equation~\eqref{eq:two_sided}:
\begin{align}
\mathbb{E}_{(s,a)\sim\rho^\pi_{\hat{T}}}[\hat r(s,a)] = \eta_{\hat{M}}(\pi) &\geq \eta_M(\pi) - \gamma c C_{\hat{T}} \mathbb{E}_{(s,a)\sim\rho^\pi_{\hat{T}}}\big[\mathbb{E}_{s' \sim \hat{T}(s,a)} [u(s',a)]\big] - \gamma c \epsilon_{\text{approx}}
\end{align}

For any policy $\pi$ satisfying $\mathbb{E}_{(s,a)\sim\rho^\pi_{\hat{T}}}\big[\mathbb{E}_{s' \sim \hat{T}(s,a)} [u(s',a)]\big]\leq \delta + \epsilon_{\text{density}}$, substituting into \eqref{eq:before_conversion}:
\begin{align}
\eta_M(\hat{\pi}) &\geq \eta_M(\pi) - \gamma c C_{\hat{T}} \mathbb{E}_{(s,a)\sim\rho^\pi_{\hat{T}}}\big[\mathbb{E}_{s' \sim \hat{T}(s,a)} [u(s',a)]\big] - \gamma c \epsilon_{\text{approx}} \notag \\
&\quad - \lambda\mathbb{E}_{(s,a)\sim\rho^\pi_{\hat{T}}}\big[\mathbb{E}_{s' \sim \hat{T}(s,a)} [u(s',a)]\big]- 2\lambda\epsilon_{\text{density}} - \gamma c \epsilon_{\text{approx}} \\
&= \eta_M(\pi) - (\lambda + \gamma c C_{\hat{T}})\mathbb{E}_{(s,a)\sim\rho^\pi_{\hat{T}}}\big[\mathbb{E}_{s' \sim \hat{T}(s,a)} [u(s',a)]\big] - 2\lambda\epsilon_{\text{density}} - 2\gamma c \epsilon_{\text{approx}}
\end{align}

Since $\lambda = \gamma c C_{\hat{T}}$:
\begin{align}
\eta_M(\hat{\pi}) &\geq \eta_M(\pi) - 2\lambda\mathbb{E}_{(s,a)\sim\rho^\pi_{\hat{T}}}\big[\mathbb{E}_{s' \sim \hat{T}(s,a)} [u(s',a)]\big] - 2\lambda\epsilon_{\text{density}} - 2\gamma c \epsilon_{\text{approx}} \\
&\geq \eta_M(\pi) - 2\lambda(\delta + \epsilon_{\text{density}}) - 2\lambda\epsilon_{\text{density}} - 2\gamma c \epsilon_{\text{approx}} \\
&= \eta_M(\pi) - 2\lambda\delta - 4\lambda\epsilon_{\text{density}} - 2\gamma c \epsilon_{\text{approx}}
\end{align}

Since this holds for all policies $\pi$ satisfying $\mathbb{E}[u] \leq \delta + \epsilon_{\text{density}}$:
\begin{align}
\eta_M(\hat{\pi}) &\geq \max_{\pi:\mathbb{E}[u] \leq \delta+\epsilon_{\text{density}}} \eta_M(\pi) - 2\lambda\delta - 4\lambda\epsilon_{\text{density}} - 2\gamma c \epsilon_{\text{approx}}
\end{align}

This can be rewritten as:
\begin{align}
\eta_M(\hat{\pi}) &\geq \max_{\pi:\mathbb{E}[u] \leq \delta+\epsilon_{\text{density}}} \eta_M(\pi) - 2\lambda(\delta + 2\epsilon_{\text{density}}) - 2\gamma c \epsilon_{\text{approx}}
\end{align}


\end{proof}

\clearpage
\section{Ablations}
\label{sec:ablations}
\begin{figure*}[!htp]
    \centering
    \begin{subfigure}{\linewidth}
        \includegraphics[width=\linewidth]{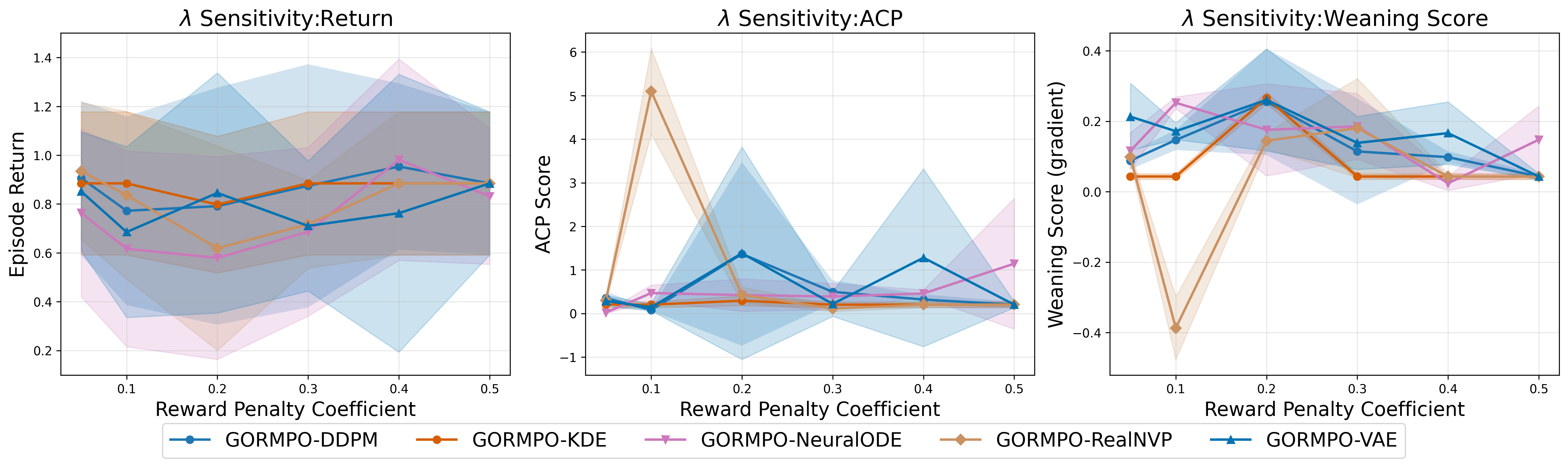}
        \caption{\textbf{Medical dataset.} All scores are stable except for $\lambda=0.1$ in ACP and WS.}
        \label{fig:lambda_medical}
    \end{subfigure}
    \begin{subfigure}{\linewidth}
        \includegraphics[width=\linewidth]{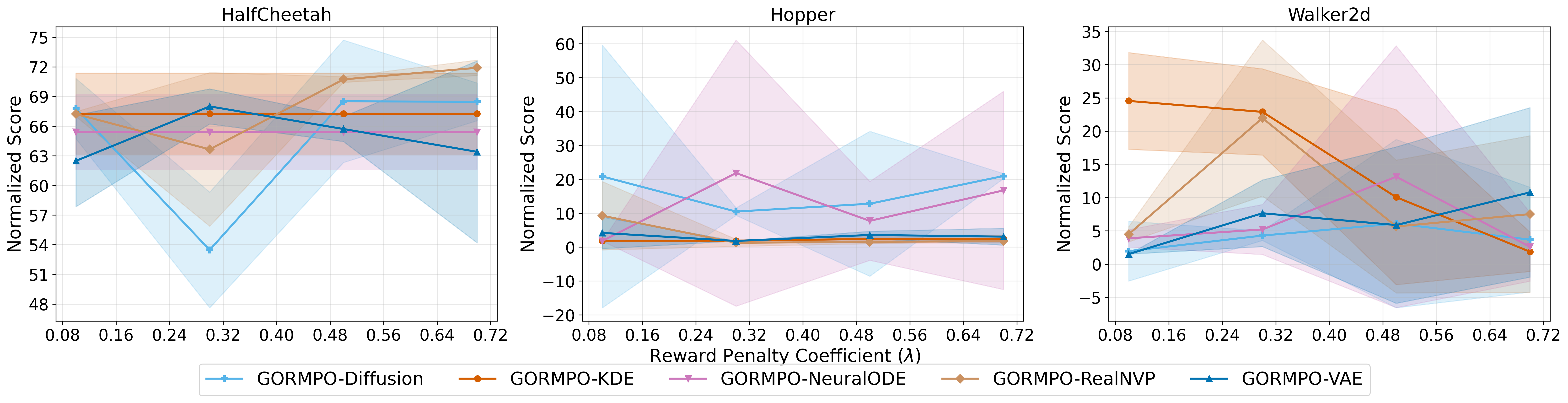}
        \caption{\textbf{Sparse D4RL datasets.}}
        \label{fig:lambda_d4rl}
    \end{subfigure}
    \caption{$\lambda$ sensitivity plots averaged over 2 seeds.}
    \label{fig:lambda_sensitivity}
\end{figure*}

\begin{figure*}[!htp]
    \centering
    \includegraphics[width=1\linewidth]{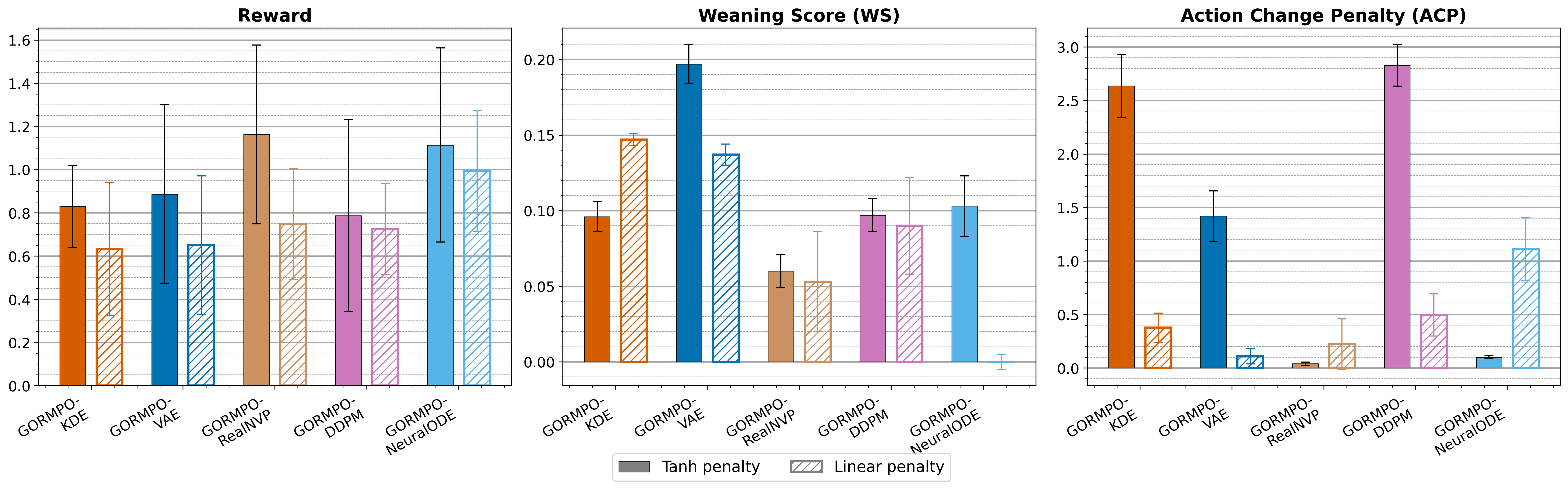}
    \caption{\textbf{Linear (non-saturating) penalty ablation on the medical dataset.} GORMPO with tanh penalty has large gains on reward and WS over the linear penalty models. }
    \label{fig:placeholder}
\end{figure*}

\clearpage
\section{Additional Results}

\label{sec:more_rl}

\begin{table}[h]                                                                                                     \tiny           
  \centering  
  \caption{\textbf{Test Negative Log-Likelihood (NLL) scores (lower the better) for all density estimators on all datasets.} All models except for diffusion demonstrate reasonable scores, which verify their reliability as density-based guardians. The reason for diffusion having the highest NLL is that we use a Gaussian approximation in computing log-likelihoods via Monte Carlo sampling.}  
  
  \label{tab:nll_results}                                                                                                                 
  \begin{tabular}{lccccc}
  \toprule                                                                                                                                
  \textbf{Dataset} & \textbf{KDE} & \textbf{VAE} & \textbf{RealNVP} & \textbf{Diffusion} & \textbf{NeuralODE} \\
  \midrule                                                                                                                                
  halfcheetah-m-e-sparse & 2.948  & 5.199  & -6.559   & 71.497       & 32.89 \\
  hopper-m-e-sparse      & 0.405  & 5.823  & 258.420  & 355134.44    & -7.96 \\                                                                         
  walker2d-m-e-sparse    & 0.939  & 2.863  & -21.292  & 750839.06    & -17.98 \\               \midrule
  medical dataset    & 4.299   & 77.892   & 61.949 & 30220.5    &  80.398\\   
  \bottomrule                                                                                                                    
  \end{tabular}                                                                                                                      
  \end{table}

\begin{table}[ht]
    \tiny
    \setlength{\tabcolsep}{2pt}
  \centering
  \caption{GORMPO evaluation results on the medical dataset (mean $\pm$ std) on the same 5 seeds for 1000 episodes to ensure statistical significance. 
  }
  \label{tab:reward}
  
  \label{tab:abiomed_results}
  \begin{tabular}{lccc|cccccc}
  \toprule
  \makecell{\textbf{Medical} \\ \textbf{dataset}} &
\texttt{SPOT} &
\texttt{MOBILE} &
\texttt{MBPO} &
\makecell{\texttt{GORMPO} \\ \texttt{KDE}}  &
\makecell{\texttt{GORMPO} \\ \texttt{VAE}}  &
\makecell{\texttt{GORMPO} \\ \texttt{RealNVP}}  &

\makecell{\texttt{GORMPO} \\ \texttt{DDPM}}  &
\makecell{\texttt{GORMPO} \\ \texttt{NeuralODE}}  \\
\midrule
  Return &  $0.419 \pm 0.091 $  &   $0.637 \pm 0.125$      & $0.644 \pm 0.131$ & $0.586 \pm 0.125$ & $0.377 \pm 0.130$ & $\mathbf{0.739 \pm 0.118}$ & $0.561 \pm 0.142$ & $\mathbf{0.757 \pm 0.135}$ \\
  WS  & $ 0.196 \pm 0.009 $  & $-0.440 \pm 0.006$ & $0.116 \pm 0.004$ & $0.215 \pm 0.006$ & $0.216 \pm 0.009$ & $\mathbf{0.222 \pm 0.009}$ & $0.145 \pm 0.007$ & $0.036\pm 0.006$ \\
  ACP       & $\mathbf{0.036 \pm 0.015}$&  $3.479 \pm 0.084$  & $0.616 \pm 0.064$ & $1.194 \pm 0.050$ & $0.334 \pm 0.036$ & $1.861 \pm 0.074$ & $0.434 \pm 0.039$ & $0.322\pm 0.043$ \\ 
  \bottomrule
  \end{tabular}
  \vspace{0.2em}
  \end{table} 

\begin{table*}[t]
\centering
\caption{Performance comparison of MBPO-based GORMPO including NeuralODE as density estimator to offline RL baselines in average normalized reward $\pm$ standard deviation over 3 seeds on sparse D4RL medium-expert datasets. Best score is bolded; best GORMPO score is underlined. m=medium, e=expert. 
}
\label{tab:reward_sparse2}
\tiny
\setlength{\tabcolsep}{3pt}
\begin{tabular}{lccc|ccccc}
\toprule
{Dataset} &
{\texttt{SPOT}} &
{\texttt{MOBILE}} &
{\texttt{MBPO}} &
{\texttt{\shortstack{GORMPO\\-KDE}}} &
{\texttt{\shortstack{GORMPO\\-VAE}}} &
{\texttt{\shortstack{GORMPO\\-RealNVP}}} &
{\texttt{\shortstack{GORMPO\\-DDPM}}} &
{\texttt{\shortstack{GORMPO\\-NeuralODE}}} \\
\midrule
ha-m-e-sparse & $76.4 \pm 3.22$ & $\mathbf{99.4 \pm0.48} $ & $63.7 \pm 8.23$ & $78.4 \pm 16.9$ & $\underline{89.9 \pm 1.50}$ & $86.0 \pm 8.34$ & $85.7 \pm 4.18$ & $80.3 \pm 13.7$ \\
ho-m-e-sparse      & $83.8 \pm 23.3$ & $\mathbf{95.2 \pm 24.5}$  & $4.81 \pm 3.18$ & $8.91 \pm 12.89$ & $2.32 \pm 0.68$ & $10.0 \pm 9.02$ & \underline{$14.1 \pm 8.66$}& $5.05 \pm 3.85$ \\
wa-m-e-sparse     & $113.5\pm 0.54$    & $\mathbf{116 \pm 2.44}$& $3.19 \pm 1.21$  & $6.31 \pm 1.26$  & $8.20 \pm 6.04 $  & $\underline{10.1 \pm 5.34 }$  & $5.23 \pm 0.53$ & $3.44 \pm 2.19$ \\
\bottomrule
\end{tabular}
\vspace{-2em}
\end{table*}

  \begin{table}[htbp]
\centering
\caption{\textbf{Performance comparison of MOBILE and density-regularized MOBILE variants} with VAE, RealNVP, and DDPM across medium datasets D4RL datasets averaged over 3 seeds. Our density regularizers further increase MOBILE's reward scores. We measure MOBILE's performance on sparse datasets, and medium dataset rewards are imported from MOBILE \cite{pmlr-v202-sun23q}. }
\label{tab:mobile_comparison}
\small
\setlength{\tabcolsep}{3pt}
\begin{tabular}{lc|ccc}
\toprule
\textbf{Datasets} & \texttt{MOBILE} & \texttt{MOBILE-VAE} & \texttt{MOBILE-RealNVP} & \texttt{MOBILE-DDPM} \\
\midrule

halfcheetah-m
&  $74.6 $ 
&  $\mathbf{76.5 \pm 0.686}$ 
&  $75.1\pm 0.99$ 
&  $73.36 \pm 1.79$ \\

hopper-m 
&  $\mathbf{107.0 }$ 
& $ 104.0 \pm 0.905 $ 
& $103.0\pm0.068 $ 
& $93.6  \pm    16.4 $ \\

walker2d-m
&  $87.7 $ 
&  $88.2\pm 2.08$ 
& $88.0 \pm 1.45$ 
& $ \mathbf{91.1   \pm    0.73}$ \\
\bottomrule
\end{tabular}
\vspace{1em}
\end{table}

\begin{figure*}[htb]
    \centering
    \includegraphics[width=1\linewidth]{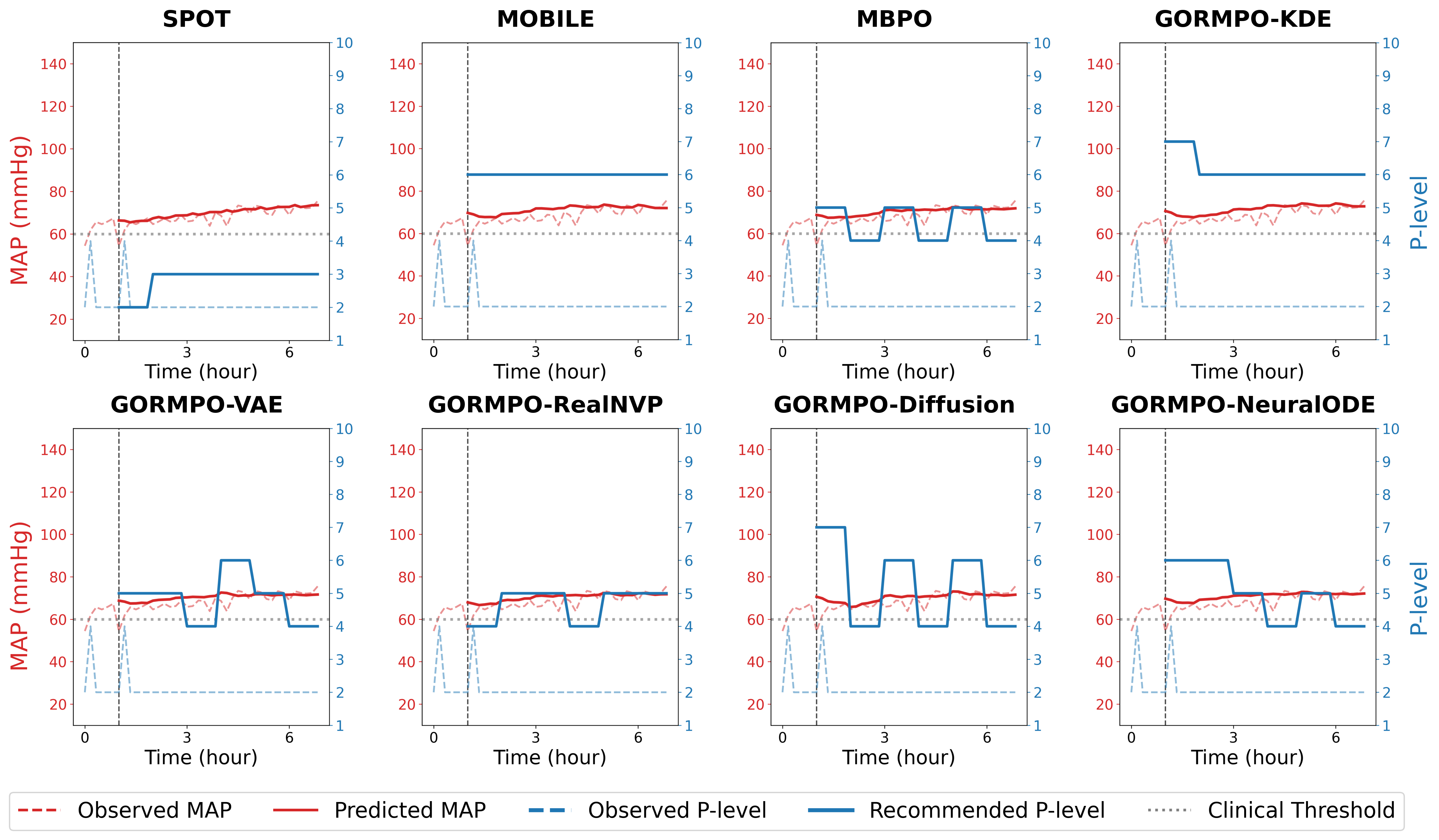}
    \caption{More visualizations of the OOD detection performance on the medical dataset. We expect the policies to start weaning off the P-level (solid blue) as we observe a stable (above the clinical threshold and stationary) mean arterial pressure (MAP) signal. GORMPO-VAE and GORMPO-NeuralODE display the most successful weaning behavior with stable patient trajectory, as supported by their weaning scores. GORMPO-RealNVP shows low magnitude in action change, resulting in 0 WS in this sample. Although MOBILE starts from an offset P-level of 6, it fails at weaning later.}
    \label{fig:abiomed_more_plots}
\end{figure*}

\begin{table*}[!htb]
\centering
\caption{Reward $\pm$ one standard deviation of MBPO-based GORMPO on D4RL medium, medium-replay, and medium-expert datasets averaged over 3 seeds. We bold the best score and underline the second-best score. SPOT \cite{wu2022spot} and MOBILE \cite{pmlr-v202-sun23q} results are transferred from the reported values. The divergence from the reported results in \cite{yu2020mopo} for MBPO on halfcheetah-m-r, hopper-m-e, and walker2d-m-e datasets is caused by the limited number of seed averaging due to computational expense. }
\label{tab:reward_all}
\tiny
\setlength{\tabcolsep}{3pt}
\begin{tabular}{lccc|ccccc}
\toprule
\textbf{Dataset} &
\texttt{SPOT} &
\texttt{MOBILE} &
\texttt{MBPO} &
\texttt{GORMPO-KDE} &
\texttt{GORMPO-VAE} &
\texttt{GORMPO-RealNVP} &
\texttt{GORMPO-DDPM} &
\texttt{GORMPO-NeuralODE} \\
\midrule
halfcheetah-m-v2   & $58.2$ & $\mathbf{74.6}$ & $52.85 \pm 11.72$ & $46.33 \pm 2.76$ & \underline{$67.34 \pm 0.65$} & $ 64.08 \pm 5.58$ & $10.65\pm 8.81$  & $56.01 \pm 8.00$  \\
hopper-m-v2   &\underline{$83.4$} & $\mathbf{106.6}$ & $10.29 \pm 3.57 $ & $3.73 \pm 2.42$ & $12.90 \pm9.24$ & $11.38 \pm 7.81$ & $16.47\pm 9.79$ & $14.14 \pm 13.41$ \\
walker2d-m-v2   & $\mathbf{88.1}$ &\underline{$ 87.7$} & $3.43 \pm 2.86$ & $3.06 \pm 1.72$&$6.36 \pm 1.24$& $ 1.06 \pm 2.36$  & $2.14\pm 2.88$ & $-0.11 \pm 0.36$ \\
halfcheetah-m-r-v2 & \underline{$52.2$} & $\mathbf{71.7}$  & $0.79 \pm 1.07$ & $19.89 \pm 25.81$ & $39.66 \pm 23.58$  & $9.56 \pm 11.41$  & $18.11\pm 24.41$  & $0.65 \pm 0.31$ \\
hopper-m-r-v2 & \underline{$100.2$}& $\mathbf{103.9}$ & $37.63 \pm 4.70$  & $48.23 \pm 34.21$& $ 22.06 \pm 8.56$  & $67.60 \pm 24.61$  &  $25.41\pm 5.46$  & $12.93 \pm 12.91$ \\
walker2d-m-r-v2 & $\mathbf{91.6}$ & \underline{$89.9$}  & $ 2.83 \pm 2.99$&  $6.28 \pm 4.94$&$ 3.01 \pm4.13$ & $6.67 \pm 5.97$  & $0.46\pm 0.87$    & $9.91 \pm 0.44$ \\
halfcheetah-m-e-v2 & \underline{$86.2$} & $\mathbf{108.2}$ & $79.92 \pm 3.64$ & $83.96 \pm 10.66$ & \underline{$81.28 \pm 6.32$}  & $75.54 \pm 8.45$  & $72.04\pm 6.68$ & $61.45 \pm 7.02$ \\
hopper-m-e-v2 & \underline{$72.3$} & $\mathbf{112.6} $& $13.11 \pm 8.54$ & $1.84 \pm 0.01$ & $2.31 \pm 0.33$  & $2.98 \pm 1.61$ & $1.83\pm 0.96$  & $5.88 \pm 1.28$\\
walker2d-m-e-v2 & \underline{$112.0$}& $\mathbf{115.2}$ & $-0.16 \pm 0.02$ &  $-0.16 \pm 0.02$ & $ -0.16 \pm 0.14$  & $ -0.29 \pm 0.02$  & $-0.17\pm 0.04$  & $-0.06 \pm 0.07$\\
\bottomrule
\bottomrule
\end{tabular}
\end{table*}

\clearpage
\section{GORMPO Hyperparameters and Model Architectures}
\label{sec:model_params}

\paragraph{Computing Resources.} Models are trained on NVIDIA A100 80GB and NVIDIA GeForce RTX 4090 GPUs. 

\begin{table}[!h]
\centering
    \caption{Elapsed time during GORMPO training on halfcheetah-medium-expert dataset.}
    \label{tab:placeholder}
\tiny
\begin{tabular}{cccccc}
\toprule
    \textbf{Time Elapsed (m) }   &  \texttt{GORMPO-KDE} &
\texttt{GORMPO-VAE} &
\texttt{GORMPO-RealNVP} &
\texttt{GORMPO-DDPM} &
\texttt{GORMPO-NeuralODE} \\
\midrule
      halfcheetah-m-e   &439 $\pm$  27.3& 478.0  $\pm$ 29.2 & 614.0  $\pm$169.9 & 379.6$\pm$16.4 & 573.8 $\pm$26.6\\
            \bottomrule
    \end{tabular}

\end{table}

\subsection{MBPO Parameters}
\label{sec:mbpo_params}
Following prior work highlighting that reliable policy selection often requires \emph{online} evaluation rather than offline policy evaluation \cite{DBLP:conf/nips/BrandfonbrenerW21, pmlr-v162-kurenkov22a}, we tune the penalty coefficient $\lambda$ using a small online interaction budget of 6 hyperparameters. Concretely, we sweep $\lambda$ with online rollouts for a single seed, then fix the best-performing value and train/evaluate across three additional seeds for 100 episodes each to limit interaction and reduce overfitting. This protocol mirrors common practice in offline RL, where hyperparameters are chosen under an explicit online evaluation budget \cite{pmlr-v162-kurenkov22a} while maintaining a conservative use of environment interaction. See Table \ref{tab:lambda} for the best $\lambda$.

\begin{table}[!h]
    \centering
    \caption{Finetuned $\lambda$ values for each sparse dataset.m=medium, r=replay, e=expert.}
    \label{tab:lambda}
    \small
    \setlength{\tabcolsep}{3pt}
    \begin{tabular}{cccccc}
    \toprule
        \textbf{Models} &\texttt{GORMPO-KDE} &
\texttt{GORMPO-VAE} &
\texttt{GORMPO-RealNVP} &
\texttt{GORMPO-DDPM} &
\texttt{GORMPO-NeuralODE} \\
\midrule
        medical dataset  &0.2&0.1&0.2& 0.4& 0.2\\
        \midrule
         halfcheetah-m-e-sparse &0.1 &0.1&0.8& 0.3& 0.1\\
          hopper-m-e-sparse& 0.05&0.3&0.8&0.05& 0.5\\
         walker2d-m-e-sparse&0.5 &0.5&0.8&0.05&0.05\\

          halfcheetah-m & 0.05 &0.1&0.8& 0.3& 0.1\\
          hopper-m& 0.05&0.3&0.8&0.05& 0.5\\
         walker2d-m&0.8&0.5&0.05&0.05&0.05\\

          halfcheetah-m-r & 0.05 &0.1&0.8& 0.3& 0.1\\
         hopper-m-r& 0.5&0.5&0.5&0.5& 0.5\\
         walker2d-m-r&0.5&0.5&0.5&0.5&0.5\\

          halfcheetah-m-e & 0.05 &0.1&0.8& 0.3& 0.1\\
          hopper-m-e& 0.5&0.5&0.5&0.5& 0.5\\
         walker2d-m-e&0.5 &0.5&0.8&0.5&0.5\\
         \bottomrule
    \end{tabular}
\end{table}
We directly use the same hyperparameters for MBPO training in GORMPO as reported in MOPO \cite{yu2020mopo} except for the rollout length of the transition model. We tuned this parameter to achieve stable Q-values during training, which is essential for strong policy optimization. We find  5, 5, 3, 5 for the medical dataset; halfcheetah, hopper, and walker2d medium-expert-sparse datasets, respectively, as providing the strongest training. See Table \ref{tab:mbpo_param} for base parameters of MBPO for all datasets. \textbf{We also give the same tuning budget of 6 hyperparameters to SPOT \cite{wu2022spot} and MOBILE's penalty parameters \cite{pmlr-v202-sun23q}.} For MOBILE-based GORMPO experiments, we use the same hyperparameters found in Table \ref{tab:lambda} for each sparse D4RL dataset. 

\begin{table*}[!t]
\centering
\caption{Base hyperparameters of our GORMPO implementation.}
\label{tab:mbpo_param}
\begin{tabular}{p{0.45\linewidth} c}
\toprule
\textbf{Parameters} & \textbf{Value} \\
\midrule
Actor learning rate & $3\times10^{-4}$ \\
Critic learning rate & $3\times10^{-4}$ \\
Discount factor ($\gamma$) & $0.99$ \\
Target network update coefficient ($\tau$) & $0.005$ \\
Target entropy (often $-\text{action dimension}$) & $-1$ \\
Temperature optimizer learning rate & $3\times10^{-4}$ \\
Dynamics model learning rate & $1 \times10^{-3}$ \\
Dynamics ensemble size & $7$ \\
Holdout ratio & $0.2$ \\
Training epochs & $100$ \\
Steps per epoch & $1000$ \\
Evaluation episodes & $1000$ \\
Mini-batch size & $256$ \\
Model rollout horizon & $5$ \\
Rollout batch size & $10000$ \\
Rollout frequency & $1000$ \\
Real-to-model data sampling ratio & $0.05$ \\
\bottomrule
\end{tabular}
\end{table*}

\subsection{Density Estimator Parameters.}
\label{sec:density_params}

In this section, we include hyperparameters and training settings of different density estimator models.
All density estimators use the 1\% percentile of the estimated validation distribution as the model threshold. The input dimension of each model for the medical, halfcheetah, walker2d, and hopper tasks is 73, 23, 23, 14, respectively. 
\paragraph{Kernel Density Estimator.} We employ the FAISS \cite{Johnson2019FaissTBD} k-nearest neighbor model with GPU acceleration. See Table \ref{tab:kde} for parameters.
 \begin{table}[!h]                                                                                   
  \centering     
  \caption{KDE Model and Training Parameters}  
  \label{tab:kde} 
  
  \begin{tabular}{ll}                                                                                
  \toprule                                                                                           
  \textbf{Parameter} & \textbf{Value} \\                                                       
  \midrule                                                                                             
  Kernel Function & Gaussian \\                                                                       
  Bandwidth & 1.0 \\                                                                                  
  Number of Neighbors (k) & 100 \\                        
  \bottomrule                                                                                          
  \end{tabular}                                                                                         
  \end{table}
  
\paragraph{Variational Autoencoder.} See Table \ref{tab:vae}.

 \begin{table}[!h]                                                                                 
  \centering    
  \caption{VAE Model and Training Parameters}  
     \label{tab:vae}   
  \begin{tabular}{ll}                                                                                   
  \toprule                                                                                            
  \textbf{Parameter} & \textbf{Value} \\                                                     
  \midrule                                                                                            
  Latent Dimension & 16 \\
  Hidden Dimensions & [256, 256] \\                                                                   
  Activation Function & ReLU \\         
  \midrule                               
  Training Epochs & 100 \\                                                                            
  Batch Size & 256 \\                                                                                 
  Optimizer & Adam \\                                                                                
  Learning Rate & $1 \times 10^{-3}$ \\                                                               
  Weight Decay & 0 \\                                                                                 
  Beta ($\beta$-VAE weight) & 1.0 \\                                                                   
  LR Scheduler & ReduceLROnPlateau \\                                                                 
  Scheduler Factor & 0.5 \\                                                                          
  Scheduler Patience & patience // 2 \\                                                               
  Early Stopping Patience & 15 \\                                  
  \bottomrule                                                                                         
  \end{tabular}                                                                                        
  \end{table}   
  
\paragraph{RealNVP.} See Table \ref{tab:realnvp}.
                            
  \begin{table}[!h]                                                                                
  \centering   
  \caption{RealNVP Training Parameters}    
  \label{tab:realnvp}  
  \begin{tabular}{ll}                                                                                
  \toprule                                                                                          
  \textbf{Parameter} & \textbf{Value} \\                                                    
  \midrule      
  Number of Coupling Layers & 6 \\                                                                   
  Hidden Dimensions & [256, 256] \\                                                                  
  Activation Function & ReLU \\                                                                      
  Scale Stabilization & tanh \\  
  Prior Distribution & Standard Normal ($\mu=0$, $\sigma=1$) \\     
  Training Epochs & 100 \\                                                                           
  Batch Size & 256 \\                                                                                
  Optimizer & Adam \\                                                                                
  Learning Rate & $1 \times 10^{-3}$ \\                                                              
  Weight Decay & 0 \\                                                                                
  LR Scheduler & ReduceLROnPlateau \\                                                                
  Scheduler Factor & 0.5 \\                                                                          
  Scheduler Patience & patience // 2 \\                                                              
  Early Stopping Patience & 15 \\                                                                                                             
  \bottomrule                                                                                        
  \end{tabular}           
  \end{table} 
  
\paragraph{Diffusion.} See Table \ref{table:diffusion-hyperparameters}.

\begin{table}[!h]
\centering
\caption{Diffusion Architecture and Training Hyperparameters.}
\label{table:diffusion-hyperparameters}
\begin{tabular}{lc}
\toprule
\textbf{Parameter} & \textbf{Value} \\
\midrule
Parameters & $513d + 590,848 ~(610K)$ \\
Observation Dimension ($d$) & $\{14, 23\}$ \\
Training Epochs & 50 \\
Batch Size & 512 \\
\midrule
Model Type & DDPM / DDIM \\
Noise Predictor & Time-conditioned MLP \\
Time Embedding & Sinusoidal (128-dim) \\
Hidden Layers & 3 \\
Hidden Units & 512 \\
Activation & SiLU \\
Input Structure & $[\mathbf{x}_t, \text{emb}(t)] \in \mathbb{R}^{d+128}$ \\
Output Structure & $\boldsymbol{\epsilon} \in \mathbb{R}^{d}$ \\
\midrule
Optimizer & AdamW \\
Learning Rate & $2 \times 10^{-4}$ \\
Beta Schedule & Linear \\
\bottomrule
\end{tabular}
\vspace{0.5em}
\end{table}

\paragraph{NeuralODE.} See Table \ref{table:neuralode-hyperparameters}.

\begin{table}[!h]
\centering
\caption{NeuralODE Architecture and Training Hyperparameters.}
\label{table:neuralode-hyperparameters}
\begin{tabular}{lc}
\toprule
\textbf{Parameter} & \textbf{Value} \\
\midrule
Parameters & $1025d + 263,680~(280K)$ \\
Observation Dimension ($d$) & $\{14, 23\}$ \\
Training Epochs & 20 \\
Batch Size & 512 \\
\midrule
Network Type & Time-conditioned MLP \\
Hidden Layers & 2 \\
Hidden Units & 512 \\
Activation & SiLU \\
Input Structure & $[\mathbf{x}, t] \in \mathbb{R}^{d+1}$ \\
Output Structure & $\mathbf{v} \in \mathbb{R}^{d}$ \\
\midrule
Optimizer & AdamW \\
Learning Rate & $1 \times 10^{-3}$ \\
Weight Decay & $1 \times 10^{-4}$ \\
\bottomrule
\end{tabular}
\vspace{0.5em}
\end{table}

\clearpage
\section{OOD Dataset Generation Details}
\label{sec:ood_data}

We create ``OOD datasets'' by randomly selecting 5 trajectories, adding Gaussian noise ($\mathcal{N}(\mu, 0.1)$) to observations and actions, and concatenating the noiseless subset of the dataset with the noisy version. Different levels of OOD datasets correspond to varying values of $\mu$ of the Gaussian noise. We depict the state norm versus action norm plots in Figures \ref{fig:ood_datasets_abiomed} and \ref{fig:ood_datasets_d4rl}. The number of samples in each dataset is 400, 7992, 10000, 9546, for the medical, hopper, halfcheetah, and walker2d datasets, proportional to the original dataset size.

\begin{figure*}[!h]
    \centering
    \includegraphics[width=1\linewidth]{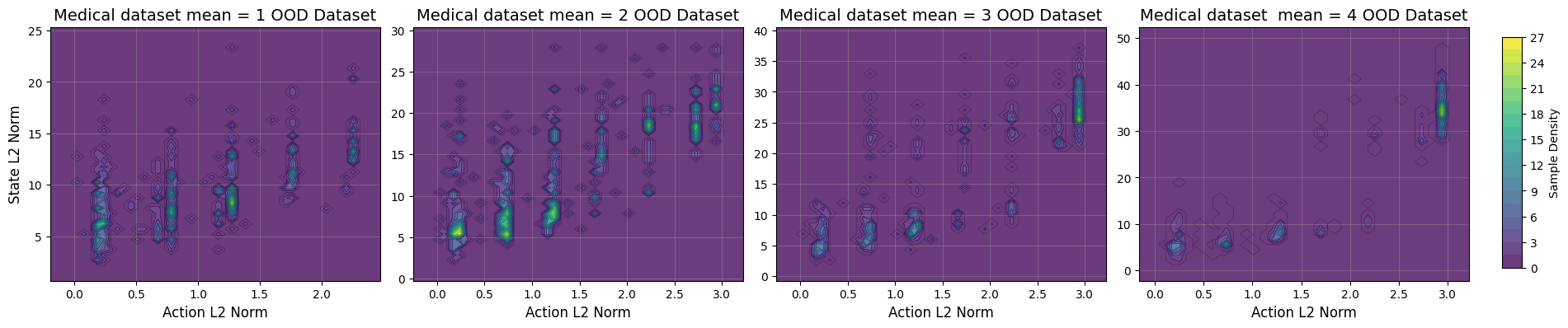}
    \caption{OOD dataset visualizations for the medical dataset.}
    \label{fig:ood_datasets_abiomed}
\end{figure*}
 
\begin{figure*}[!h]
    \centering
    \begin{subfigure}[b]{\linewidth}
        \centering
        \includegraphics[width=\linewidth]{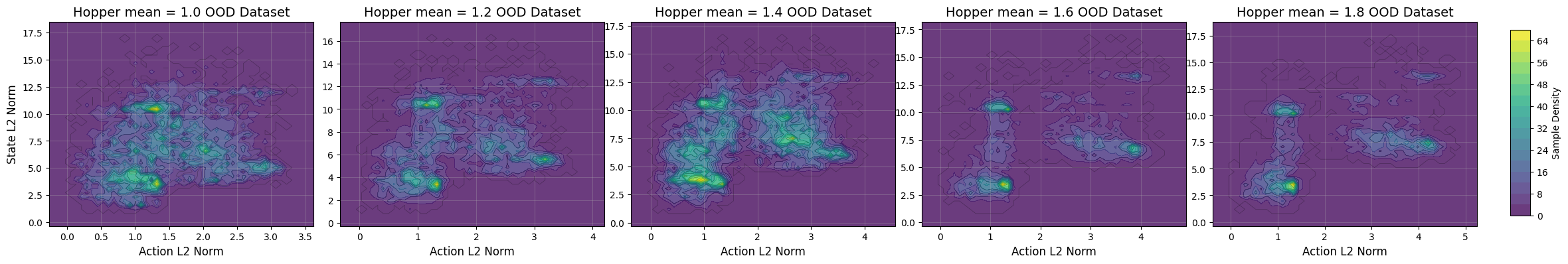}
        \caption{hopper-medium-expert OOD dataset visualization.}
        \label{fig:hopper_ood}
    \end{subfigure}
    \hfill
    \begin{subfigure}[b]{\linewidth}
        \centering
        \includegraphics[width=\linewidth]{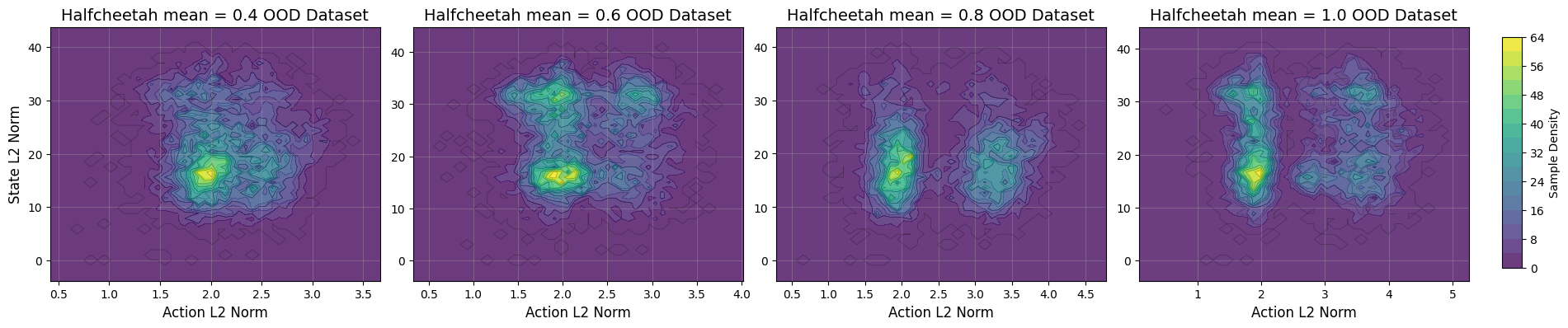}
        \caption{halfcheetah-medium-expert OOD dataset visualization.}
        \label{fig:halfcheetah_ood}
    \end{subfigure}
    \hfill
    \begin{subfigure}[b]{\linewidth}
        \centering
        \includegraphics[width=\linewidth]{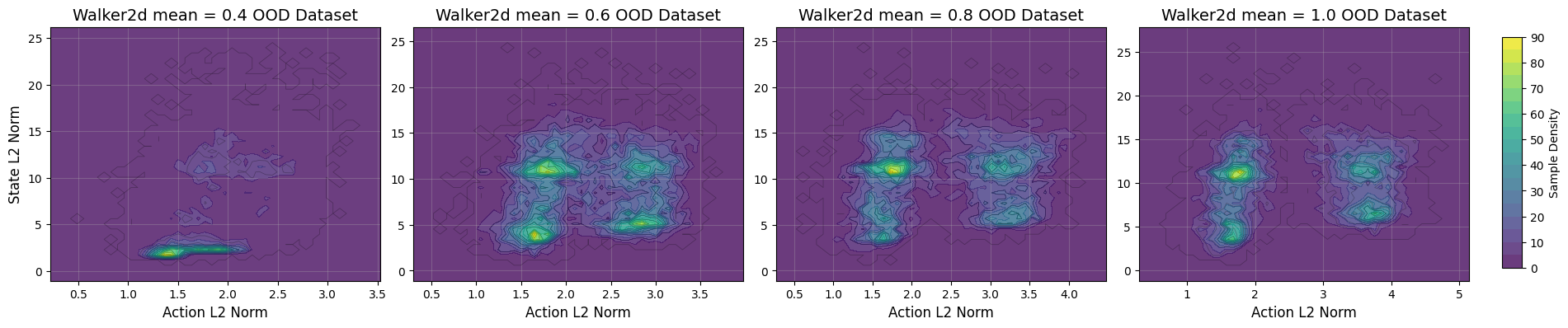}
        \caption{walker2d-medium-expert OOD dataset visualization.}
        \label{fig:walker2d_ood}
    \end{subfigure}
    \caption{OOD dataset visualizations for sparse D4RL medium-expert datasets.}
    \label{fig:ood_datasets_d4rl}
\end{figure*}


\section{More Details on the Smart Weaning of Mechanical Circulatory Devices Task and Dataset}
\label{sec:abiomed}
We use the same medical task and real-world dataset in CORMPO \cite{tumay2025guardian}.

\subsection{Dataset Details.} 
Our real-world medical dataset comprises of 379 patients corresponding to 17865 samples after downsampling and sliding window processes. 
This dataset includes 12 features recorded directly or derived from signals of the MCS device, namely: Mean aortic pressure (MAP), mean pump speed, mean motor current, mean pump flow, left Ventricular Pressure (LVP), left ventricular end diastolic pressure (LVEDP), heart rate (HR), Systolic blood pressure (SBP), Diastolic blood pressure (DBP), Pulsatility, Relaxation Constant (Tau\_LV), and elastance estimation (ESE\_LV). We downsample the original signal of 25 Hz into 0.00167 Hz (1 sample per 10 minutes) and extract samples with a sliding window of 1 hour  \cite{tumay2025guardian}. 

\subsection{MDP Design for MCS.} We first formulate the MCS weaning problem as an MDP. The challenge in formulating the environment is balancing the rich information of medical time series with the learning challenges of a high-dimensional Markov Decision Process (MDP). 
We define each \textit{state} in the MDP to consist of $t$ time-steps of $k$ different physiological features, 
i.e., $\mathcal{S} \subseteq \mathbb{R}^{t \times k}$. The \textit{action} space is $\mathcal{A} = \{2,3,\cdots, 9\}$, corresponding to pump level P2 to P9 on the MCS device. The objective is to optimize patient outcome with a clinically appropriate weaning strategy. For the offline RL problem, we organize the patient data into a replay buffer dataset of $\mathcal{D} = \{(s_i,a_i, s'_i, r_i)\}_i$  according to the formulation. The state space, action space, reward, and MDP design is informed by expert recommendation \cite{tumay2025guardian}.

\paragraph{Observations. }  The observation space includes 12 hemodynamic features of the patient. Our inputs are the pump pressure, pump speed, and motor current 25 Hz signals recorded by the MCS device. 
 We down-sample patient data from 25Hz to 0.00167Hz (1 sample per 10 minutes) and process them into sliding windows of 1 hour (6 time steps) to be used as states for digital twin prediction and decision making based on expert suggestion.
    Therefore, the observation space is $\mathcal{S} = \mathbb{R}^{6 \times 12}$, where each $s_i = x_{t: t+6}$ at some $t$ for a patient. 
\paragraph{Action. } The action for our MDP is the pump support level (P-level) of the MCS device. The device operates at 8 different speed levels, from P2-P9, each with a constant motor speed (rpm). The P-level proportionally determines the blood flow provided to the patient by the motor’s speed and current. Clinicians can control the P-level while the patient is on support.
The P-level generally stays unchanged in 1-hour intervals, unlike the state features, since it is manually controlled by the clinicians during the treatment. In practice, we take the mean P-level over the 1-hour interval as expert action. As a result, we define $ \mathcal{A} = \{2, \dots, 9\}$.

\paragraph{Rewards. } The design table for the reward function in Appendix \ref{sec:A} is generated in line with medical consultancy. It assigns a (inverted) risk score based on acceptable intervals for hemodynamic features. 
The physiological reward is further normalized through Z-score normalization and clipped between $[-2,2]$ to ensure training stability.

\subsection{Medically-informed Metrics}
\label{sec:A}
\textbf{Action Change Penalty (ACP)} \cite{ogsrl}: Abrupt and extreme changes in P-level may maximize rewards; however, they can induce physiological instability in a real-world setting. ACP gauges policy volatility and is given by:
\begin{equation*}
    \text{ACP =} \sum^{T}_{i=1}||a_{i-1}-a_{i}||_2, \text{ if }  ||a_{i-1}-a_{i}||_2 >2.
\end{equation*}

where $a_{i-1}$ is an action at state $i-1$, $a_{i}$ is a subsequent action, and $T$ is the episode length. Lower ACP values indicate stable physiology and safe weaning, but note that a value of 0 is undesirable as the P-level must be lowered for weaning. 

\textbf{Weaning Score (WS)}:
To capture satisfactory weaning patterns, we support P-level reductions at most every 1 hour when the patient is observed as hemodynamically stable for the past 1 hour as depicted in Eq. \ref{eq:WS}. Higher weaning scores signify an appropriate reduction in P-level during relatively stable physiological states. We employ stability based on the gradient of the past hemodynamic state as 


\begin{equation}
\begin{aligned}
\texttt{Is\_Stable}(i) &= \left|\frac{\partial\, \mathrm{MAP}(i)}{\partial t}\right| < \tau_1 \land\, \left|\frac{\partial\, \mathrm{HR}(i)}{\partial t}\right| < \tau_2 \land\, \left|\frac{\partial\, \mathrm{Pulsat}(i)}{\partial t}\right| < \tau_3,
\end{aligned}
\label{eq:second_ws}
\end{equation}
where $\tau_{\text{MAP}}=1.36, \tau_{\text{HR}}=2.16$ and $\tau_{\text{Pulsat}}= 1.95$, indicating a proxy for stability with a low gradient value of 3 hemodynamic indicators in the past state chosen with statistical significance tests.

 We evaluate the policies with the gradient-based WS definition. Second part of the WS metric is as follows.
\begin{equation*}
\texttt{Weaned}(i) =
\begin{cases} 
-1, & \text{if } a_{i} - a_{i+1} < 0, \\
a_{i} - a_{i+1}, & \text{if } a_{i} - a_{i+1} \in \{1,2\}, \\
0, &  \text {otherwise}. \\
\end{cases}
\end{equation*}

\textbf{Physiological Reward}: The reward generally reflects the well-being of the patient, according to the mean arterial pressure (MAP), heart rate (HR), and pulsatility of the past hour. Our design follows the clinically defined ranges for hemodynamic stability while caring for the smoothness and differentiability of the function.

\begin{table}[ht]
    \centering
    \caption{Hemodynamic instability score table from \cite{buitenwerf2019haemodynamic}. We use a modified version of this table as our physiological reward. When used for evaluating the learned policy as a reward function, we multiply the risk score by -1. }
    \label{tab:reward_table}
    \begin{tabular}{clll}
    \toprule
    \textbf{Score Component} & \textbf{Value} & \textbf{Score} \\ \midrule
    Hemodynamic \\
    Variable & MAP & $\geq60$ & 0 \\
       & 50 to 59 & 1 \\
      & 40 to 49 & 3 \\
      & $<40$ & 7 \\
    \midrule
      Minimum MAP  & $\geq60$ & 0 \\
      in window & 50 to 59 & 1 \\
      & 40 to 49 & 3 \\
      & $<40$ & 7 \\
     \midrule
      Time Spent MAP& 0 & 0 \\
       $<60$ mmHg (\%)  & 2 & 1 \\
     & & 5 & 3 \\
       & $>5$ & 7 \\
     \midrule
     & Pulsatility & $>20$ & 0 \\
       & 10-20 & 5 \\
       & $<10$ & 7 \\
     \midrule
      HR & $>100$ & 3 \\
       & $<50$ & 3 \\
     \midrule
      LVEDP & $>20$ & 7 \\
       & 15 to 20 & 4 \\
       & $<15$ & 3 \\ 
    \midrule
      CPO & 0.6 to 1 & 1 \\ 
       & $<0.6$ & 3 \\
       & $<0.5$ & 5 \\
     \bottomrule
    \end{tabular}
    \vspace{1em}
    \end{table}

The reward design in Table \ref{tab:reward_table} is staircase-shaped, which has two drawbacks:  non-differentiability and a sparse signal. We reformulate the hemodynamic instability score in the following way.

\begin{itemize}
    \item  \textbf{Heart Rate Penalty Function} The heart rate penalty function penalizes deviations from an optimal heart rate of 75 bpm using a quadratic penalty:
\begin{equation}
P_{\text{hr}}(hr) = \text{ReLU}\left(\frac{(hr - 75)^2}{250} - 1\right)
\end{equation}
where $\text{ReLU}(x) = \max(0, x)$. This function has zero penalty for heart rates in the range $[50, 100]$ bpm and applies quadratic penalties for heart rates outside this range.

\item \textbf{Minimum MAP Penalty Function} The minimum Mean Arterial Pressure (MAP) penalty function ensures MAP values remain above 60 mmHg:
\begin{equation}
P_{\text{minMAP}}(MAP) = \text{ReLU}\left(\frac{7(60 - MAP)}{20}\right)
\end{equation}
This function applies a linear penalty when MAP falls below 60 mmHg, with the penalty increasing as MAP decreases further from this threshold.

\item  \textbf{Pulsatility Penalty Function} The pulsatility penalty function maintains pulsatility within the range $[20, 50]$:

\begin{equation}
P_{\text{pulsat}}(p) = \text{ReLU}\left(\frac{7(20 - p)}{20}\right) + \text{ReLU}\left(\frac{p - 50}{20}\right)
\end{equation}

This bi-directional penalty function penalizes pulsatility values below 20 and above 50, with zero penalty for pulsatility in the range $[20, 50]$.

\item \textbf{Hypertension Penalty Function} The hypertension penalty function penalizes elevated mean MAP values above 115 mmHg:

\begin{equation}
P_{\text{hyp}}(MAP) = \text{ReLU}\left(\frac{MAP - 106}{18}\right)
\end{equation}

This function applies a linear penalty for mean MAP values exceeding the hypertension threshold of 106 mmHg.
\end{itemize}

The overall reward function combines all penalty components and negates the sum to create a reward signal:

\begin{equation}
R(s) = -\left[P_{\text{minMAP}}(\min(\text{MAP})) + P_{\text{hyp}}(\overline{\text{MAP}}) + P_{\text{hr}}(\min(HR)) + P_{\text{pulsat}}(\min(\text{Pulsat}))\right]
\end{equation}

where:
\begin{itemize}
    \item $\min(\text{MAP})$, $\min(HR)$, $\min(\text{Pulsat})$ are the minimum values over the time horizon
    \item $\overline{\text{MAP}}$ is the mean MAP over the time horizon
    \item The negative sign converts penalties into rewards (higher rewards for lower penalties)
\end{itemize}

\section{Details and Visualization of Sparse D4RL Datasets}
\label{sec:more_data}
 We show the details of sparse dataset generation from the Gym-MuJoCo D4RL medium-expert datasets. The unsafe region limits are decided based on the visualizations in Figure \ref{fig:sparsity_grid} on Action Norm vs Reward contour plots. We display the selected limits in Table \ref{tab:unsafe_region}. In general, we enclose one mode of the action norm vs reward space.

 \begin{table*}[!h]
     \centering
          \caption{Sparse D4RL action norm and reward unsafe region limits.ha=halfcheetah, ho=hopper, wa=walker2d.}
     \label{tab:unsafe_region}
     \small
     \setlength{\tabcolsep}{2pt}
     \begin{tabular}{l|cccccc}
     \toprule
          & \makecell{Minimum\\reward}
    & \makecell{Maximum\\Reward}
    & \makecell{Minimum\\Action Norm}
    & \makecell{Maximum\\Action Norm}
    & \makecell{Discarded Percentage\\from the unsafe region}
    & \makecell{Discarded Percentage\\from the full dataset}\\
          \midrule
         ha-m-e-sparse & $-3.014$ &$6.569$& $0.363$&$1.936$ & $41\%$ & $27.5\%$\\
          ho-m-e-sparse& $0.549$ & $ 3.892$&$0.012$&$1.058$&$40\%$& $22\%$\\
          wa-m-e-sparse&$-2.557$&$4.579$& $0.255$&$1.762$&$48\%$& $27\%$\\
          \bottomrule
     \end{tabular}
 \end{table*}

\begin{figure*}[!h]
    \centering

    \begin{subfigure}[b]{0.32\linewidth}
        \centering
        \includegraphics[width=\linewidth]{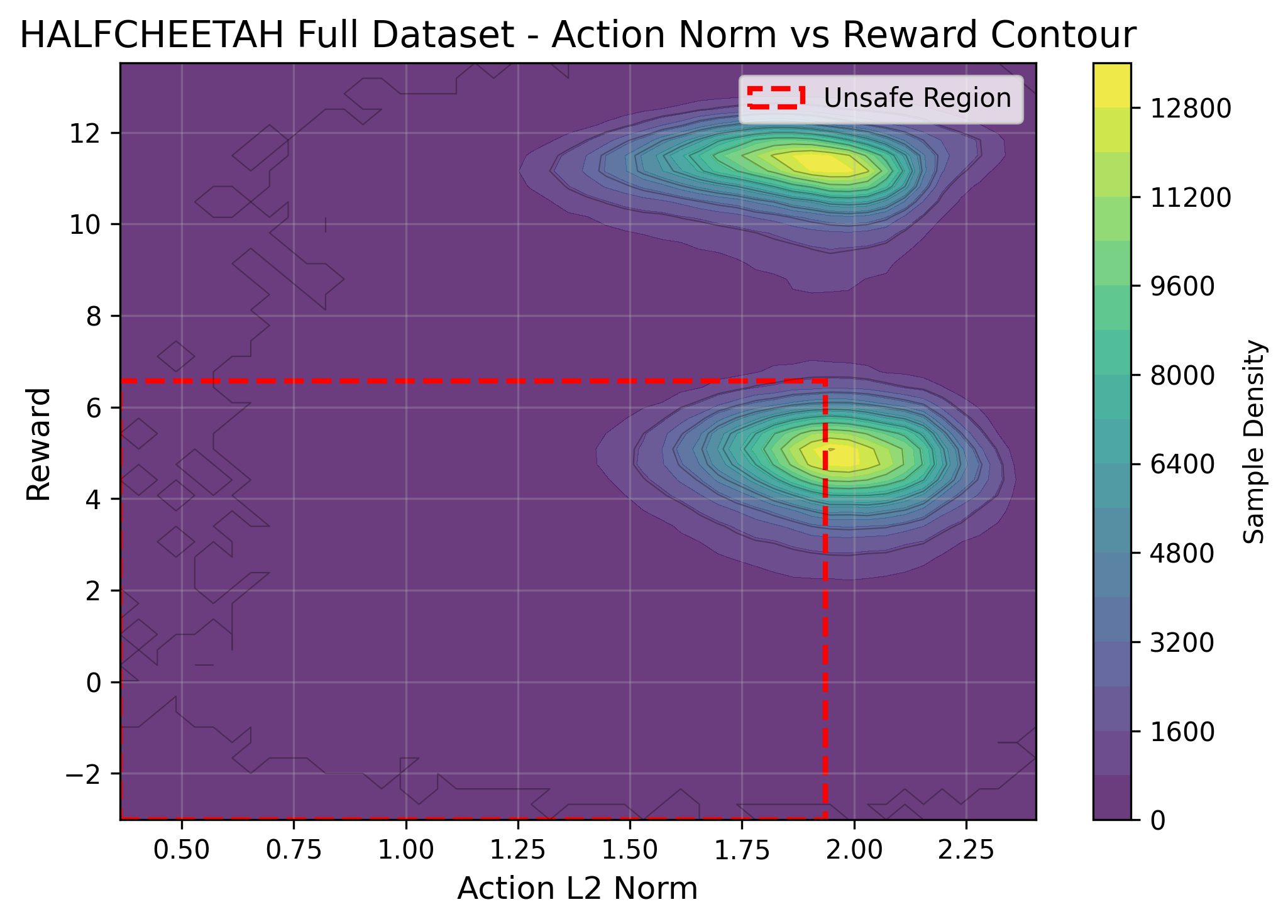}
        \caption{halfcheetah (Full)}
        \label{fig:full_halfcheetah}
    \end{subfigure}
    \hfill
    \begin{subfigure}[b]{0.32\linewidth}
        \centering
        \includegraphics[width=\linewidth]{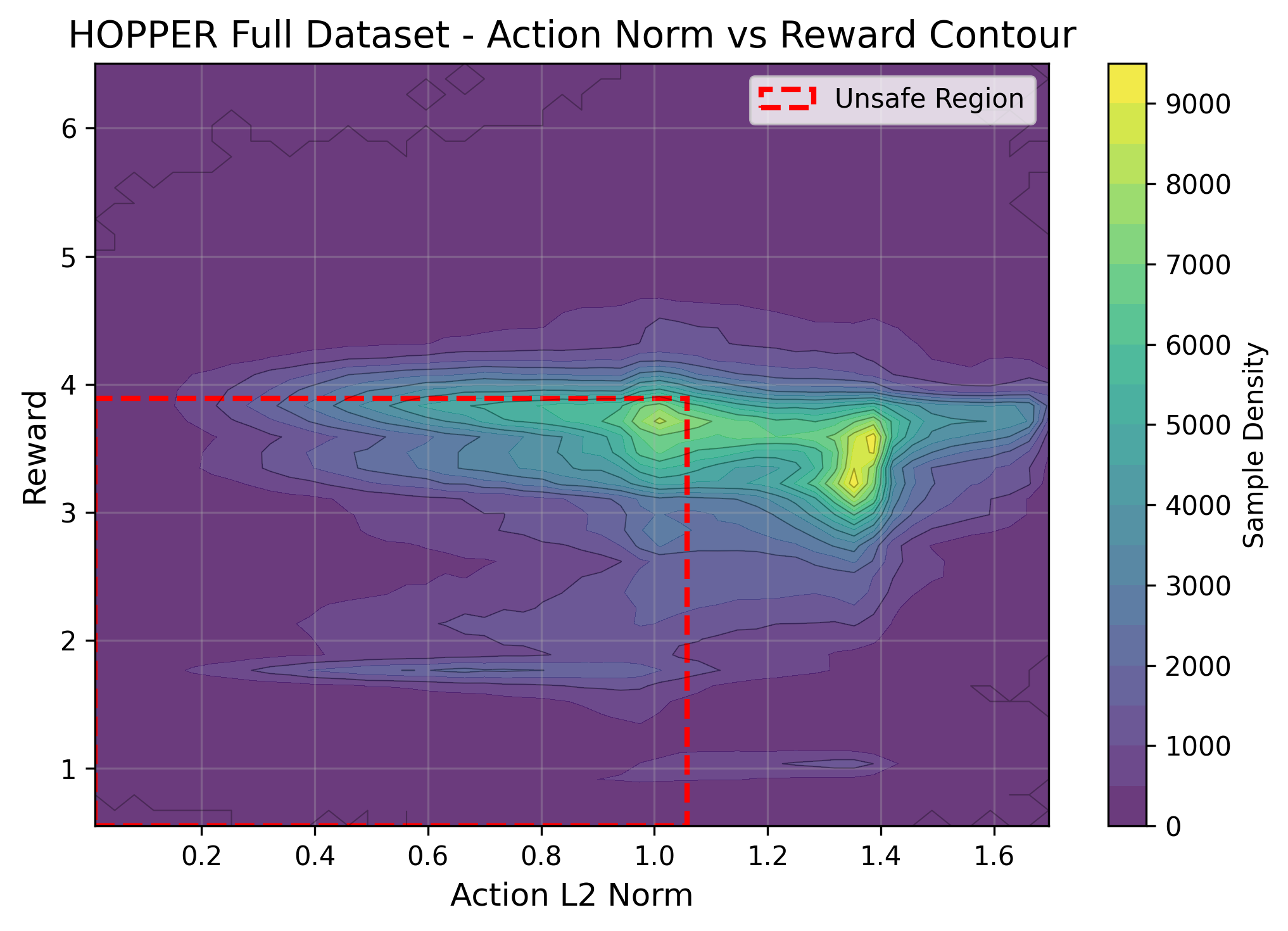}
        \caption{hopper (Full)}
        \label{fig:full_hopper}
    \end{subfigure}
    \hfill
    \begin{subfigure}[b]{0.32\linewidth}
        \centering
        \includegraphics[width=\linewidth]{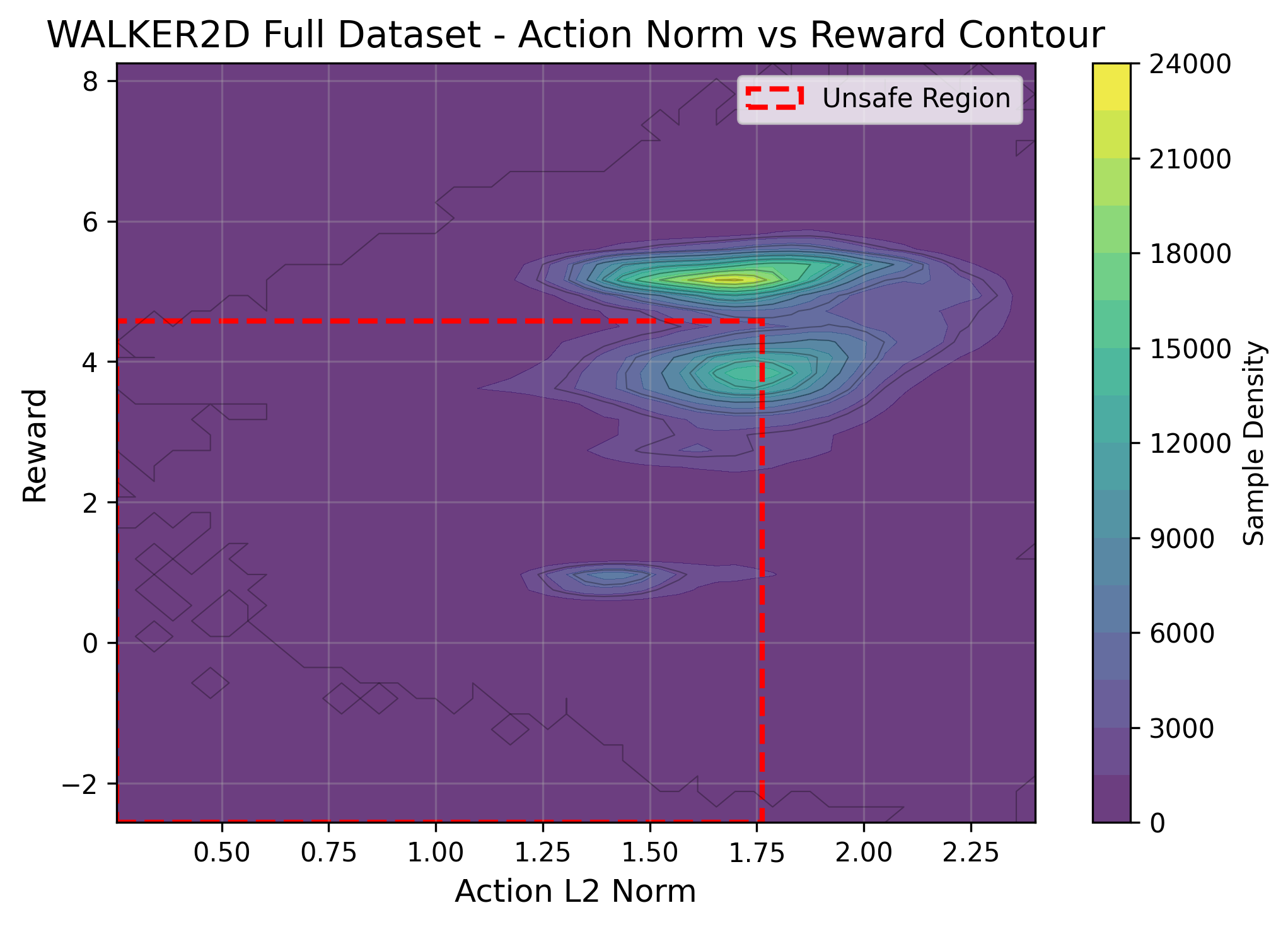}
        \caption{walker2d (Full)}
        \label{fig:full_walker2d}
    \end{subfigure}

    \vspace{0.6em}

    \begin{subfigure}[b]{0.32\linewidth}
        \centering
        \includegraphics[width=\linewidth]{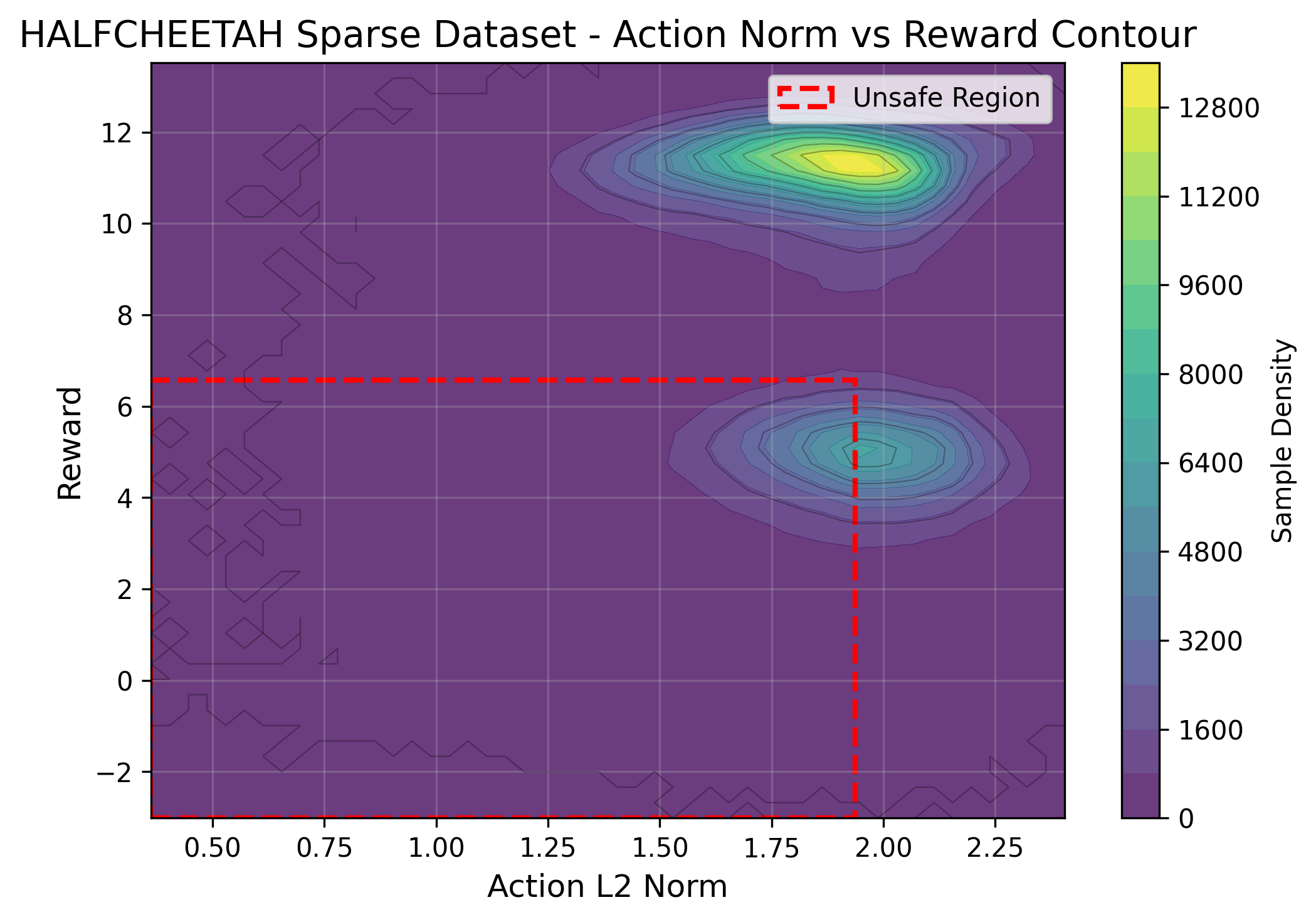}
        \caption{halfcheetah (Sparse)}
        \label{fig:sparse_halfcheetah}
    \end{subfigure}
    \hfill
    \begin{subfigure}[b]{0.32\linewidth}
        \centering
        \includegraphics[width=\linewidth]{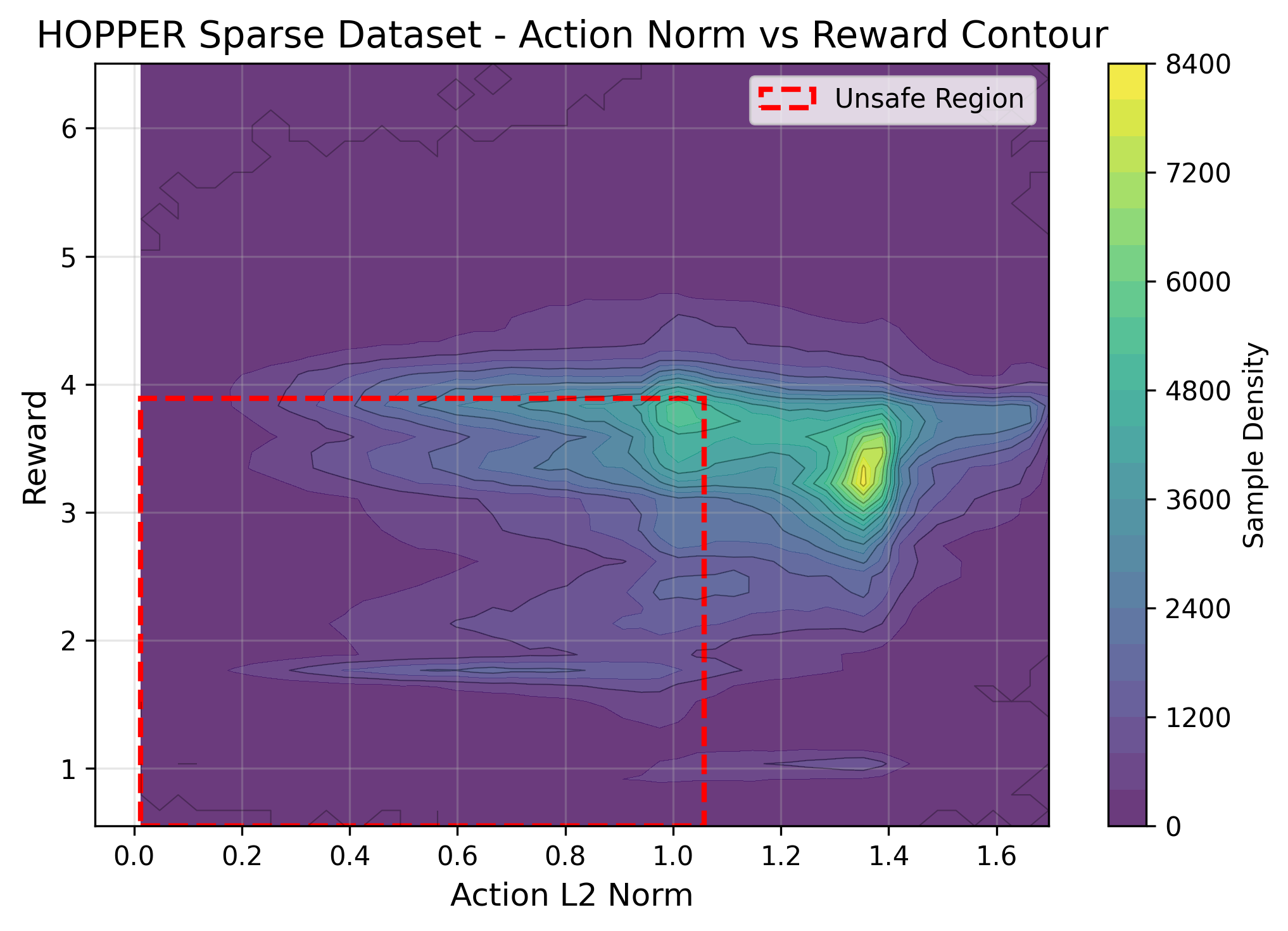}
        \caption{hopper (Sparse)}
        \label{fig:sparse_hopper}
    \end{subfigure}
    \hfill
    \begin{subfigure}[b]{0.32\linewidth}
        \centering
        \includegraphics[width=\linewidth]{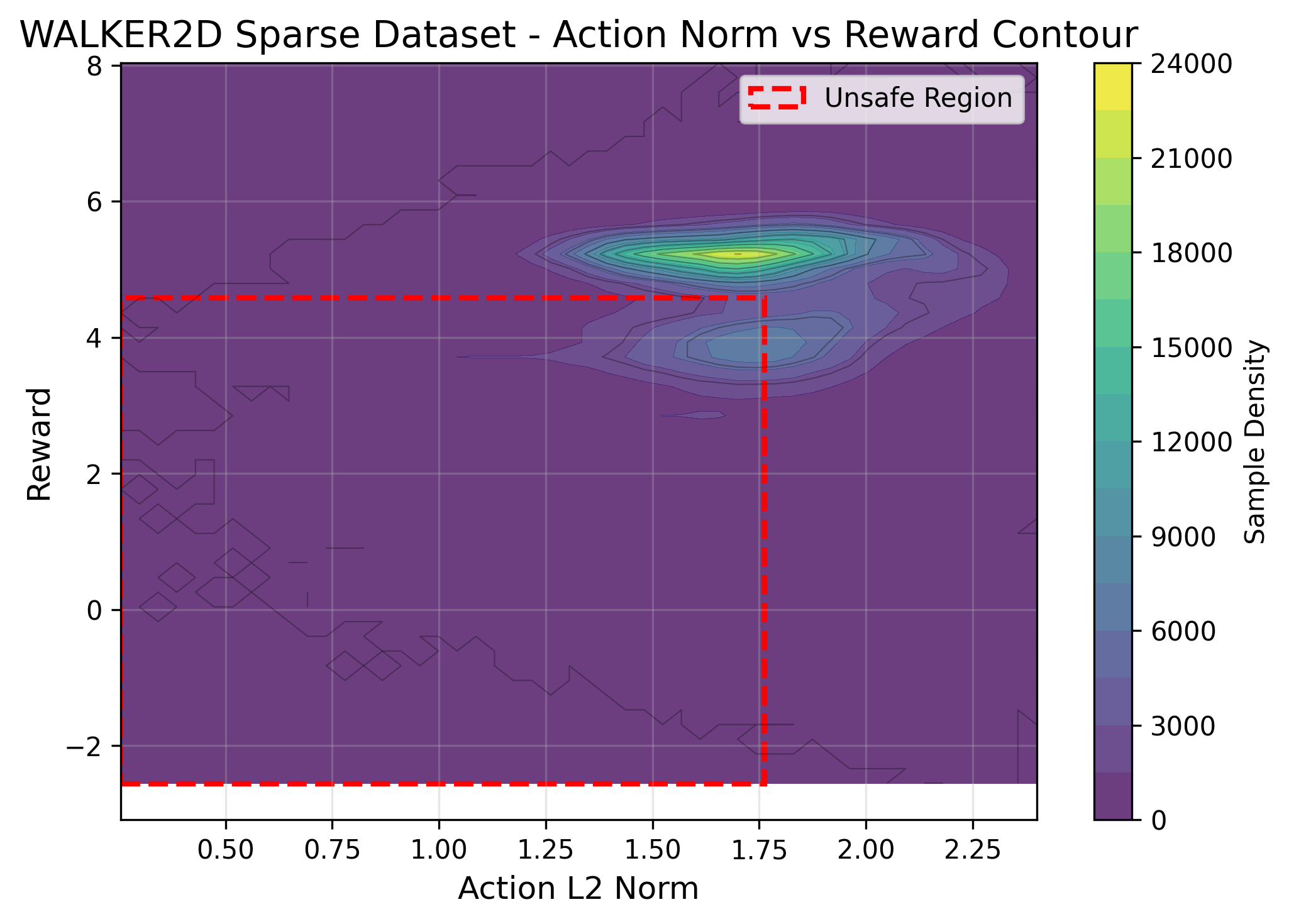}
        \caption{walker2d (Sparse)}
        \label{fig:sparse_walker2d}
    \end{subfigure}

    \caption{State-action space visualizations for D4RL datasets. Top row: full datasets. Bottom row: sparse datasets. Overall, we discard 27.5\%, 22\%, and 27\% of the samples from halfcheetah, hopper, and walker2d, respectively.}
    \label{fig:sparsity_grid}
\end{figure*}

\begin{figure*}[!h]
\centering
    \begin{subfigure}[b]{0.25\linewidth}
        \centering
        \includegraphics[width=\linewidth]{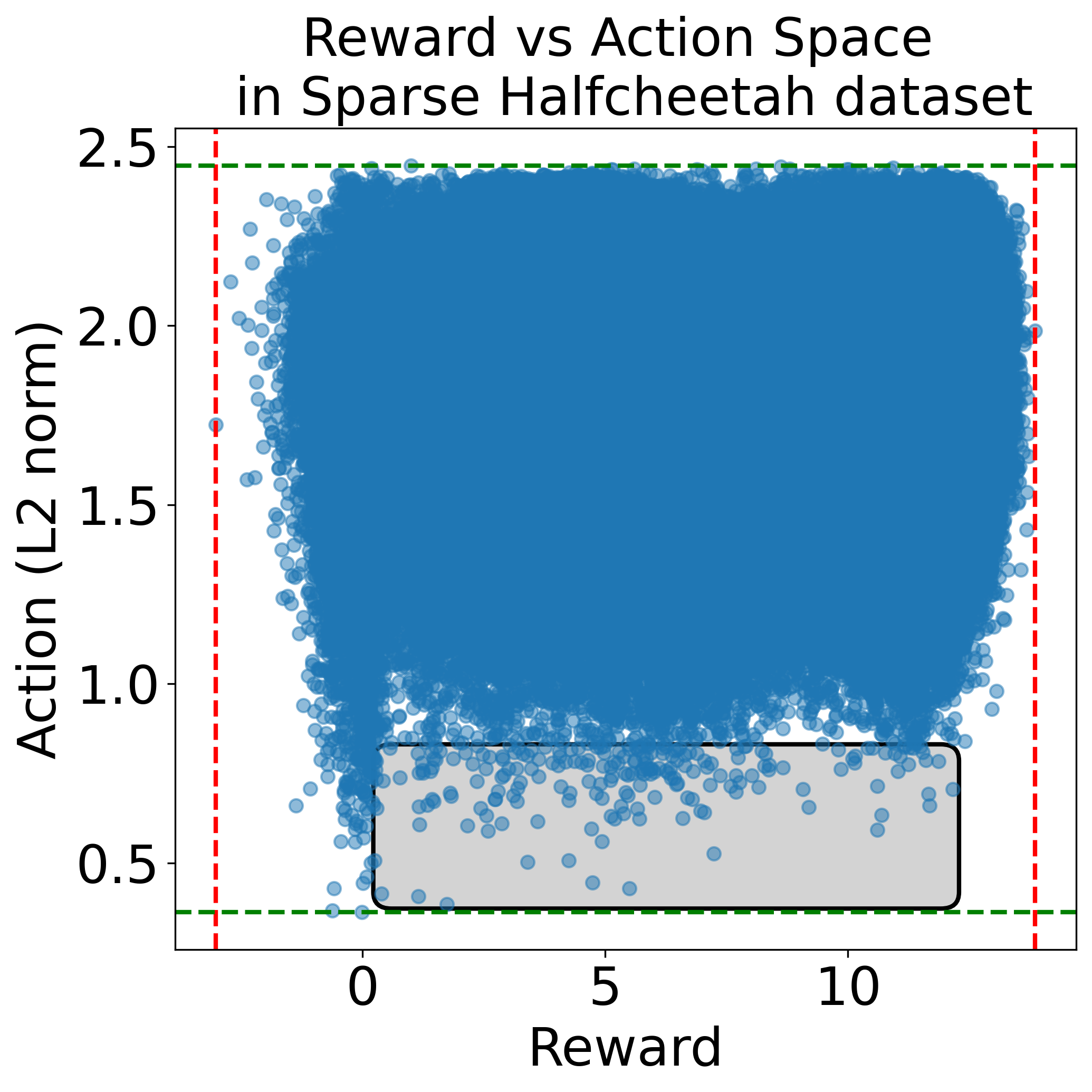}
        \caption{Action-reward space of the sparse halfcheetah dataset.}
        \label{fig:halfcheetah_sparse_a_w}
    \end{subfigure}
    \hspace{0.5cm}  %
    \centering
    \begin{subfigure}[b]{0.25\linewidth}
        \centering
        \includegraphics[width=\linewidth]{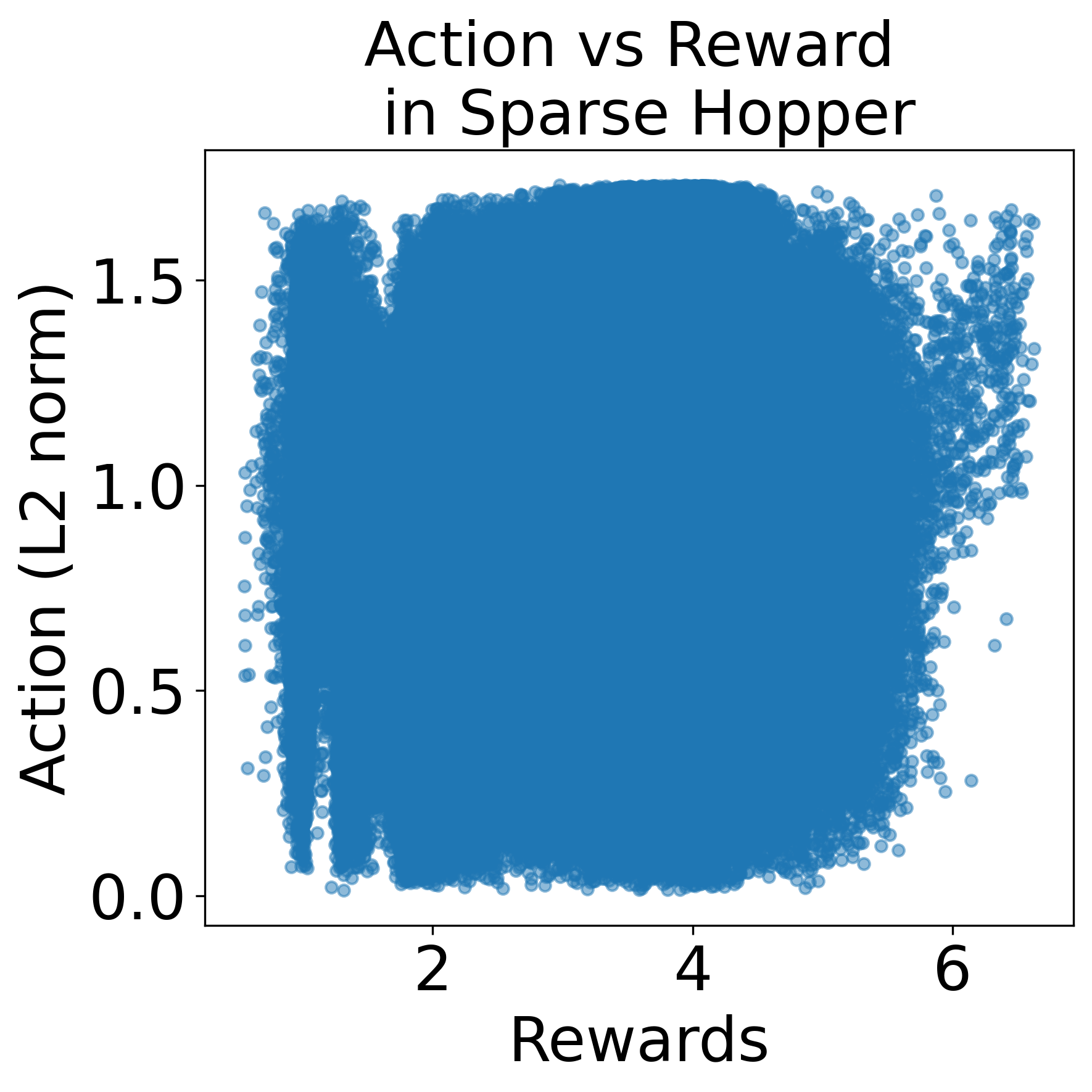}
        \caption{Action-reward space of sparse hopper dataset.}
        \label{fig:hopper_sparse_a_w}
    \end{subfigure}
    \hspace{0.5cm}  %
    \begin{subfigure}[b]{0.25\linewidth}
        \centering
        \includegraphics[width=\linewidth]{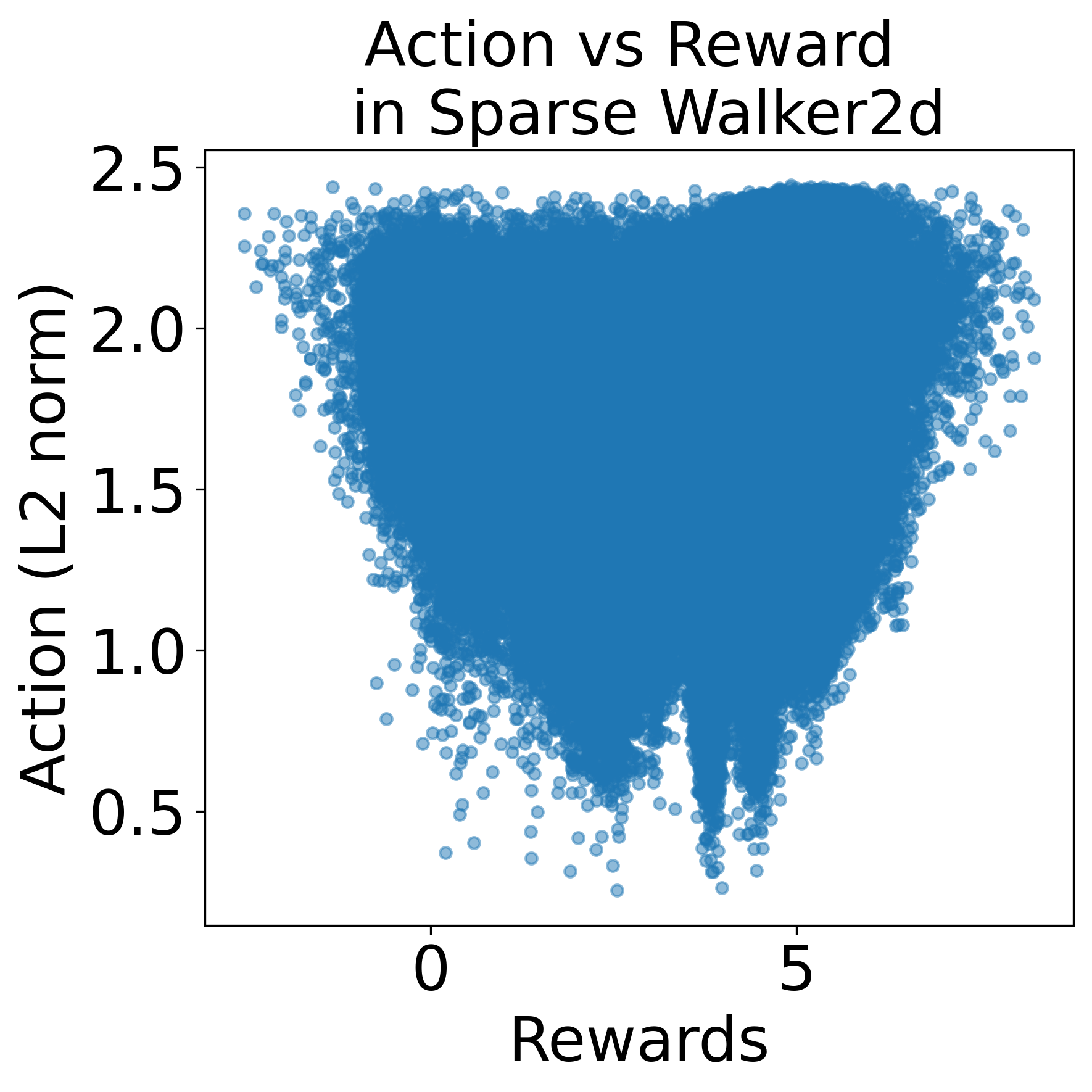}
        \caption{Action-reward space of sparse walker2d dataset.}
        \label{fig:walker2d_sparse_a_w}
    \end{subfigure}
    \caption{Action-reward space of sparse walker2d and hopper datasets. We compute the L2 norm of the actions to project it to 1D space for visualization purposes.}
    \label{fig:sparse_aw}
\end{figure*}

\begin{figure*}[!h]
    \centering
    \includegraphics[width=1\linewidth]{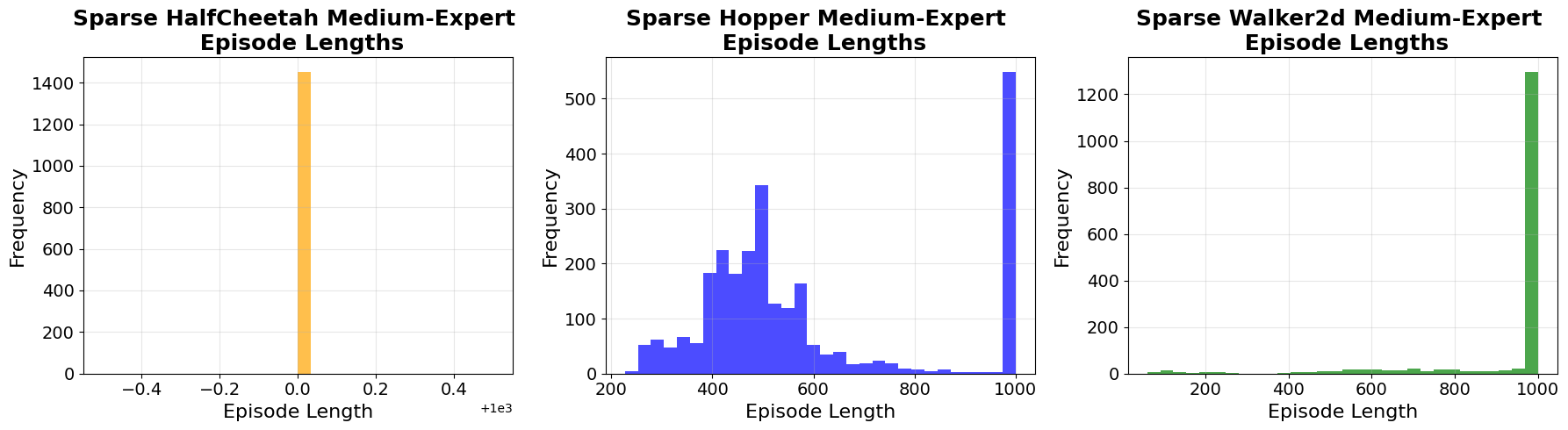}
    \caption{The distribution of episode lengths of the sparse D4RL datasets. halfcheetah's terminal signal is always 1000, while others vary a lot. This shows a deterministic termination signal in halfcheetah, unlike hopper and walker2d datasets.}
    \label{fig:terminal}
\end{figure*}

\clearpage
\section{t-SNE Plots for OOD Avoidance Behavior}
\label{sec:tsne}
\begin{figure}[htb]
    \centering
        \includegraphics[width=\linewidth]{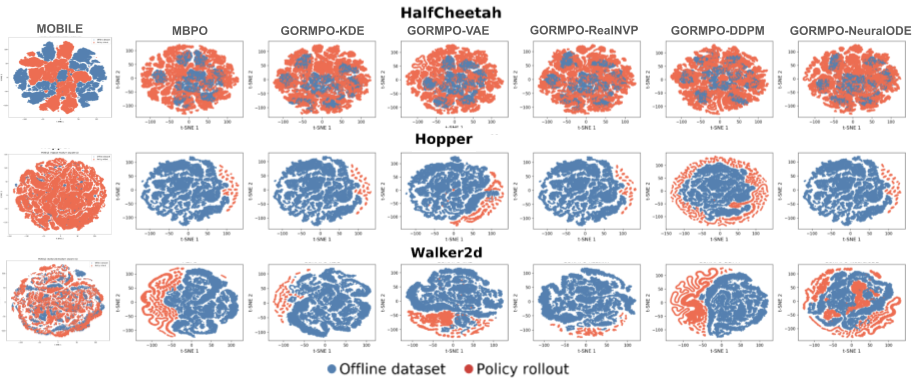}


     \caption{\textbf{t-SNE projections of equal-sized offline dataset and policy rollout $(s',a)$ pairs.} MBPO-based GORMPO rollouts expand beyond MOBILE rollouts in hopper and walker2d, while the MOBILE has more overlap and coverage. The best GORMPO models either symmetrically surround data support (GORMPO-DDPM in hopper) or stay near support without overlapping (GORMPO-RealNVP in walker2d). In halfcheetah, GORMPO-RealNVP depicts minimal deviation from data support, while MOBILE complements the data support.}
  
    \label{fig:tsne_combined}
\end{figure}

\section{Empirical Analysis of Diffusion Noise Predictions}\label{appendix:ddpm}

To rationalize the low OOD detection accuracy observed in our experiments, we analyze the statistical properties of the noise vectors predicted by the diffusion model. Our hypothesis is that the model's strong inductive bias towards the standard normal distribution masks the presence of Gaussian-perturbed anomalies.

As illustrated in Figure \ref{fig:noise_analysis}, the diffusion model consistently predicts noise $\epsilon$ that conforms tightly to the standard normal distribution $\mathcal{N}(0, 1)$, regardless of whether the input is ID or OOD.
\begin{itemize}
    \item Gaussian Adherence (Panel A): The predicted noise exhibits negligible deviation from normality, with near-zero skewness ($0.000$) and excess kurtosis ($-0.009$).
    \item Distributional Collapse (Panel B): Consequently, the noise distributions for ID and Gaussian-perturbed OOD inputs become statistically indistinguishable, yielding a Kullback-Leibler (KL) divergence of only $0.0246$.
\end{itemize}
This analysis suggests that the diffusion model projects both ID and Gaussian-perturbed OOD samples toward a similar Gaussian latent manifold, reducing separability at the aggregate distribution level. However, diffusion-based OOD detection often relies on sample-wise denoising or reconstruction discrepancies rather than solely on marginal statistics of the predicted noise. Therefore, overlapping aggregate noise distributions do not necessarily imply the complete absence of OOD signal. In our setting, the weak empirical separation may additionally stem from the Gaussian approximation strategy, or saturation effects in the density penalty, which can diminish sample-level discriminative information during policy optimization.

\begin{figure*}[!htb]
    \centering
    \includegraphics[width=0.8\linewidth]{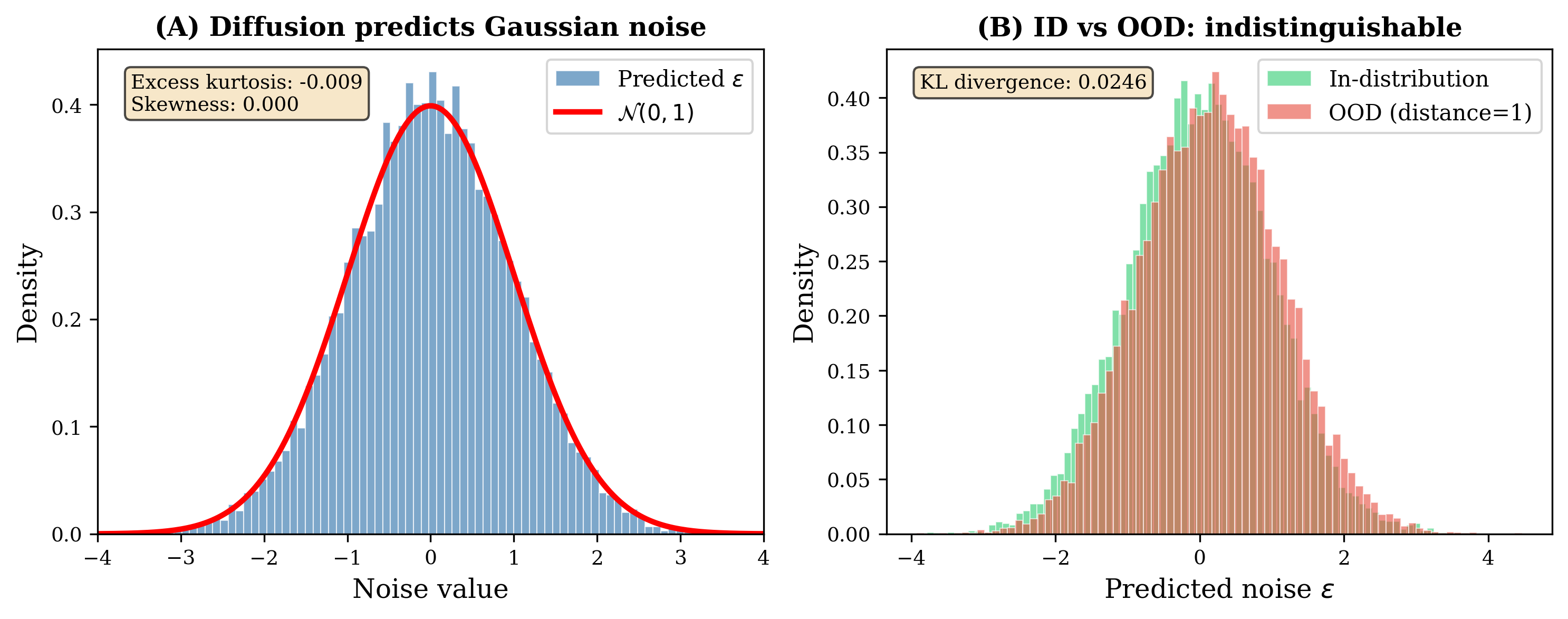}
    \caption{\textbf{Distributional overlap of predicted noise.} 
    \textbf{(A)} The predicted noise $\epsilon$ tightly conforms to a standard normal distribution $\mathcal{N}(0, 1)$, evidenced by negligible skewness ($0.000$) and excess kurtosis ($-0.009$). 
    \textbf{(B)} This Gaussian constraint causes substantial overlap between ID and OOD predictions. The low KL divergence ($0.0246$) confirms that the diffusion model projects distinct input distributions onto an indistinguishable Gaussian manifold.}
    \label{fig:noise_analysis}
\end{figure*}

\FloatBarrier
\section{Penalty Progression Plots for Sparse D4RL Datasets}
\label{sec:penalty}

\begin{figure*}[!h]
    \centering
    \includegraphics[width=1\linewidth]{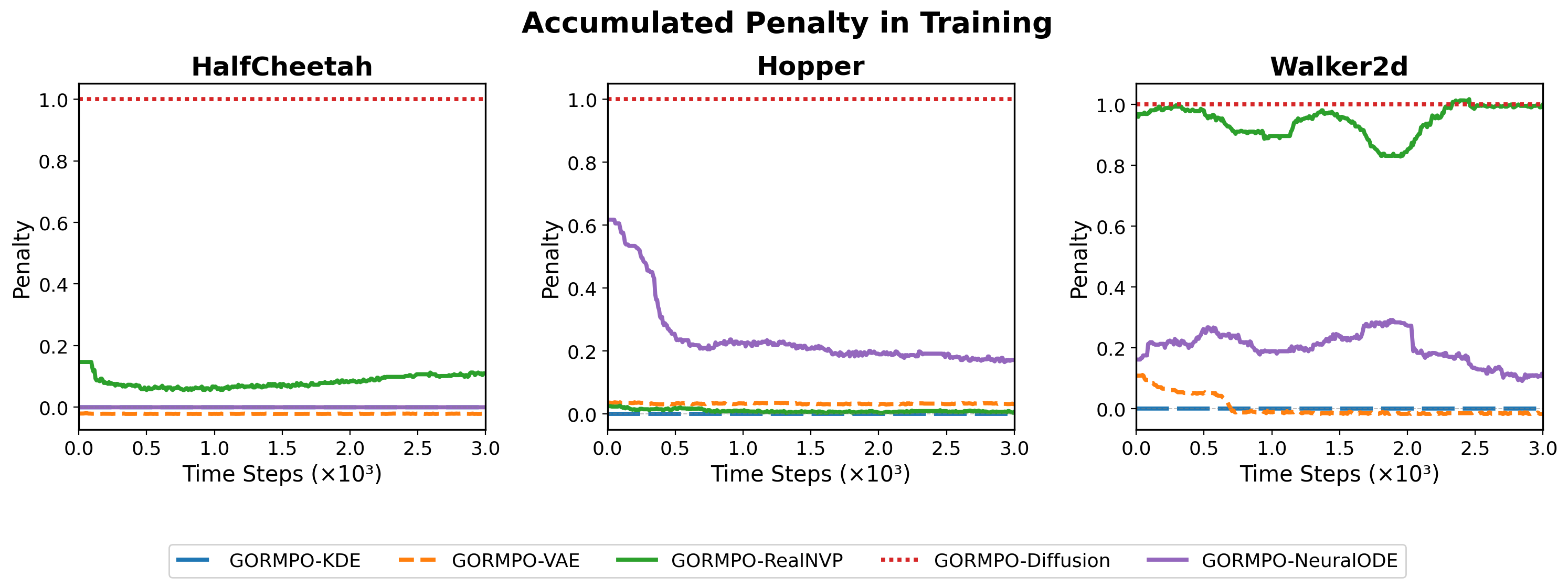}
    \caption{Mean $\pm$ standard deviation of the penalty during training MBPO-based GORMPO for 3000000 steps in the sparse D4RL datasets.}
    \label{fig:all_penalty}
\end{figure*}
\clearpage

%% file: alg_sample.tex



\begin{algorithm}[!htp]
\caption{Generative OOD-regularized Model-based Policy Optimization (GORMPO)}
\label{alg:cormpo}
\begin{algorithmic}[1]
\State \textbf{input} Jointly pre-trained dynamics model $\hat T_\phi(s'|s,a)$ and reward model $\hat r_\psi(s',a)$,  pretrained density estimator $p_\theta(s,a)$, penalty weight $\lambda$, density threshold $\tau$.
\State \textbf{output} Learned policy $\hat{\pi}$
\For{$K$ epochs}
\State \textbf{Define penalized MDP}
\State  $\hat s' \sim \hat T_\phi(\cdot|s,a)$
\State $ u(\hat s',a) = \tanh(\max(\tau - \log p_{\theta}(\hat s',a), 0))$
\State $\tilde{r}(s,a) = \hat{r}(s,a) - \lambda\, u(\hat s',a)$
\State $\tilde{\mathcal{M}} \leftarrow (S,A,\hat{T}_\phi,\tilde{r}, \mu_0,\gamma)$
\State \textbf{Train any MBRL model until convergence on $\tilde{\mathcal{M}}$}
\State  $\hat{\pi} \leftarrow \arg\max_{\pi}\eta_{\tilde{\mathcal{M}}}(\pi)$ 
\EndFor 
\State \textbf{return} $\hat{\pi}$
\end{algorithmic}
\end{algorithm}







%% file: checklist.tex
\section*{NeurIPS Paper Checklist}

\begin{enumerate}

\item {\bf Claims}
    \item[] Question: Do the main claims made in the abstract and introduction accurately reflect the paper's contributions and scope?
    \item[] Answer: \answerYes{} 
    \item[] Justification: Our theoretical and empirical results are summarized in the abstract and in the introduction with bullet points.
    \item[] Guidelines:
    \begin{itemize}
        \item The answer \answerNA{} means that the abstract and introduction do not include the claims made in the paper.
        \item The abstract and/or introduction should clearly state the claims made, including the contributions made in the paper and important assumptions and limitations. A \answerNo{} or \answerNA{} answer to this question will not be perceived well by the reviewers. 
        \item The claims made should match theoretical and experimental results, and reflect how much the results can be expected to generalize to other settings. 
        \item It is fine to include aspirational goals as motivation as long as it is clear that these goals are not attained by the paper. 
    \end{itemize}

\item {\bf Limitations}
    \item[] Question: Does the paper discuss the limitations of the work performed by the authors?
    \item[] Answer: \answerYes{} 
    \item[] Justification: Please see Conclusion  last few sentences in Section \ref{sec:conc}. 
    \item[] Guidelines:
    \begin{itemize}
        \item The answer \answerNA{} means that the paper has no limitation while the answer \answerNo{} means that the paper has limitations, but those are not discussed in the paper. 
        \item The authors are encouraged to create a separate ``Limitations'' section in their paper.
        \item The paper should point out any strong assumptions and how robust the results are to violations of these assumptions (e.g., independence assumptions, noiseless settings, model well-specification, asymptotic approximations only holding locally). The authors should reflect on how these assumptions might be violated in practice and what the implications would be.
        \item The authors should reflect on the scope of the claims made, e.g., if the approach was only tested on a few datasets or with a few runs. In general, empirical results often depend on implicit assumptions, which should be articulated.
        \item The authors should reflect on the factors that influence the performance of the approach. For example, a facial recognition algorithm may perform poorly when image resolution is low or images are taken in low lighting. Or a speech-to-text system might not be used reliably to provide closed captions for online lectures because it fails to handle technical jargon.
        \item The authors should discuss the computational efficiency of the proposed algorithms and how they scale with dataset size.
        \item If applicable, the authors should discuss possible limitations of their approach to address problems of privacy and fairness.
        \item While the authors might fear that complete honesty about limitations might be used by reviewers as grounds for rejection, a worse outcome might be that reviewers discover limitations that aren't acknowledged in the paper. The authors should use their best judgment and recognize that individual actions in favor of transparency play an important role in developing norms that preserve the integrity of the community. Reviewers will be specifically instructed to not penalize honesty concerning limitations.
    \end{itemize}

\item {\bf Theory assumptions and proofs}
    \item[] Question: For each theoretical result, does the paper provide the full set of assumptions and a complete (and correct) proof?
    \item[] Answer: \answerYes{} 
    \item[] Justification: We provide proofs of our theorems with supporting lemmas in Appendix \ref{proofs}.
    \item[] Guidelines:
    \begin{itemize}
        \item The answer \answerNA{} means that the paper does not include theoretical results. 
        \item All the theorems, formulas, and proofs in the paper should be numbered and cross-referenced.
        \item All assumptions should be clearly stated or referenced in the statement of any theorems.
        \item The proofs can either appear in the main paper or the supplemental material, but if they appear in the supplemental material, the authors are encouraged to provide a short proof sketch to provide intuition. 
        \item Inversely, any informal proof provided in the core of the paper should be complemented by formal proofs provided in appendix or supplemental material.
        \item Theorems and Lemmas that the proof relies upon should be properly referenced. 
    \end{itemize}

    \item {\bf Experimental result reproducibility}
    \item[] Question: Does the paper fully disclose all the information needed to reproduce the main experimental results of the paper to the extent that it affects the main claims and/or conclusions of the paper (regardless of whether the code and data are provided or not)?
    \item[] Answer: \answerYes{} 
    \item[] Justification: We provide all information about model hyperparameters in Appendix \ref{sec:model_params}, details about proprietary medical dataset in Appendix \ref{sec:abiomed}, OOD test dataset generation in Appendix \ref{sec:ood_data}, and sparse dataset creation details in Appendix \ref{sec:more_data}. We give all methodological details in Section \ref{methodology} in the main paper.
    \item[] Guidelines: 
    \begin{itemize}
        \item The answer \answerNA{} means that the paper does not include experiments.
        \item If the paper includes experiments, a \answerNo{} answer to this question will not be perceived well by the reviewers: Making the paper reproducible is important, regardless of whether the code and data are provided or not.
        \item If the contribution is a dataset and\slash or model, the authors should describe the steps taken to make their results reproducible or verifiable. 
        \item Depending on the contribution, reproducibility can be accomplished in various ways. For example, if the contribution is a novel architecture, describing the architecture fully might suffice, or if the contribution is a specific model and empirical evaluation, it may be necessary to either make it possible for others to replicate the model with the same dataset, or provide access to the model. In general. releasing code and data is often one good way to accomplish this, but reproducibility can also be provided via detailed instructions for how to replicate the results, access to a hosted model (e.g., in the case of a large language model), releasing of a model checkpoint, or other means that are appropriate to the research performed.
        \item While NeurIPS does not require releasing code, the conference does require all submissions to provide some reasonable avenue for reproducibility, which may depend on the nature of the contribution. For example
        \begin{enumerate}
            \item If the contribution is primarily a new algorithm, the paper should make it clear how to reproduce that algorithm.
            \item If the contribution is primarily a new model architecture, the paper should describe the architecture clearly and fully.
            \item If the contribution is a new model (e.g., a large language model), then there should either be a way to access this model for reproducing the results or a way to reproduce the model (e.g., with an open-source dataset or instructions for how to construct the dataset).
            \item We recognize that reproducibility may be tricky in some cases, in which case authors are welcome to describe the particular way they provide for reproducibility. In the case of closed-source models, it may be that access to the model is limited in some way (e.g., to registered users), but it should be possible for other researchers to have some path to reproducing or verifying the results.
        \end{enumerate}
    \end{itemize}

\item {\bf Open access to data and code}
    \item[] Question: Does the paper provide open access to the data and code, with sufficient instructions to faithfully reproduce the main experimental results, as described in supplemental material?
    \item[] Answer: \answerNo{} 
    \item[] Justification: We will release code, sparse D4RL datasets and OOD test data upon an acceptance decision. The real-world medical dataset contains human-subject information and cannot be publicly released.
    \item[] Guidelines: 
    \begin{itemize}
        \item The answer \answerNA{} means that paper does not include experiments requiring code.
        \item Please see the NeurIPS code and data submission guidelines (\url{https://neurips.cc/public/guides/CodeSubmissionPolicy}) for more details.
        \item While we encourage the release of code and data, we understand that this might not be possible, so \answerNo{} is an acceptable answer. Papers cannot be rejected simply for not including code, unless this is central to the contribution (e.g., for a new open-source benchmark).
        \item The instructions should contain the exact command and environment needed to run to reproduce the results. See the NeurIPS code and data submission guidelines (\url{https://neurips.cc/public/guides/CodeSubmissionPolicy}) for more details.
        \item The authors should provide instructions on data access and preparation, including how to access the raw data, preprocessed data, intermediate data, and generated data, etc.
        \item The authors should provide scripts to reproduce all experimental results for the new proposed method and baselines. If only a subset of experiments are reproducible, they should state which ones are omitted from the script and why.
        \item At submission time, to preserve anonymity, the authors should release anonymized versions (if applicable).
        \item Providing as much information as possible in supplemental material (appended to the paper) is recommended, but including URLs to data and code is permitted.
    \end{itemize}

\item {\bf Experimental setting/details}
    \item[] Question: Does the paper specify all the training and test details (e.g., data splits, hyperparameters, how they were chosen, type of optimizer) necessary to understand the results?
    \item[] Answer: \answerYes{} 
    \item[] Justification: Please refer to Appendix  \ref{sec:model_params}.
    \item[] Guidelines:
    \begin{itemize}
        \item The answer \answerNA{} means that the paper does not include experiments.
        \item The experimental setting should be presented in the core of the paper to a level of detail that is necessary to appreciate the results and make sense of them.
        \item The full details can be provided either with the code, in appendix, or as supplemental material.
    \end{itemize}

\item {\bf Experiment statistical significance}
    \item[] Question: Does the paper report error bars suitably and correctly defined or other appropriate information about the statistical significance of the experiments?
    \item[] Answer: \answerYes{} 
    \item[] Justification: We provide the RL results with 1-sigma error bars over 3 random seeds. 
    \item[] Guidelines:
    \begin{itemize}
        \item The answer \answerNA{} means that the paper does not include experiments.
        \item The authors should answer \answerYes{} if the results are accompanied by error bars, confidence intervals, or statistical significance tests, at least for the experiments that support the main claims of the paper.
        \item The factors of variability that the error bars are capturing should be clearly stated (for example, train/test split, initialization, random drawing of some parameter, or overall run with given experimental conditions).
        \item The method for calculating the error bars should be explained (closed form formula, call to a library function, bootstrap, etc.)
        \item The assumptions made should be given (e.g., Normally distributed errors).
        \item It should be clear whether the error bar is the standard deviation or the standard error of the mean.
        \item It is OK to report 1-sigma error bars, but one should state it. The authors should preferably report a 2-sigma error bar than state that they have a 96\% CI, if the hypothesis of Normality of errors is not verified.
        \item For asymmetric distributions, the authors should be careful not to show in tables or figures symmetric error bars that would yield results that are out of range (e.g., negative error rates).
        \item If error bars are reported in tables or plots, the authors should explain in the text how they were calculated and reference the corresponding figures or tables in the text.
    \end{itemize}

\item {\bf Experiments compute resources}
    \item[] Question: For each experiment, does the paper provide sufficient information on the computer resources (type of compute workers, memory, time of execution) needed to reproduce the experiments?
    \item[] Answer: \answerYes{} 
    \item[] Justification: We report compute resources along with training duration in Appendix \ref{sec:model_params}.
    \item[] Guidelines:
    \begin{itemize}
        \item The answer \answerNA{} means that the paper does not include experiments.
        \item The paper should indicate the type of compute workers CPU or GPU, internal cluster, or cloud provider, including relevant memory and storage.
        \item The paper should provide the amount of compute required for each of the individual experimental runs as well as estimate the total compute. 
        \item The paper should disclose whether the full research project required more compute than the experiments reported in the paper (e.g., preliminary or failed experiments that didn't make it into the paper). 
    \end{itemize}
    
\item {\bf Code of ethics}
    \item[] Question: Does the research conducted in the paper conform, in every respect, with the NeurIPS Code of Ethics \url{https://neurips.cc/public/EthicsGuidelines}?
    \item[] Answer: \answerYes{} 
    \item[] Justification: This study used real patient data under IRB approval with no access to personally identifiable information. The proposed methods are \textbf{not} deployed in real-world clinical decision making. We do not introduce or amplify harmful biases to the best of our knowledge, and we report limitations transparently.
    \item[] Guidelines:
    \begin{itemize}
        \item The answer \answerNA{} means that the authors have not reviewed the NeurIPS Code of Ethics.
        \item If the authors answer \answerNo, they should explain the special circumstances that require a deviation from the Code of Ethics.
        \item The authors should make sure to preserve anonymity (e.g., if there is a special consideration due to laws or regulations in their jurisdiction).
    \end{itemize}

\item {\bf Broader impacts}
    \item[] Question: Does the paper discuss both potential positive societal impacts and negative societal impacts of the work performed?
    \item[] Answer: \answerYes{} 
    \item[] Justification: This work contributes to safer offline reinforcement learning by reducing out-of-distribution behavior in sparse and safety-critical settings such as healthcare. More reliable policy optimization under limited data may improve the robustness of clinical decision-support systems and other real-world sequential decision-making applications. However, because learned policies remain dependent on historical data quality and coverage, careful validation and human oversight are necessary before deployment.
    \item[] Guidelines:
    \begin{itemize}
        \item The answer \answerNA{} means that there is no societal impact of the work performed.
        \item If the authors answer \answerNA{} or \answerNo, they should explain why their work has no societal impact or why the paper does not address societal impact.
        \item Examples of negative societal impacts include potential malicious or unintended uses (e.g., disinformation, generating fake profiles, surveillance), fairness considerations (e.g., deployment of technologies that could make decisions that unfairly impact specific groups), privacy considerations, and security considerations.
        \item The conference expects that many papers will be foundational research and not tied to particular applications, let alone deployments. However, if there is a direct path to any negative applications, the authors should point it out. For example, it is legitimate to point out that an improvement in the quality of generative models could be used to generate Deepfakes for disinformation. On the other hand, it is not needed to point out that a generic algorithm for optimizing neural networks could enable people to train models that generate Deepfakes faster.
        \item The authors should consider possible harms that could arise when the technology is being used as intended and functioning correctly, harms that could arise when the technology is being used as intended but gives incorrect results, and harms following from (intentional or unintentional) misuse of the technology.
        \item If there are negative societal impacts, the authors could also discuss possible mitigation strategies (e.g., gated release of models, providing defenses in addition to attacks, mechanisms for monitoring misuse, mechanisms to monitor how a system learns from feedback over time, improving the efficiency and accessibility of ML).
    \end{itemize}
    
\item {\bf Safeguards}
    \item[] Question: Does the paper describe safeguards that have been put in place for responsible release of data or models that have a high risk for misuse (e.g., pre-trained language models, image generators, or scraped datasets)?
    \item[] Answer: \answerNA{} 
    \item[] Justification: There is no high risk of misuse of our models as they are not deployed on real subjects. No sensitive data are made publicly available.
    \item[] Guidelines:
    \begin{itemize}
        \item The answer \answerNA{} means that the paper poses no such risks.
        \item Released models that have a high risk for misuse or dual-use should be released with necessary safeguards to allow for controlled use of the model, for example by requiring that users adhere to usage guidelines or restrictions to access the model or implementing safety filters. 
        \item Datasets that have been scraped from the Internet could pose safety risks. The authors should describe how they avoided releasing unsafe images.
        \item We recognize that providing effective safeguards is challenging, and many papers do not require this, but we encourage authors to take this into account and make a best faith effort.
    \end{itemize}

\item {\bf Licenses for existing assets}
    \item[] Question: Are the creators or original owners of assets (e.g., code, data, models), used in the paper, properly credited and are the license and terms of use explicitly mentioned and properly respected?
    \item[] Answer: \answerYes{} 
    \item[] Justification: We properly cite all existing works used and mentioned in our paper.
    \item[] Guidelines:
    \begin{itemize}
        \item The answer \answerNA{} means that the paper does not use existing assets.
        \item The authors should cite the original paper that produced the code package or dataset.
        \item The authors should state which version of the asset is used and, if possible, include a URL.
        \item The name of the license (e.g., CC-BY 4.0) should be included for each asset.
        \item For scraped data from a particular source (e.g., website), the copyright and terms of service of that source should be provided.
        \item If assets are released, the license, copyright information, and terms of use in the package should be provided. For popular datasets, \url{paperswithcode.com/datasets} has curated licenses for some datasets. Their licensing guide can help determine the license of a dataset.
        \item For existing datasets that are re-packaged, both the original license and the license of the derived asset (if it has changed) should be provided.
        \item If this information is not available online, the authors are encouraged to reach out to the asset's creators.
    \end{itemize}

\item {\bf New assets}
    \item[] Question: Are new assets introduced in the paper well documented and is the documentation provided alongside the assets?
    \item[] Answer: \answerNA{} 
    \item[] Justification: Code will be made public in the final version of the paper. 
    \item[] Guidelines:
    \begin{itemize}
        \item The answer \answerNA{} means that the paper does not release new assets.
        \item Researchers should communicate the details of the dataset\slash code\slash model as part of their submissions via structured templates. This includes details about training, license, limitations, etc. 
        \item The paper should discuss whether and how consent was obtained from people whose asset is used.
        \item At submission time, remember to anonymize your assets (if applicable). You can either create an anonymized URL or include an anonymized zip file.
    \end{itemize}

\item {\bf Crowdsourcing and research with human subjects}
    \item[] Question: For crowdsourcing experiments and research with human subjects, does the paper include the full text of instructions given to participants and screenshots, if applicable, as well as details about compensation (if any)? 
    \item[] Answer: \answerNA{} 
    \item[] Justification: This work does not involve crowdsourcing experiments or human subject studies. Our real-world medical dataset is is anonymized and not personally identifiable.
    \item[] Guidelines:
    \begin{itemize}
        \item The answer \answerNA{} means that the paper does not involve crowdsourcing nor research with human subjects.
        \item Including this information in the supplemental material is fine, but if the main contribution of the paper involves human subjects, then as much detail as possible should be included in the main paper. 
        \item According to the NeurIPS Code of Ethics, workers involved in data collection, curation, or other labor should be paid at least the minimum wage in the country of the data collector. 
    \end{itemize}

\item {\bf Institutional review board (IRB) approvals or equivalent for research with human subjects}
    \item[] Question: Does the paper describe potential risks incurred by study participants, whether such risks were disclosed to the subjects, and whether Institutional Review Board (IRB) approvals (or an equivalent approval/review based on the requirements of your country or institution) were obtained?
    \item[] Answer: \answerYes{} 
    \item[] Justification: This study used real patient data under IRB approval.
    \item[] Guidelines:
    \begin{itemize}
        \item The answer \answerNA{} means that the paper does not involve crowdsourcing nor research with human subjects.
        \item Depending on the country in which research is conducted, IRB approval (or equivalent) may be required for any human subjects research. If you obtained IRB approval, you should clearly state this in the paper. 
        \item We recognize that the procedures for this may vary significantly between institutions and locations, and we expect authors to adhere to the NeurIPS Code of Ethics and the guidelines for their institution. 
        \item For initial submissions, do not include any information that would break anonymity (if applicable), such as the institution conducting the review.
    \end{itemize}

\item {\bf Declaration of LLM usage}
    \item[] Question: Does the paper describe the usage of LLMs if it is an important, original, or non-standard component of the core methods in this research? Note that if the LLM is used only for writing, editing, or formatting purposes and does \emph{not} impact the core methodology, scientific rigor, or originality of the research, declaration is not required.
    \item[] Answer: \answerNA{} 
    \item[] Justification: This paper used LLM tools only for writing, editing, or formatting purposes.
    \item[] Guidelines:
    \begin{itemize}
        \item The answer \answerNA{} means that the core method development in this research does not involve LLMs as any important, original, or non-standard components.
        \item Please refer to our LLM policy in the NeurIPS handbook for what should or should not be described.
    \end{itemize}

\end{enumerate}